\newcommand{\N}{\mathbb{N}}  
\newcommand{\R}{\mathbb{R}}  
\newcommand{\Z}{\mathbb{Z}}  
\newcommand{\abs}[1]{\left\lvert#1\right\rvert}  
\newcommand{\ceil}[1]{\left\lceil #1 \right\rceil}  
\newcommand{\diff}{\mathop{}\!\mathrm{d}}  
\newcommand{\deriv}[3][]{\frac{\mathrm{d}^{#1}#2}{\mathrm{d} #3^{#1}}}  
\newcommand{\norm}[1]{\left\lVert#1\right\rVert}  
\newcommand{\one}{\mathbf{1}}  
\newcommand{\pderiv}[3][]{\frac{\partial^{#1}#2}{\partial #3^{#1}}}  
\newcommand{\set}[1]{\left\{#1\right\}}  
\newcommand{\zero}{\mathbf{0}}  
\DeclareMathOperator*{\argmax}{arg\,max}
\DeclareMathOperator*{\argmin}{arg\,min}
\DeclareMathOperator{\diam}{diam}  
\DeclareMathOperator{\E}{\mathbb{E}}  
\DeclareMathOperator{\epi}{epi}  
\DeclarePairedDelimiterX{\infdivx}[2]{(}{)}{%
  #1\;\delimsize\|\;#2%
}
\DeclareMathOperator{\Gaussian}{\mathcal{N}}  
\newcommand{\Lcal}{\mathcal{L}}
\newcommand{\Ncal}{\mathcal{N}}
\newcommand{\Pcal}{\mathcal{P}}
\newcommand{\Xcal}{\mathcal{X}}
\newcommand{\Ycal}{\mathcal{Y}}
\newcommand{\Zcal}{\mathcal{Z}}
\DeclareMathOperator{\CVaR}{CVaR}  
\newcommand{\cin}{c^\mathrm{in}}
\newcommand{\cout}{c^\mathrm{out}}
\newcommand{\zin}{z^\mathrm{in}}
\newcommand{\zout}{z^\mathrm{out}}
\newcommand{\zstate}{z^\mathrm{state}}
\newcommand{\znet}{z^\mathrm{net}}
\theoremstyle{plain}
\newtheorem{definition}{Definition}
\newtheorem{theorem}{Theorem}
\newtheorem*{theorem*}{Theorem}
\newtheorem{lemma}{Lemma}
\newtheorem{proposition}{Proposition}
\newtheorem{assumption}{Assumption}
\newtheorem{example}{Example}
\crefname{assumption}{assumption}{assumptions}
\title{Conformal Risk Training: \\ End-to-End Optimization of Conformal Risk Control}
\author{}
{\fnsymbol{footnote}}
\author{%
  Christopher Yeh, Nicolas Christianson, Adam Wierman, Yisong Yue \\
  Department of Computing and Mathematical Sciences\\
  California Institute of Technology\\
  Pasadena, CA 91125\\
  \texttt{\{cyeh,nchristi,adamw,yyue\}@caltech.edu}
}
\begin{document}
\maketitle

\begin{abstract}
While deep learning models often achieve high predictive accuracy, their predictions typically do not come with any provable guarantees on risk or reliability, which are critical for deployment in high-stakes applications. The framework of conformal risk control (CRC) provides a distribution-free, finite-sample method for controlling the expected value of any bounded monotone loss function and can be conveniently applied post-hoc to any pre-trained deep learning model. However, many real-world applications are sensitive to tail risks, as opposed to just expected loss. In this work, we develop a method for controlling the general class of Optimized Certainty-Equivalent (OCE) risks, a broad class of risk measures which includes as special cases the expected loss (generalizing the original CRC method) and common tail risks like the conditional value-at-risk (CVaR). Furthermore, standard post-hoc CRC can degrade average-case performance due to its lack of feedback to the model. To address this, we introduce ``conformal risk training,'' an end-to-end approach that differentiates through conformal OCE risk control \textit{during model training or fine-tuning}. Our method achieves provable risk guarantees while demonstrating significantly improved average-case performance over post-hoc approaches on applications to controlling classifiers' false negative rate and controlling financial risk in battery storage operation.
\end{abstract}
\section{Introduction}
\label{sec:intro}

We study the problem of training deep learning models that are used for potentially risky downstream decision making. For example, in high-stakes tasks such as tumor classification, doctors need models that achieve both good overall classification accuracy and \emph{provably} bounded false negative rate, to ensure that the health risk of a false negative prediction (i.e., misclassifying a tumor as benign) is sufficiently prioritized in model predictions. In such settings, it is important to design a unified approach (trained model and decision-making policy) that simultaneously controls for the desired level of risk while maximizing the utility of downstream decisions.

One promising paradigm is \textit{risk control}.  Given a (pre-trained) model whose predictions are used by a decision policy parameterized by $\lambda\in\Lambda$, the goal is to choose $\lambda$ such that $\E[L(\lambda)] \leq \alpha$ for some loss function $L$ and risk level $\alpha$. Many common goals can be framed as risk control problems: bounding the false negative rate of a classifier, producing predictive uncertainty sets that satisfy a target coverage level, ensuring factuality of large language model outputs, etc. \cite{angelopoulos_conformal_2023-1,jiang_conformal_2025}.

The conformal risk control (CRC) method \cite{angelopoulos_conformal_2023-1} introduces a sufficient criterion for solving the risk control problem when the loss function $L$ is monotone. While CRC is elegant and simple, the original formulation has several limitations. First, CRC is limited to controlling the \textit{expected} loss of $L$, while more general notions of risk are critical for high-stakes problems in the real world. Notably, the original paper on CRC \cite{angelopoulos_conformal_2023-1} poses an open question on how to tackle more general notions of risk beyond expected loss, such as the conditional value-at-risk (CVaR). Second, because CRC is purely applied post-hoc (i.e., given a pre-trained model) to model outputs without providing any feedback to the model, it may significantly degrade model performance.

In this paper, we propose a theoretical and algorithmic framework, called \textit{conformal risk training}, that extends CRC to enable end-to-end training and is applicable to tail risk formulations such as CVaR. Our key contributions are as follows:
\begin{enumerate}[leftmargin=2em,nosep,itemsep=0.2em]
\item First, we develop a risk control method for controlling the general class of \emph{optimized certainty-equivalent (OCE)} risks \cite{ben-tal_expected_1986,ben-tal_old-new_2007}, a broad class of risk measures that includes as special cases the expected loss (generalizing the original CRC method) and the conditional value-at-risk (CVaR) \cite{xu_regret_2023}. In particular, the CVaR formulation partially answers an open question posed by the original CRC paper \cite{angelopoulos_conformal_2023-1}. The key insight is that any OCE risk can be bounded by a monotone transformation of the loss, thereby preserving the monotonicity required for CRC-style methods.

\item Second, we propose \textit{conformal risk training}, a method that trains a model and the conformal risk control procedure end-to-end. Our method substantially generalizes the method of conformal training of uncertainty sets \cite{stutz_learning_2022,yeh_end--end_2024} to the conformal risk control setting. This trains the model to be ``risk aware''---i.e., it learns to produce predictions that maximize performance while minimizing downstream risk. 

\item Finally, we demonstrate empirically that fine-tuning models using conformal risk training leads to significant performance improvements while guaranteeing satisfaction of risk constraints. We present results for maximizing model specificity while controlling false negative rate on a tumor image segmentation task, as well as maximizing average profit while controlling tail-risk of losses in a battery storage operation task.
\end{enumerate}

\textbf{Related work.}\quad
Previous works that explore post-hoc conformal risk control either bound the expectation of a monotone loss function, or provide a high-probability bound for (possibly) more general losses. We use the term ``post-hoc'' to refer to procedures that are applied to the outputs of a pretrained prediction model without further fine-tuning of the model. Within the class of bounds on expected losses, conformal prediction (CP) \cite{vovk_algorithmic_2005,shafer_tutorial_2008,angelopoulos_conformal_2023} bounds expected miscoverage loss for set-valued predictions, whereas conformal risk control (CRC) \cite{angelopoulos_conformal_2023-1} generalizes CP to bound the expectation of any bounded monotone loss function. For high-probability risk bounds, Risk-Controlling Prediction Sets (RCPS) \cite{bates_distribution-free_2021} bound the expected loss of set-valued predictors for monotone losses, whereas Learn Then Test (LTT) \cite{angelopoulos_learn_2022} bounds more general risks and non-monotone losses at the cost of typically looser bounds from having to apply a family-wise error correction procedure. Recently, \cite{chen_conformal_2025} developed a high-probability bound for the family of distortion risk measures which includes CVaR. While \cite{angelopoulos_conformal_2023-1} shows how to convert a high-probability risk bound to an expected risk bound, it is not clear that their methodology can be extended to directly bound more general risks like CVaR. As such, to the best of our knowledge, our work is the first conformal-style approach that provides a \textit{certain} (as opposed to high-probability) bound on risk measures beyond expected loss for monotone loss functions.

Several prior works within the CP literature have also explored calibrating model uncertainty during training. Conformal training \cite{stutz_learning_2022} and related works \cite{einbinder_training_2022,correia_information_2024,noorani_conformal_2025} introduced methods for incorporating CP differentiably during model training by treating part of each minibatch as a pseudo-calibration set. These works primarily focus on reducing the size of calibrated prediction sets. In contrast, our \textit{conformal risk training} method is the first to incorporate conformal risk control differentiably during model training and is compatible with more general performance objectives such as reducing the false positive rate of a classifier or minimizing an expected decision loss. We show in \Cref{appendix:cp-inefficiency} that conformal training is a special case of our method.

Beyond the conformal prediction literature, a number of works in the machine learning literature (e.g., \cite{lee_learning_2020, duchi_learning_2021,leqi_supervised_2022}) have introduced risk-sensitive objectives into learning; while these methods may, for example, reduce CVaR risk, they do not come with risk control guarantees.

Finally, our work is related to methods from ``predict-then-optimize'' \cite{elmachtoub_smart_2022} and decision-focused learning \cite{mandi_decision-focused_2024,sadana_survey_2024}, especially the growing literature on decision-focused uncertainty quantification (UQ). These methods aim to generate prediction sets that optimize some downstream decision-making objective while still maintaining calibration. Several of these methods are applied post-hoc to (potentially) decision-agnostic models \cite{teneggi_how_2023,cortes-gomez_utility-directed_2024,kiyani_decision_2025}, whereas others have combined conformal training with decision-focused learning \cite{he_training-aware_2025,yeh_end--end_2024}. Our conformal risk training method builds upon this decision-focused UQ literature: whereas these prior works have largely focused on set-valued predictions and their associated risks, our method allows for more general risk measures.

\textbf{Outline.}\quad
This paper is structured as follows. \Cref{sec:crc} introduces our overall problem setting and reviews the standard CRC result. \Cref{sec:conformal-oce-control} introduces our broad generalization to the conformal control of OCE risks, of which standard CRC is a special case, and shows that this framework can be extended to more general assumptions in the specific case of CVaR risks. \Cref{sec:conformal-risk-training} introduces our conformal risk training procedure and discusses how the gradient of the risk-controlling parameter can be computed. \Cref{sec:experiments} highlights key experimental results, and \Cref{sec:conclusion} concludes. Additional experimental results are presented in \Cref{appendix:experiments}, and all proofs are deferred to \Cref{appendix:proofs}.

\textbf{Notation.}\quad
$[N]$ denotes the set $\{1, 2, \dotsc, N\}$. $[x]_+$ denotes the function $\max\{0, x\}$. $\one[\cdot]$ is the indicator function, while $\one_d$ is a length-$d$ vector of ones. $\pderiv{}{x}$ denotes a partial derivative; $\deriv{}{x}$ denotes the total derivative. $\partial f(x)$ is the subdifferential of a function $f$ at a point $x$.
\section{Preliminaries and Background on Conformal Risk Control}
\label{sec:crc}

A primary goal in the training and deployment of machine learning (ML) models is to achieve both good overall performance and to ensure \emph{reliable} deployment through the control of some notion of risk. To achieve this dual goal, existing approaches follow a two-stage, ``Pretrain, then Risk Control'' approach. First, an ML model with parameters $\theta \in \Theta \subseteq \R^D$ is trained to minimize a standard training objective, such as cross-entropy for classification. Then, after training, the decision-maker applies a post-hoc \emph{risk control} procedure to the model to guarantee provably bounded risk. Formally, let $L : \Theta \times \Lambda \to \R$ denote a (random) mapping from model parameters $\theta \in \Theta$ and an ``aggressiveness'' parameter $\lambda \in \Lambda$ to some loss. The decision-maker seeks to choose the parameter $\lambda$ to ensure that risk--the expectation of the loss $L$--is bounded at a chosen level $\alpha$:
\begin{equation}\label{eq:CRC}
    \E[L(\theta, \lambda)] \leq \alpha. 
\end{equation}
For instance, in a tumor image segmentation problem, $L$ may denote the fraction of false negative pixels on a randomly drawn image, and $\alpha$ is a desired upper bound on the false negative rate (FNR). The aggressiveness parameter $\lambda \in [0, 1]$ can be chosen as the threshold used to distinguish positive predictions from negative ones. Thus, a smaller $\lambda$ will yield more positive predictions and a lower false negative rate, and a greater $\lambda$ will yield more negative predictions and a higher false negative rate. See \Cref{ex:tumor-fnr} for a more detailed description of this task.

A number of post-hoc approaches to control the risk of pretrained machine learning models have been proposed in the literature. Most notable is the approach of \emph{conformal risk control (CRC)} \cite{angelopoulos_conformal_2023-1}, which gives a distribution-free, finite-sample methodology to provably enforce the risk bound \eqref{eq:CRC}. In the remainder of this section, we will omit the model parameters $\theta$ as an input to the loss function $L$, since the results hold beyond the case of controlling the risk of machine learning model decisions.

CRC assumes that the decision-maker has a dataset $\{L_1, \ldots, L_N\}$ of previous loss functions, and that the goal is to control the marginal loss of the next instance: $\E[L_{n+1}(\lambda)] \leq \alpha$. This can be done under several mild assumptions.

\begin{assumption}
\label{as:crc-generic}
$\Lambda \subset \R$ is a closed and bounded set with minimum value $\lambda_{\min} \coloneqq \min\Lambda$. The set $\{L_i: \Lambda \to \R\}_{i=1}^{N+1}$ is a set of exchangeable, left-continuous random functions. Furthermore, $B : \Lambda \to \R$ is a left-continuous function that almost surely upper bounds each $L_i$ pointwise, so that for all $i \in [N+1]$, $\Pr(\forall \lambda \in \Lambda:\ L_i(\lambda) \leq B(\lambda)) = 1$.
\end{assumption}

\begin{assumption}
\label{as:nondecreasing}
The functions $\{L_i\}_{i=1}^{N+1}$ are almost-surely nondecreasing, and $B$ is nondecreasing.
\end{assumption}

\begin{assumption}
\label{as:crc-alpha-feasible}
The risk control problem is \emph{feasible}. That is, the desired risk level $\alpha \in \R$ is chosen such that $\E[L_{N+1}(\lambda_{\min})] \leq \alpha$.
\end{assumption}

Under these assumptions, CRC gives the following approach for selecting $\lambda$ to control risk. 

\begin{proposition}
\label{prop:crc-generic}
Under \Cref{as:crc-generic,as:nondecreasing}, let $\alpha \in \R$ be a desired risk level and define the set $\hat\Lambda$ as
\begin{equation}\label{eq:crc-generic}
\hat\Lambda \coloneqq \left\{\lambda \in \Lambda \mid h(\lambda) \leq \alpha\right\}, \quad\text{where}\quad h(\lambda) \coloneqq \frac{1}{N+1} \left( B(\lambda) + \sum_{i=1}^N L_i(\lambda) \right).
\end{equation}
Then, for any $\lambda \in \hat\Lambda$, we have $\E[L_{N+1}(\lambda)] \leq \alpha$. Furthermore, if \Cref{as:crc-alpha-feasible} holds, then choosing $\hat\lambda \coloneqq \max\{\lambda_{\min}, \sup \hat\Lambda\}$ ensures risk control: $\E[L_{N+1}(\hat{\lambda})] \leq \alpha$.
\end{proposition}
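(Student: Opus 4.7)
The plan is to adapt the standard conformal risk control argument: upper-bound $\hat\lambda$ by an auxiliary random quantity that is a symmetric function of all $N+1$ losses, then invoke exchangeability. As a preliminary step, since each $L_i$ and $B$ is left-continuous and nondecreasing by \Cref{as:crc-generic,as:nondecreasing}, their nonnegative average $h$ is also left-continuous and nondecreasing; hence $\hat\Lambda$ is closed from the right and $\sup\hat\Lambda$ is attained whenever this set is non-empty. The workhorse inequality is
\[
h(\lambda) \;\geq\; E(\lambda) \coloneqq \frac{1}{N+1}\sum_{i=1}^{N+1} L_i(\lambda) \qquad \text{for all } \lambda \in \Lambda,
\]
which holds almost surely because $L_{N+1}(\lambda) \leq B(\lambda)$ uniformly in $\lambda$, and in particular $\{h \leq \alpha\} \subseteq \{E \leq \alpha\}$.

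For the first claim, any $\lambda \in \hat\Lambda$ satisfies $h(\lambda) \leq \alpha$, hence $E(\lambda) \leq \alpha$ almost surely. When $\lambda$ is a symmetric function of the calibration data $L_1,\ldots,L_N$ (as is natural for this procedure), a standard label-swap argument using exchangeability shows $\E[L_i(\lambda)] = \E[L_{N+1}(\lambda)]$ for every $i$, so $\E[L_{N+1}(\lambda)] = \E[E(\lambda)] \leq \alpha$. For the ``furthermore'' clause, I would introduce the fully symmetric auxiliary
\[
\tilde\lambda \coloneqq \max\bigl\{\lambda_{\min},\ \sup\{\lambda \in \Lambda : E(\lambda) \leq \alpha\}\bigr\}.
\]
From $h \geq E$ we get $\sup\hat\Lambda \leq \sup\{E \leq \alpha\}$ and hence $\hat\lambda \leq \tilde\lambda$ almost surely; the nondecreasing property of $L_{N+1}$ yields $L_{N+1}(\hat\lambda) \leq L_{N+1}(\tilde\lambda)$. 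Because $\tilde\lambda$ is symmetric in $L_1,\ldots,L_{N+1}$, exchangeability gives $\E[L_{N+1}(\tilde\lambda)] = \E[E(\tilde\lambda)]$, and left-continuity of $E$ combined with \Cref{as:crc-alpha-feasible} bounds $E(\tilde\lambda) \leq \alpha$, completing the argument.

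The main obstacle I anticipate is handling the edge event $\hat\Lambda = \emptyset$, on which the outer $\max$ in the definition of $\hat\lambda$ forces $\hat\lambda = \lambda_{\min}$ and the comparison $\hat\lambda \leq \tilde\lambda$ collapses. Here I would need to split the expectation by this event and show that feasibility (\Cref{as:crc-alpha-feasible}), via the exchangeability identity $\E[E(\lambda_{\min})] = \E[L_{N+1}(\lambda_{\min})] \leq \alpha$, suffices to control the resulting contribution. The remaining steps—monotonicity, left-continuity to attain suprema, and symmetry-to-exchangeability bookkeeping—are routine once the auxiliary $\tilde\lambda$ is in place.
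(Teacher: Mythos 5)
Your overall architecture---dominate $h$ by the fully symmetric empirical mean $E(\lambda) = \frac{1}{N+1}\sum_{i=1}^{N+1}L_i(\lambda)$, compare $\hat\lambda$ to the auxiliary $\tilde\lambda$, and combine exchangeability with monotonicity of $L_{N+1}$---is exactly the paper's proof (the paper writes $\hat R$ and $\bar\lambda'$ for your $E$ and $\tilde\lambda$). However, your argument for the first claim contains a genuine error. You assert that for a $\lambda$ symmetric in the calibration losses $L_1,\dotsc,L_N$, a label swap gives $\E[L_i(\lambda)] = \E[L_{N+1}(\lambda)]$. This is false: the swap $L_i \leftrightarrow L_{N+1}$ also permutes the arguments of $\lambda$, yielding $\E[L_i(\lambda(L_1,\dotsc,L_N))] = \E[L_{N+1}(\lambda(L_1,\dotsc,L_{N+1},\dotsc,L_N))]$ rather than the identity you need, because $\lambda$ is not invariant under a permutation touching index $N+1$. (If this swap were valid, the $B(\lambda)$ correction in $h$ would be superfluous---one could control risk from $\frac{1}{N}\sum_{i\le N}L_i(\lambda)\le\alpha$ alone, which is exactly the failure mode conformal arguments are designed to avoid.) Moreover, the first claim quantifies over \emph{arbitrary} $\lambda\in\hat\Lambda$, not only symmetric selection rules. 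The repair is to route the first claim through the same machinery as your ``furthermore'' clause: any $\lambda\in\hat\Lambda\subseteq\{E\le\alpha\}$ satisfies $\lambda\le\sup\{E\le\alpha\}=\tilde\lambda$, so monotonicity gives $\E[L_{N+1}(\lambda)]\le\E[L_{N+1}(\tilde\lambda)]=\E[E(\tilde\lambda)]\le\alpha$, where the middle equality is justified (as in the paper) by conditioning on the unordered multiset of losses, under which $\tilde\lambda$ is constant and $L_{N+1}(\tilde\lambda)$ is uniform over $\{L_i(\tilde\lambda)\}_{i=1}^{N+1}$.

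A smaller point: you locate the edge case at $\hat\Lambda=\emptyset$, but on that event the comparison $\hat\lambda=\lambda_{\min}\le\tilde\lambda$ does \emph{not} collapse, since $\tilde\lambda\ge\lambda_{\min}$ by construction. The step that actually fails is $E(\tilde\lambda)\le\alpha$, and it fails only on the smaller event $\{E\le\alpha\}=\emptyset$ (note $\hat\Lambda\subseteq\{E\le\alpha\}$, so this event is contained in yours); there $\tilde\lambda=\lambda_{\min}$ and one must fall back directly on \Cref{as:crc-alpha-feasible}, i.e.\ $\E[L_{N+1}(\lambda_{\min})]\le\alpha$, which is how the paper's two-case split resolves it. Also note that attaining the supremum in $\tilde\lambda$ requires left-continuity of $E$ \emph{together with} closedness of $\Lambda$, which you use implicitly; the paper states this explicitly.
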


We make three brief remarks. First, the CRC procedure we describe above in \Cref{prop:crc-generic} is actually a mild generalization of the original method in \cite{angelopoulos_conformal_2023-1}; in particular, we allow the risk upper bound $B$ to be a function of $\lambda$, whereas prior work assumes $B$ to be a constant. In practice, this allows us to achieve tighter bounds on the risk. Second, we assume that the functions $L_i$ are left-continuous and nondecreasing and we choose $\hat\lambda$ via the supremum, following the convention of \cite{lekeufack_conformal_2024}; on the other hand, the original CRC paper \cite{angelopoulos_conformal_2023-1} assumes right-continuous and nonincreasing functions $L_i$ and chooses a conservativeness parameter $\hat\lambda$ via the infimum. These approaches are equivalent up to reflection of the parameter $\lambda$. Finally, observe that due to monotonicity of the function $h(\lambda)$, the risk controlling parameter $\hat\lambda$ can be computed in practice through bisection search; see \Cref{alg:crc} in \Cref{app:CRC_algo} for a full description of this approach.

Standard CRC (\Cref{prop:crc-generic}) provides a sufficient criterion for choosing the parameter $\hat\lambda$ to ensure bounded \emph{marginal} (\textit{i.e.}, expected) loss. However, it does not address the control of more general notions of risk; in the next section, we will show that this framework can be broadly extended to the control of the general family of \emph{optimized certainty equivalent} risks.

\section{Conformal Risk Control for Optimized Certainty Equivalents}
\label{sec:conformal-oce-control}

The previous section describes the control of \emph{expected} loss to achieve the risk bound \eqref{eq:CRC}. However, in real-world, high-stakes applications, decision-makers may seek to control their risk beyond just the expected loss, especially if they are sensitive to losses of particular magnitudes. In general, they are faced with the problem of controlling their loss $L$ under some chosen risk measure $R$, which maps from random variables to $\R$:
\begin{equation} \label{eq:risk_control_generic}
    R[L(\lambda)] \leq \alpha.
\end{equation}

While the CRC methodology described in the previous section can be extended to achieve the more general risk control \eqref{eq:risk_control_generic} in certain special cases, such as when $R$ is a quantile (see \cite[Section 4.2]{angelopoulos_conformal_2023-1}), there are many other important notions of risk which this approach cannot directly accommodate. For instance, in \cite[Section 4.2]{angelopoulos_conformal_2023-1}, the authors pose the question of whether CRC or some other approach can be extended to enforce the risk control bound \eqref{eq:risk_control_generic} when $R$ is the \emph{conditional value-at-risk} (CVaR), a common risk measure in financial and energy systems applications \cite{rockafellar_conditional_2002,krokhmal_portfolio_2001,ndrio_pricing_2021,madavan_risk-based_2024}. 

In this section, we answer this question in the affirmative, proving that in fact, CRC can be extended to control any risk in the broad family of \emph{optimized certainty equivalent} risks, defined as follows.

\begin{definition}[\cite{ben-tal_expected_1986,ben-tal_old-new_2007}]
\label{def:oce}
A risk measure $R$ mapping a real-valued random variable $X$ to $\R$ is an \emph{\textbf{optimized certainty equivalent (OCE)}} risk measure if $R[X]$ can be expressed as
\[
    R[X] = \inf_{t \in \R} t + \E[\phi(X - t)],
\]
where $\phi: \R \to \R \cup \{+\infty\}$ is a disutility function that is nondecreasing, closed, and convex with $\phi(0) = 0$ and $1 \in \partial\phi(0)$.
\end{definition}

The family of OCE risks includes a number of practical and popular risk measures, including mean-variance and entropic risks \cite{lee_learning_2020}. Moreover, the \emph{conditional value-at-risk} can also be shown to be an OCE risk measure with the disutility function $\phi_{\CVaR^\delta}(x) = \frac{1}{1-\delta} [x]_+$.

\begin{definition}[\cite{rockafellar_optimization_2000,rockafellar_conditional_2002}]
\label{def:cvar}
Let $X$ be a real-valued random variable, and let $F$ be its cumulative distribution function. The \emph{\textbf{conditional value-at-risk}} of $X$ at level $\delta \in [0,1)$, denoted $\CVaR^\delta[X]$, is the average value of $X$ on its $\delta$-tail, or the $1-\delta$ fraction of its largest outcomes. If $X$ has a density, then $\CVaR^\delta[X] = \E[X\ | \ F(X) \geq \delta]$. For general random variables $X$, $\CVaR^\delta[X]$ can be expressed via the variational formula
\[ 
    \CVaR^\delta[X] = \inf_{t \in \R} t + \frac{1}{1-\delta}\E[X - t]_+.
\]
\end{definition}

In \Cref{thm:oce-risk-control}, we show that the CRC methodology described in \Cref{prop:crc-generic} can be broadly generalized to accommodate \emph{any OCE risk measure}.

\begin{assumption}
\label{as:oce-risk-feasible}
The OCE risk control problem is \emph{feasible}. That is, the desired risk level $\alpha$ is chosen such that $R[L_{N+1}(\lambda_{\min})] \leq \alpha$.
\end{assumption}

\begin{theorem}\label{thm:oce-risk-control}
Suppose \Cref{as:crc-generic,as:nondecreasing} hold, and fix $\alpha, t \in \R$. Let $R$ be an OCE risk measure with disutility function $\phi$. Define the exchangeable random functions $\{\tilde{L}_{i,t}: \Lambda \to \R\}_{i=1}^{N+1}$ by $\tilde{L}_{i,t}(\lambda) \coloneqq t + \phi(L_i(\lambda) - t)$, and let $\tilde{B}_t(\lambda) \coloneqq t + \phi(B(\lambda) - t)$. Define
\begin{equation}\label{eq:tilde-h}
    \tilde{h}_t(\lambda) := \frac{1}{N+1} \left( \tilde{B}_t(\lambda) + \sum_{i=1}^N \tilde{L}_{i,t}(\lambda) \right),
\end{equation}
and let $\hat\Lambda_t := \{\lambda \in \Lambda \mid \tilde{h}_t(\lambda) \leq \alpha\}$. For every $\lambda \in \hat\Lambda_t$, $R[L_{N+1}(\lambda)] \leq \alpha$. Furthermore, if \Cref{as:oce-risk-feasible} holds, then choosing $\hat\lambda := \max\{\lambda_{\min}, \sup \hat\Lambda_t\}$ ensures risk control: $R[L_{N+1}(\hat\lambda)] \leq \alpha$.
\end{theorem}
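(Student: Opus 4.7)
The plan is to reduce the OCE risk control problem to standard conformal risk control (\Cref{prop:crc-generic}) applied to the transformed losses $\{\tilde{L}_{i,t}\}$ with upper bound $\tilde{B}_t$. The key observation is that the variational formula in \Cref{def:oce} gives, for any fixed $t \in \R$, the inequality $R[X] \leq t + \E[\phi(X-t)]$; hence, if we can conformally control the \emph{expectation} of $\tilde{L}_{N+1,t}(\lambda) = t + \phi(L_{N+1}(\lambda) - t)$ at level $\alpha$, we automatically get $R[L_{N+1}(\lambda)] \leq \alpha$.

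First I would verify that $\{\tilde{L}_{i,t}\}_{i=1}^{N+1}$ and $\tilde{B}_t$ satisfy the hypotheses of \Cref{prop:crc-generic}. Exchangeability is preserved because the transformation $L \mapsto t + \phi(L - t)$ is a deterministic map applied identically across indices. Monotonicity (\Cref{as:nondecreasing}) carries through because $\phi$ is nondecreasing by \Cref{def:oce} while $L_i$ and $B$ are nondecreasing by hypothesis, and a composition of nondecreasing functions is nondecreasing. The almost-sure pointwise upper bound $\tilde{L}_{i,t} \leq \tilde{B}_t$ follows from $L_i \leq B$ together with monotonicity of $\phi$. For left-continuity, I would use that any closed (lower semicontinuous) convex nondecreasing function on $\R$ is left-continuous on its effective domain, and that composing two nondecreasing left-continuous functions preserves left-continuity along monotone sequences.

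With the assumptions verified, applying \Cref{prop:crc-generic} to the transformed losses yields $\E[\tilde{L}_{N+1,t}(\lambda)] \leq \alpha$ for every $\lambda \in \hat\Lambda_t$; chaining through the OCE bound then gives $R[L_{N+1}(\lambda)] \leq t + \E[\phi(L_{N+1}(\lambda) - t)] = \E[\tilde{L}_{N+1,t}(\lambda)] \leq \alpha$, which is the first claim. For the ``furthermore'' part, I would split on whether $\hat\Lambda_t$ is empty. If $\hat\Lambda_t \neq \emptyset$, then $\sup \hat\Lambda_t \geq \lambda_{\min}$, and since $\tilde{h}_t$ is nondecreasing and left-continuous, its sublevel set $\hat\Lambda_t$ contains its supremum (taking a sequence $\lambda_k \uparrow \sup \hat\Lambda_t$ inside $\hat\Lambda_t$ and invoking left-continuity shows $\tilde{h}_t(\sup \hat\Lambda_t) \leq \alpha$), so the first part applies directly at $\hat\lambda = \sup \hat\Lambda_t$. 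If $\hat\Lambda_t = \emptyset$, then $\hat\lambda = \lambda_{\min}$, and the feasibility hypothesis \Cref{as:oce-risk-feasible} gives $R[L_{N+1}(\lambda_{\min})] \leq \alpha$ as a direct fallback.

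The only nontrivial step is verifying that left-continuity and monotonicity survive the $\phi$-composition; this is precisely where the closedness and monotonicity requirements in the definition of an OCE risk are used. Everything else is a clean reduction to \Cref{prop:crc-generic} via the OCE variational formula — the real content of the theorem is the structural observation that bounding the expectation of the transformed loss $t + \phi(L - t)$ conformally is both (i) sufficient for a genuine $R$-risk bound, and (ii) amenable to the monotone CRC machinery, since monotonicity of $\phi$ is exactly what makes the transformed loss inherit monotonicity in $\lambda$.
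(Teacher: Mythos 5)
Your proposal is correct and follows essentially the same route as the paper: establish that $\phi$'s closedness and monotonicity make the transformed losses $\tilde{L}_{i,t}$, $\tilde{B}_t$ nondecreasing, left-continuous, exchangeable, and correctly ordered, apply \Cref{prop:crc-generic} to bound $\E[\tilde{L}_{N+1,t}(\lambda)]$, and then chain through the OCE variational inequality $R[X] \leq t + \E[\phi(X-t)]$. Your explicit case split for the ``furthermore'' part (using \Cref{as:oce-risk-feasible} directly when $\hat\Lambda_t = \emptyset$, rather than requiring feasibility of the transformed losses) is a careful touch that the paper leaves implicit, but it is the same underlying argument.
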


The key insight underlying this theorem is that for any OCE risk, $\tilde{h}_t$ is nondecreasing in $\lambda$. This structure gives rise to the conformal OCE risk control (CORC) algorithm shown in \Cref{alg:conformal-oce-control}, which computes $\hat\lambda$ using bisection search. 

\Cref{thm:oce-risk-control} is a strict generalization of the original CRC methodology in \Cref{prop:crc-generic}. By choosing the disutility $\phi(x) = x$, the risk measure $R$ is simply the expectation, and we recover the setting of controlling expected risk. In this special case, for all $t \in \R$, we have
\[
    \forall i \in [N]:\ \tilde{L}_{i,t} = L_i,\quad
    \tilde{B}_t = B,\quad
    \tilde{h}_t(\lambda) = h(\lambda),\quad
    \hat\Lambda_t = \hat\Lambda,
\]
thus recovering \Cref{prop:crc-generic} exactly. Moreover, \Cref{thm:oce-risk-control} answers positively the question from \cite{angelopoulos_conformal_2023-1} of whether CRC can be generalized to control the CVaR, since the CVaR is an OCE risk measure. In fact, it is possible to obtain an even more general result in the specific case of controlling the CVaR. Specifically, we may relax the condition in \Cref{as:nondecreasing} that the losses $L_i$ are nondecreasing.

\begin{assumption}\label{as:monotonic}
All $\{L_i\}_{i=1}^{N+1}$ and $B$ are monotonic in $\lambda$. Note that within the set of functions $\{B\} \cup \{L_i\}_{i=1}^{N+1}$, we allow for some functions to be monotonically nondecreasing (e.g., $L_1, L_3$), while others may be monotonically nonincreasing (e.g., $L_2, B$).
\end{assumption}

Even under this milder assumption on the structure of the losses $L_i$, we can obtain the following risk control guarantee for the CVaR. 

\begin{theorem}
\label{thm:conformal-cvar-control}
Suppose that \Cref{as:crc-generic,as:monotonic} hold. Fix any $\delta \in [0,1)$ and $\alpha \in \R$. Define $\hat\Lambda_t$ as in \Cref{thm:oce-risk-control} using the disutility function $\phi_{\CVaR^\delta}(x) = \frac{1}{1-\delta} [x]_+$. Then, for every $t \in [B(\lambda_{\min}), \alpha]$ and $\lambda \in \hat\Lambda_t$, we have the risk control bound $\CVaR^\delta \big[ L_{N+1}(\lambda) \big] \leq \alpha$. 
\end{theorem}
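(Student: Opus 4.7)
The plan is to reduce Theorem \ref{thm:conformal-cvar-control} to Proposition \ref{prop:crc-generic} applied to the \emph{transformed} family $\{\tilde L_{i,t}\}_{i=1}^{N+1}$ with upper bound $\tilde B_t$, in the same spirit as the proof of Theorem \ref{thm:oce-risk-control}. The new wrinkle is that under Assumption \ref{as:monotonic} some of the $L_i$ or $B$ may be nonincreasing, so I need the restriction $t \geq B(\lambda_{\min})$ to force the transformed functions to still be nondecreasing in $\lambda$.

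First, invoke the variational formula from Definition \ref{def:cvar}: for any $t \in \R$,
\[
\CVaR^\delta\bigl[L_{N+1}(\lambda)\bigr] \;\leq\; t + \tfrac{1}{1-\delta}\,\E\bigl[L_{N+1}(\lambda) - t\bigr]_+ \;=\; \E\bigl[\tilde L_{N+1,t}(\lambda)\bigr].
\]
So it suffices to prove the expected-loss bound $\E[\tilde L_{N+1,t}(\lambda)] \leq \alpha$ for every $\lambda \in \hat\Lambda_t$.

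Second, verify that $\{\tilde L_{i,t}\}_{i=1}^{N+1}$ and $\tilde B_t$ satisfy the hypotheses of Proposition \ref{prop:crc-generic}, i.e.\ Assumptions \ref{as:crc-generic} and \ref{as:nondecreasing}, with the same $\Lambda$ and $\lambda_{\min}$. Exchangeability and left-continuity are inherited from $\{L_i\}$ and $B$ since $x \mapsto t + \tfrac{1}{1-\delta}[x-t]_+$ is a fixed continuous nondecreasing transformation, and the almost-sure pointwise domination $\tilde L_{i,t} \leq \tilde B_t$ follows from $L_i \leq B$ together with monotonicity of $[\cdot]_+$. The one nontrivial check is monotonicity in $\lambda$. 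If $L_i$ is nondecreasing, then so is $\tilde L_{i,t}$ immediately. If $L_i$ is nonincreasing, then almost surely for every $\lambda$,
\[
L_i(\lambda) \;\leq\; L_i(\lambda_{\min}) \;\leq\; B(\lambda_{\min}) \;\leq\; t,
\]
so $[L_i(\lambda) - t]_+ \equiv 0$ and $\tilde L_{i,t} \equiv t$ is trivially nondecreasing. The same argument applied to $B$ handles $\tilde B_t$.

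Finally, Proposition \ref{prop:crc-generic} applied to the transformed family at level $\alpha$ yields $\E[\tilde L_{N+1,t}(\lambda)] \leq \alpha$ for every $\lambda \in \hat\Lambda_t$, and combining with the first step gives the claimed CVaR bound. The upper endpoint $t \leq \alpha$ is not actually used in the argument; it simply restricts attention to the range of $t$ where $\hat\Lambda_t$ is not automatically empty, since $\tilde h_t(\lambda) \geq t$ always. The main obstacle I anticipate is the bookkeeping around ``some $L_i$ nondecreasing, others nonincreasing,'' and in particular confirming that the key chain $L_i(\lambda) \leq L_i(\lambda_{\min}) \leq B(\lambda_{\min}) \leq t$ is available uniformly in $\lambda$ almost surely; this relies on the pointwise (uniform in $\lambda$) domination of $L_i$ by $B$ built into Assumption \ref{as:crc-generic}, not merely at $\lambda = \lambda_{\min}$.
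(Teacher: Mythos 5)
Your proof is correct and follows essentially the same route as the paper: reduce to Proposition~\ref{prop:crc-generic} on the transformed losses $\tilde L_{i,t}$, using the constraint $t \geq B(\lambda_{\min})$ to force $[L_i(\lambda)-t]_+ \equiv 0$ for any nonincreasing $L_i$ (and for $B$), so that the transformed family is nondecreasing. The only difference is that the paper packages this argument as a more general proposition for any OCE disutility of the form $\phi_{\mathrm{base}}([\cdot]_+)$ and then specializes to CVaR via $[[x]_+]_+ = [x]_+$, whereas you argue directly for CVaR; your closing remarks about $t \leq \alpha$ being needed only for non-vacuousness also match the paper's own discussion.
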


While the above result is written specifically for case of CVaR, we note that the result can be extended to some, but not all, other OCE risk measures; see \Cref{sec:conformal-cvar-control-proof} for further details. \Cref{thm:conformal-cvar-control} motivates the conformal CVaR control algorithm, which we present in \Cref{alg:conformal-oce-control}. Note that \Cref{thm:conformal-cvar-control} is only non-vacuous when $B(\lambda_{\min}) \leq \alpha$; otherwise, assuming feasibility of the risk control problem (\Cref{as:oce-risk-feasible}), we can still use $\lambda_{\min}$ to obtain the desired bound.

\begin{algorithm}[t!]
\caption{(Post-hoc) Conformal OCE Risk Control (CORC)}
\label{alg:conformal-oce-control}
\small
\begin{algorithmic}[0]
\Require parameter space $\Lambda = [\lambda_{\min}, \lambda_{\max}]$, functions $\{L_i\}_{i=1}^N$, upper-bound $B$, risk threshold $\alpha \in \R$, numerical tolerance $\epsilon > 0$, hyperparameter $t \in \R$
\Function{CORC}{$\Lambda$, $\{L_i\}_{i=1}^N$, $B$, $\alpha$, $\epsilon$, $t$, disutility $\phi: \R \to \R$}
\State $\underline\lambda \leftarrow \lambda_{\min}$, $\overline\lambda \leftarrow \lambda_{\max}$
\While{$\overline\lambda - \underline\lambda > \epsilon$}
    \State $\lambda \leftarrow (\underline\lambda + \overline\lambda) / 2$
    \State\algorithmicif\ $\tilde{h}(\lambda, t) \leq \alpha$ \algorithmicthen\ $\underline\lambda \leftarrow \lambda$ \algorithmicelse\ $\overline\lambda \leftarrow \lambda$
    \Comment{$\tilde{h}$ is defined in \eqref{eq:tilde-h}}
\EndWhile
\State \Return $\lambda$
\EndFunction

\Function{ConformalCVaRControl}{$\Lambda$, $\{L_i\}_{i=1}^N$, $B$, $\alpha$, $\epsilon$, $t$, quantile level $\delta \in [0,1)$}
\State\algorithmicif\ $\alpha < B(\lambda_{\min})$ or $t \not\in [B(\lambda_{\min}), \alpha]$ \algorithmicthen\ \Return $\lambda_{\min}$
\State \Return \Call{CORC}{$\Lambda$, $\{L_i\}_{i=1}^N$, $B$, $\alpha$, $\epsilon$, $t$, $\phi_{\CVaR^\delta}$}
\EndFunction
\end{algorithmic}
\end{algorithm}

Compared to the standard CRC result (\Cref{prop:crc-generic}), \Cref{thm:oce-risk-control,thm:conformal-cvar-control} both involve an additional hyperparameter $t$. For the risk bounds to hold, $t$ should not depend on the calibration data $\{L_1, \dotsc, L_N\}$ used to select the risk control parameter $\hat\lambda$. In practice, we recommend using an additional held-out set of losses $\{L_1', \dotsc, L_k'\}$ (e.g., the training set), and picking the $t$ that yields the largest risk control parameter on this set.

While the conformal OCE and CVaR risk control procedures we have proposed in this section enable controlling substantially more general risks than the original CRC framework, such a post-hoc risk control procedure may still come with a substantial cost upon ML model deployment. For example, applying CRC to control the false negative rate of a model for tumor segmentation may come at the cost of a large false positive rate (see \Cref{fig:polyps} in our experiments). Because CRC is designed to be applied post-hoc to a pretrained model, there is no existing approach to remedy this degradation in performance. To improve performance while guaranteeing controlled risk, a better approach would \emph{train} or \emph{fine-tune} models subject to a constraint enforcing risk control. Designing a methodology to accomplish this goal is the focus of the next section. 

\section{Conformal Risk Training}
\label{sec:conformal-risk-training}

\begin{algorithm}[t!]
\caption{Conformal Risk Training}
\label{alg:e2e-crc}
\small
\begin{algorithmic}[0]
\Function{ConformalRiskTraining}{training set $\{(x_i, y_i)\}_{i=1}^M$, parameter space $\Lambda = [\lambda_{\min}, \lambda_{\max}]$, upper-bound $B: \Lambda \to \R$, disutility $\phi: \R \to \R$, risk threshold $\alpha \in \R$, numerical tolerance $\epsilon > 0$, initial model parameters $\theta$}
\For{mini-batch $D \subseteq [M]$}
    \State Randomly split batch: $(D_\text{cal}, D_\text{pred}) \leftarrow D$
    \State Define loss functions $L_i(\theta, \lambda) := L(x_i, y_i, \theta, \lambda)$ for $i \in D_\text{cal}$
    \State Compute $\lambda(\theta) = \Call{CORC}{\Lambda, \{L_i(\theta, \cdot)\}_{i \in D_\text{cal}},\, B,\, \alpha,\, \epsilon,\, \phi}$ \Comment{CORC is defined in \Cref{alg:conformal-oce-control}}
    \For{$i \in D_\text{pred}$}
        \State Define cost functions $\ell_i(\theta, \lambda) := \ell(x_i, y_i, \theta, \lambda)$
        \State Compute (sub)gradient of objective $\diff\theta_i := \diff\ell_i(\theta, \lambda(\theta))/\diff\theta$
    \EndFor
    \State Update $\theta$ using (sub)gradients: $\sum_{i \in D_\text{pred}} \diff\theta_i$
\EndFor
\EndFunction
\end{algorithmic}
\end{algorithm}

Returning to the setting described at the start of \Cref{sec:crc}, the loss function $L$ typically depends on the output of an ML model with parameters $\theta \in \Theta$. To explicitly denote this relationship, we write $L(\theta, \lambda)$, and we define $h(\theta, \lambda)$ (equation \ref{eq:crc-generic}) and $\tilde{h}_t(\theta, \lambda)$ (equation \ref{eq:tilde-h}) accordingly in terms of $\{L_i(\theta, \lambda)\}_{i=1}^N$. The CRC (\Cref{sec:crc}) and CORC (\Cref{sec:conformal-oce-control}) procedures for picking $\lambda$ treat the model parameters $\theta$ as fixed. Thus, we call them ``post-hoc'' procedures.

However, as we show in our experiments, this separation of pre-training model parameters $\theta$ and performing risk control post-hoc leaves substantial room for improvement. Instead, in this section, we consider the problem of jointly optimizing the model parameters $\theta$ and the risk control parameter $\lambda$, which we call the \emph{end-to-end optimal risk control problem}:
\begin{equation}
\label{eq:e2e-opt-risk-control-prob}
    \min_{\theta \in \Theta,\, \lambda \in \Lambda}\ \E[\ell(\theta, \lambda)]
    \quad\text{s.t. }
    R[L(\theta,\lambda)] \leq \alpha,
\end{equation}
where $\ell: \Theta \times \Lambda \to \R$ is a cost function that measures model performance and is differentiable almost everywhere. Note that we are specifically concerned with settings where the cost function $\ell$ depends on the risk control parameter $\lambda$. For a standard training objective such as cross-entropy where $\ell$ only depends on the model parameters $\theta$ but not on $\lambda$, the end-to-end optimal risk control problem \eqref{eq:e2e-opt-risk-control-prob} reduces to standard empirical risk minimization.

The following example concretely illustrates the roles of $\ell$ and $L$ for the problem of optimizing specificity in tumor image segmentation while controlling false negative rate.

\begin{example}
\label{ex:tumor-fnr}
In tumor image segmentation, an input image $X \in \Xcal = \R^{d \times 3}$ (represented as a flattened array of RGB pixels) has a corresponding binary label $Y \in \Ycal = \{0,1\}^d$, where 1 indicates the presence of a tumor at a given pixel location. Let $f_\theta: \Xcal \to [0,1]^d$ denote a segmentation model parameterized by $\theta$ that outputs a predicted probability that each pixel is tumorous. If $\lambda \in [0,1]$ is the decision threshold, let $\hat{Y}(\theta, \lambda) \in \Ycal$ denote the binarized prediction, i.e., for all $j \in [d]$, $\hat{Y}(\theta, \lambda)_j \coloneqq \one[f_\theta(X)_j \geq \lambda]$. Then, the fraction of false negative predictions is
\begin{equation} \label{eq:FNR_loss_tumor}
    L(\theta, \lambda)
    \coloneqq 1 - \frac{|Y \wedge \hat{Y}(\theta, \lambda)|}{|Y|}
    = 1 - \frac{1}{|Y|} \sum_{j:\, Y_j = 1} \one[f_\theta(X)_j \geq \lambda]
    = \frac{1}{|Y|} \sum_{j:\ Y_j=1} \one[f_\theta(X)_j < \lambda],
\end{equation}
where $\abs{\cdot}$ counts the number of ones in a binary vector. Clearly, $L$ is nondecreasing in $\lambda$. The expected loss $\E[L(\theta, \lambda)]$ gives the FNR of the model $f_\theta$, and we can pick $\lambda$ to control the FNR to be less than some target threshold $\alpha$.

However, in addition to controlling FNR, doctors and patients may also want to reduce false positives (\textit{i.e.}, optimize \textbf{specificity}). We can approximate the fraction of false positives in an image with
\[
    \ell(\theta, \lambda) = \frac{1}{\abs{\one_d - Y}} \sum_{j:\, Y_j = 0} \sigma\left( \frac{f_\theta(X)_j - \lambda}{T} \right),
\]
where $T$ is a temperature hyperparameter.
\end{example}

As in the post-hoc CRC setting, we usually do not have access to the exact distribution of $L$ (nor the distribution of $\ell$). Instead, we only have a dataset of i.i.d. (or exchangeable) samples $\ell_i$ and $L_i$ for $i \in [N]$, from which we can form the \emph{end-to-end optimal CORC problem}:
\begin{equation}\label{eq:e2e-optimal-crc}
    \min_{\theta \in \Theta,\, \lambda \in \Lambda} \frac{1}{N} \sum_{i=1}^N \ell_i(\theta, \lambda)
    \quad\text{s.t. } \tilde{h}_t(\theta, \lambda) \leq \alpha.
\end{equation}
We can equivalently express \eqref{eq:e2e-optimal-crc} as a bi-level optimization problem. The outer problem minimizes over $\theta$, whereas the inner level chooses a risk-controlling $\lambda$:
\begin{equation}\label{eq:bilevel}
    \min_{\theta \in \Theta} \frac{1}{N} \sum_{i=1}^N \ell_i(\theta, \lambda(\theta)),
    \quad\text{where}\quad
    \lambda(\theta)
    := \argmin_{\lambda \in \Lambda} \frac{1}{N} \sum_{i=1}^N \ell_i(\theta, \lambda)
    \ \ \text{s.t. } \tilde{h}_t(\theta, \lambda) \leq \alpha.
\end{equation}
We shall assume that \Cref{as:oce-risk-feasible} holds, so in case the inner optimization problem in \eqref{eq:bilevel} is infeasible, we set $\lambda(\theta) = \lambda_{\min}$ to ensure risk control.

To solve \eqref{eq:bilevel} via gradient descent, we need to compute the gradient
\begin{equation}\label{eq:full-gradient}
    \deriv{\ell_i}{\theta}(\theta, \lambda(\theta))
    = \pderiv{\ell_i}{\theta}(\theta, \lambda(\theta))
    + \pderiv{\ell_i}{\lambda}(\theta, \lambda(\theta)) \cdot \deriv{\lambda}{\theta}(\theta).
\end{equation}
The key challenge lies in computing the derivative $\deriv{\lambda}{\theta}(\theta)$. Fortunately, as we will show in \Cref{sec:gradient_computation}, this derivative can be computed exactly in many common settings.

Assuming for now that we can compute $\deriv{\lambda}{\theta}(\theta)$, we propose the \emph{conformal risk training} method (\Cref{alg:e2e-crc}) for solving the general end-to-end CORC problem.\footnote{The name ``conformal risk training'' was coined by David Stutz \href{https://davidstutz.de/some-research-ideas-for-conformal-training/}{on his blog}. However, to the best of our knowledge, we are the first to actually develop a concrete methodology for end-to-end conformal risk control.} Inspired by the conformal training method \cite{stutz_learning_2022}, conformal risk training splits each minibatch of training data $D$ into two halves, $D_\text{cal}$ and $D_\text{pred}$. We use the pseudo-calibration set $D_\text{cal}$ to form the loss functions $L_i$ and compute the risk control parameter $\lambda(\theta)$, while we use $D_\text{pred}$ to form the cost functions $\ell_i$. Note that after training a model with conformal risk training, we use a fresh calibration dataset (and thus a fresh set of $L_1, \dotsc, L_N$) to compute $\lambda(\theta)$ for use on a test input.

\subsection{Computing the gradient in conformal risk training}
\label{sec:gradient_computation}

This section discusses the computation of the gradient $\deriv{\lambda}{\theta}(\theta)$. This is challenging in general, as it requires differentiating through the optimal solution of the lower-level CORC problem in \eqref{eq:bilevel}, which may in general be nonconvex. Fortunately, it is possible to compute this gradient in many practical settings. In the following (informal) theorem, we give a high-level description of two cases in which we can compute this gradient; see \Cref{appendix:lambda-gradient-proof} for a formal statement of the theorem and its proof.

\begin{theorem*}[Informal version of \Cref{thm:lambda-gradient}, \Cref{appendix:lambda-gradient-proof}]\label{thm:lambda-gradient-informal}
Suppose \Cref{as:crc-generic,as:nondecreasing} hold and the inner problem in \eqref{eq:bilevel} is feasible. Under mild differentiability conditions, we may obtain a closed-form expression for $\deriv{\lambda}{\theta}(\theta)$ in the following two cases:
\begin{enumerate}[nosep,leftmargin=2em,label=(\roman*)]
    \item if $\{\ell_i\}_{i=1}^N$ are strictly decreasing in $\lambda$, and $\{B\} \cup \{L_i\}_{i=1}^N$ are piecewise constant in $\lambda$;
    \item if $\{\ell_i\}_{i=1}^N$ are strictly convex or strictly monotone in $\lambda$, and $\{B\} \cup \{L_i\}_{i=1}^N$ are convex in $\lambda$.
\end{enumerate}
\end{theorem*}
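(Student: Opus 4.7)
\quad The plan is to apply the implicit function theorem (or an appropriate discrete analogue) to the optimality conditions of the inner problem in \eqref{eq:bilevel}, handling the two cases separately.

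In \textbf{Case (i)}, each $\ell_i$ is strictly decreasing so the inner objective $F(\theta, \lambda) := \frac{1}{N}\sum_i \ell_i(\theta, \lambda)$ is strictly decreasing, while the feasible set $\hat\Lambda_t(\theta) = \{\lambda : \tilde h_t(\theta, \lambda) \leq \alpha\}$ is a lower set by the monotonicity of $\tilde h_t$ established in the proof of \Cref{thm:oce-risk-control}. Hence $\lambda(\theta) = \sup \hat\Lambda_t(\theta)$. Because $B$ and each $L_i$ are piecewise constant in $\lambda$, their discontinuities occur at points $\lambda = g_{i,k}(\theta)$ that are smooth functions of the model parameters (for instance, $g_{i,j}(\theta) = f_\theta(X_i)_j$ in \Cref{ex:tumor-fnr}), and $\tilde h_t(\theta, \cdot)$ inherits the same possible jump locations since $\phi$ is continuous and nondecreasing. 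Left-continuity and monotonicity of $\tilde h_t$ then imply that $\lambda(\theta)$ equals the largest jump point $g_{i^*, k^*}(\theta)$ at which $\tilde h_t(\theta, \cdot)$ still satisfies the constraint. For $\theta$ near any non-degenerate $\theta_0$, a continuity argument shows the index $(i^*, k^*)$ is locally constant, so $\lambda(\theta) = g_{i^*, k^*}(\theta)$ in a neighborhood and
\[
    \deriv{\lambda}{\theta}(\theta) = \deriv{g_{i^*, k^*}}{\theta}(\theta),
\]
which is directly computable by backpropagation.

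In \textbf{Case (ii)}, the inner problem is the convex program
\[
    \lambda(\theta) = \argmin_{\lambda \in \Lambda} F(\theta, \lambda) \quad\text{s.t.}\quad G(\theta, \lambda) \leq 0, \quad G := \tilde h_t - \alpha,
\]
with unique optimum characterized by the KKT conditions. If $\{\ell_i\}$ are strictly monotone, $\partial F/\partial \lambda$ never vanishes, so the constraint must be active whenever $\lambda(\theta)$ lies in the interior of $\Lambda$. Applying the implicit function theorem to $G(\theta, \lambda(\theta)) = 0$ under the non-degeneracy condition $\partial G/\partial \lambda \neq 0$ gives
\[
    \deriv{\lambda}{\theta}(\theta) = -\left(\pderiv{G}{\lambda}(\theta, \lambda(\theta))\right)^{-1} \pderiv{G}{\theta}(\theta, \lambda(\theta)).
\]
If instead $\{\ell_i\}$ are strictly convex, the same formula handles the active-constraint subcase; when the constraint is inactive, the unconstrained stationarity $\partial F/\partial \lambda = 0$ together with the implicit function theorem produces
\[
    \deriv{\lambda}{\theta}(\theta) = -\left(\pderiv[2]{F}{\lambda}(\theta, \lambda(\theta))\right)^{-1} \frac{\partial^2 F}{\partial \theta\, \partial \lambda}(\theta, \lambda(\theta)),
\]
with invertibility of $\partial^2 F/\partial \lambda^2$ guaranteed by strict convexity of $F$ in $\lambda$.

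\textbf{Main obstacle.} The hard part is specifying the ``mild differentiability conditions'' that rule out the measure-zero bad sets of $\theta$ on which $\lambda(\theta)$ fails to be $C^1$. In Case (i), these are $\theta$ at which competing jump points coincide or a jump point sits exactly on the boundary of $\hat\Lambda_t$; in Case (ii), they are where the active constraint transitions to inactive (so the closed form switches regimes), where $\partial G/\partial \lambda$ vanishes, or where strict convexity degenerates. In the formal statement I would record these non-degeneracy conditions explicitly (uniqueness of the binding jump, strict inequality at neighboring jumps, strict complementarity, and the above invertibility conditions), and also note that at the boundaries of $\Lambda$ the derivative is locally zero. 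Away from the resulting exceptional set, $\lambda(\theta)$ is locally $C^1$ with the closed forms above, and these gradients plug into \eqref{eq:full-gradient} for use in \Cref{alg:e2e-crc}.
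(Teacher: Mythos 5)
Your proposal is correct in substance and, for case (i), follows essentially the same route as the paper: reduce the inner problem to $\lambda(\theta)=\sup\{\lambda : \tilde h_t(\theta,\lambda)\le\alpha\}$, observe that $\tilde h_t(\theta,\cdot)$ is a left-continuous nondecreasing step function whose jump locations are the (smooth) functions $g_{i,j}(\theta)$, identify $\lambda(\theta)$ with the binding jump location, and use continuity of the coefficients and jump locations to argue the binding index is locally constant, so $\deriv{\lambda}{\theta}=\deriv{g_{i^*,k^*}}{\theta}$. The paper additionally handles the subcase where the constraint is slack on all of $\Lambda$ (so $\lambda(\theta)=\max\Lambda$ locally and the derivative vanishes), which you only gesture at; and it isolates the fact that $\phi$ preserves the jump structure as a standalone lemma (Lemma~\ref{lemma:L-tilde-piecewise-const}), whereas you assert it inline. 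For case (ii), your route is a genuine (if mild) variant: you split explicitly into active-constraint and inactive-constraint regimes and apply the scalar implicit function theorem to $\tilde h_t(\theta,\lambda)=\alpha$ or to $\pderiv{F}{\lambda}=0$ respectively, whereas the paper differentiates through the full KKT system $\bigl(\pderiv{\Lcal}{\lambda},\,\mu\cdot(\tilde h_t-\alpha)\bigr)=\zero$ in one shot, in the style of differentiable convex optimization layers. Under strict complementarity the two give the same formula; your version is more elementary and transparent about which equation is doing the work, while the paper's version packages the regularity requirement into a single Jacobian-invertibility condition and matches the \texttt{cvxpylayers} implementation actually used in the battery experiments.

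One small gap worth closing: in case (ii) you call the inner problem ``the convex program'' and invoke uniqueness of the optimum, but the constraint function is $\tilde h_t(\theta,\lambda)=\frac{1}{N+1}\bigl(\tilde B_t(\lambda)+\sum_i\tilde L_{i,t}(\theta,\lambda)\bigr)$ with $\tilde L_{i,t}=t+\phi(L_i-t)$, and the hypothesis only gives convexity of $B$ and the $L_i$. You need the (short) observation that a convex nondecreasing $\phi$ composed with a convex $L_i$ is convex, so that $\tilde h_t$ is convex in $\lambda$ and the feasible set is an interval; the paper records this as Lemma~\ref{lemma:L-tilde-convex}. Without it, neither the uniqueness of $\lambda(\theta)$ nor the claim that the KKT conditions characterize the optimum is justified. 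Your identification of the exceptional sets to exclude (coinciding jump points, a jump sitting exactly on the constraint boundary, loss of strict complementarity, $\pderiv{\tilde h_t}{\lambda}=0$ at the solution) matches the paper's Assumptions~\ref{as:L-piecewise-constant-differentiable} and~\ref{as:L-convex-differentiable}.
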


Note that when considering certain OCE risks like the CVaR in \Cref{thm:lambda-gradient}, the nondecreasing assumption (\Cref{as:nondecreasing}) can be relaxed to monotonicity (\Cref{as:monotonic}) in a similar manner as done in \Cref{thm:conformal-cvar-control}.

The gradient for conformal training \cite{stutz_learning_2022} follows as a special case of (i) above (see \Cref{appendix:cp-inefficiency}). We now show that case (i) also applies immediately to the FNR loss considered in \Cref{ex:tumor-fnr}.

\begin{example}\label{ex:tumor-fnr-e2ecrc}
Recall the setting of \Cref{ex:tumor-fnr}. Given a dataset $D_\text{cal} = \{(X_i, Y_i)\}_{i=1}^N$ drawn exchangeably from $\Pcal$, let $L_i(\theta, \lambda) := 1 - |Y_i \wedge \hat{Y}_i(\theta,\lambda)| / |Y_i|$. Clearly, every $L_i$ is bounded by $B = 1$. Following the conformal risk control procedure, using the form \eqref{eq:FNR_loss_tumor} of the loss, 
and noting that each $\ell_i$ is strictly decreasing in $\lambda$, we may pick
\begin{equation}
\label{eq:lambda-fnr}
\begin{aligned}
    \lambda(\theta)
    &:= \max\set{ \lambda \in [0,1] \ \middle|\ \frac{1}{N+1} \left( 1 + \sum_{i=1}^N  \sum_{j:\ (Y_i)_j=1} \frac{1}{|Y_i|} \one[f_\theta(X_i)_j < \lambda] \right) \leq \alpha }.
\end{aligned}
\end{equation}
Assuming that all values $f_\theta(X_i)_j$ are unique, then according to the proof of Theorem~\ref{thm:lambda-gradient}(i) (see \Cref{appendix:lambda-gradient-proof}), the optimal value $\lambda(\theta) = f_\theta(X_i)_j$ for some specific $(i,j)$. Thus, the gradient is
\[
    \deriv{\lambda(\theta)}{\theta} = \deriv{}{\theta} f_\theta(X_i)_j.
\]
\end{example}

\section{Experiments}
\label{sec:experiments}

In this section, we present experimental results for our conformal risk training method on two problems: (1) controlling false negative rate in tumor image segmentation \cite{angelopoulos_conformal_2023-1}, and (2) controlling CVaR of losses in grid-scale battery storage operation \cite{donti_task-based_2017}. Code to reproduce our results are available on GitHub,\footnote{\url{https://github.com/chrisyeh96/conformal-risk-training}} and additional experimental results and details are reported in \Cref{appendix:experiments,appendix:experiment-details}.

\subsection{Controlling false negative rate in tumor image segmentation}
\label{sec:experiments_polyps}

\begin{figure}[tb]
\centering
\includegraphics[height=3.6cm]{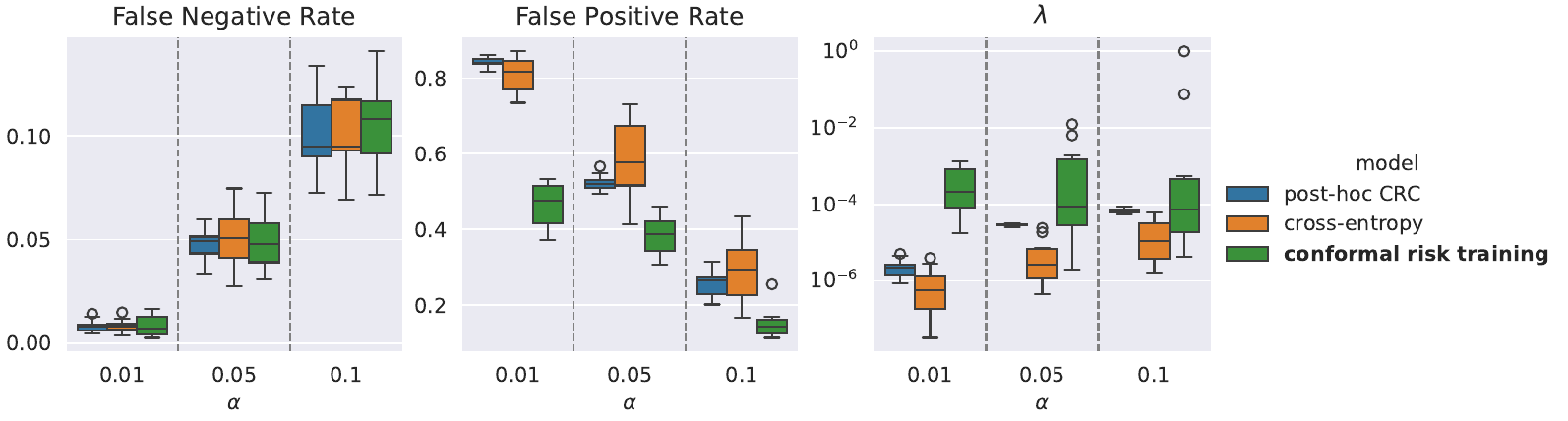}
\caption{Results for the tumor image segmentation problem (\Cref{sec:experiments_polyps}) across different FNR thresholds $\alpha$ with 10 random seeds. \textbf{(left)} All three methods show FNR controlled at level $\alpha$. \textbf{(middle)} Whereas only applying CRC post-hoc results in very large FPR, our method (conformal risk training, in green) is able to significantly reduce FPR without sacrificing FNR. \textbf{(right)} Our method generally picks a higher classification threshold $\lambda$ than the baselines, suggesting that it is less conservative.}\vspace{-1em}
\label{fig:polyps}
\end{figure}

We adopt the colonoscopy gut polyp image segmentation problem setup explored in \cite[Section 3.1]{angelopoulos_conformal_2023-1} and described in \Cref{ex:tumor-fnr}. We use a pre-trained PraNet \cite{fan_pranet_2020} as our model $f_\theta$, and we split images from 4 public datasets (CVC-ClinicDB \cite{bernal_wm-dova_2015}, CVC-ColonDB \cite{bernal_towards_2012}, ETIS-LaribPolypDB \cite{silva_toward_2014}, Kvasir-SEG \cite{jha_kvasir-seg_2020}) into training, calibration, and test splits. In \Cref{fig:polyps}, we compare the FNR and false positive rate (FPR) on the test set across three different models: (1) ``post-hoc CRC'' applied directly to the pre-trained PraNet; (2) ``cross-entropy'' refers to the fine-tuning PraNet using cross-entropy classification loss and then applying CRC; and (3) ``conformal risk training'' refers to fine-tuning PraNet using our method described in \Cref{sec:conformal-risk-training}. For each model, we try 10 different random seeds for dividing the calibration and test splits, and we vary the target FNR $\alpha$ across three different values (0.01, 0.05, 0.1).

As \Cref{fig:polyps} shows, all three models have their expected FNR controlled at the target level $\alpha$. However, for the ``post-hoc CRC'' and ``cross-entropy'' baselines, applying post-hoc CRC comes at a significant cost to the FPR, reaching as high as 80\% FPR when the target FNR is 1\%. In contrast, our conformal risk training method reduces FPR on average by 23-42\% across the $\alpha$ levels. Furthermore, our method yields larger average values of $\lambda$ than the baselines, suggesting that our method reduces conservativeness while maintaining the risk guarantee.

\subsection{Controlling CVaR tail risk from battery storage operations}
\label{sec:experiments_battery}

Energy resource operators seek electricity price forecasting models which deliver optimal expected profit while ensuring that the risk of losses due to poor predictions is adequately controlled. We adopt a grid-scale battery operation problem \cite{donti_task-based_2017}, where a battery operator predicts electricity prices $Y \in \R^T$ over a $T$-step horizon and uses the predicted prices to decide how much to charge ($\zin \in \R^T$) or discharge ($\zout \in \R^T$) the battery, subject to constraints on the battery's state of charge expressed in terms of the net charge $\znet \in \R^T$. The input features $X$ include the past day's prices and temperature, the current day’s energy load forecast, and other date-related features. The battery has capacity $C$, charging efficiency $\gamma$, and maximum charge/discharge rates $\cin$ and $\cout$. The task loss function $f$ represents the multiple objectives of maximizing profit, battery health by discouraging rapid charging/discharging (with weight $\epsilon^\text{ramp}$), and flexibility to participate in other markets by keeping the battery near half its capacity (with weight $\epsilon^\text{flex}$):
\[
    f(y,z) = (\zin - \zout)^\top y + \epsilon^\text{ramp} \left( \norm{\zin}^2 + \norm{\zout}^2 \right) + \epsilon^\text{flex} \norm{\znet}^2,
\]
with the constraint set $\Zcal$ containing all $z := (\zin, \zout, z^\text{net}) \in \R^{3T}$ that satisfy the constraints
\[
    0 \leq \zin \leq \cin,\quad
    0 \leq \zout \leq \cout,\quad
    -\frac{C}{2} \leq z^\text{net} \leq \frac{C}{2},\quad
    \forall t \in [T]:\ z^\text{net}_t := \sum_{\tau=1}^t \gamma \zin_\tau - \zout_\tau.
\]
Following \cite{donti_task-based_2017}, we set $T = 24$, $C=1$, $\gamma = 0.9$, $\cin = 0.5$, $\cout = 0.2$, $\epsilon^\text{flex} = 0.1$, and $\epsilon^\text{ramp} = 0.05$.

\begin{figure}[t]
\centering
\includegraphics[height=3.6cm]{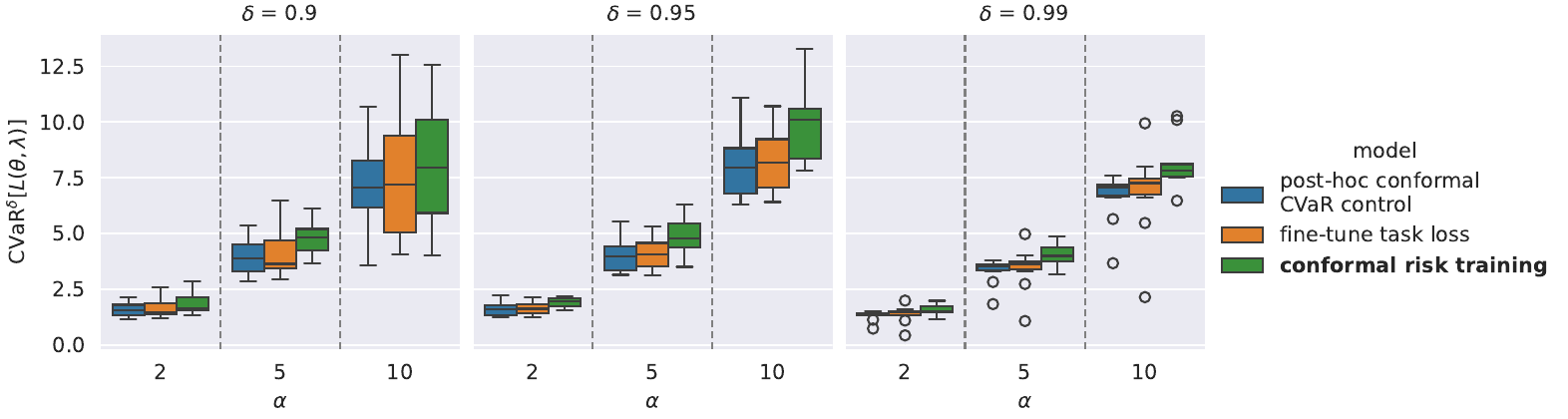}
\includegraphics[height=3.6cm]{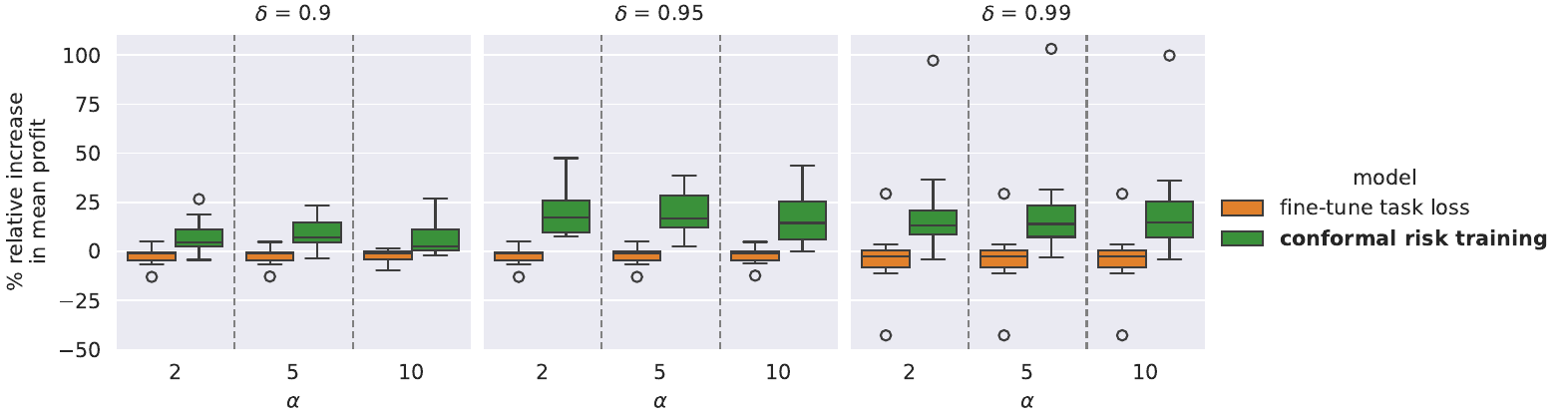}
\caption{Results from the battery storage problem (\Cref{sec:experiments_battery}) across different CVaR quantile levels $\delta$ and risk control thresholds $\alpha$, with 10 random seeds. \textbf{(top)} All three methods show CVaR risk  controlled at the target level $\alpha$. \textbf{(bottom)} Comparison of the relative increase in profit (\textit{i.e., negative task loss}) achieved by different methods over the post-hoc CRC baseline. Higher values are better.}\vspace{-1em}
\label{fig:storage}
\end{figure}

If $z \in \Zcal$, then clearly $\lambda z \in \Zcal$ for all $\lambda \in \Lambda \coloneqq [0,1]$. We can thus implement a $\lambda$-dependent decision-maker via a ``decision-scaling'' rule $z_\theta(x,\lambda) = \lambda \hat{z}_\theta(x)$, where $\hat{z}_\theta(x) = \argmin_{z \in \Zcal} f(\hat{y}_\theta(x), z)$ and $\hat{y}_\theta: \Xcal \to \Ycal$ is a neural network pre-trained to minimize mean squared error in predicting electricity prices.\footnote{We do not train a model to directly map features $x$ to decisions $z$ because $z$ must satisfy constraints. Instead, we train $\hat{y}_\theta$ to predict electricity prices, and then define the decision $\hat{z}_\theta(x)$ as a function of  $\hat{y}_\theta(x)$.} We use the task loss $\ell(\theta, \lambda) = f(Y, z_\theta(X,\lambda))$ as our training objective, which is strictly convex in $\lambda$, and we apply CVaR control to the financial risk term $L(\theta,\lambda) := \lambda (\hat{z}_\theta^\text{in}(X) - \hat{z}_\theta^\text{out}(X))^\top Y$. We construct a monotone-increasing upper bound $B(\lambda) := 100 \lambda$ based on the empirical observation that our decision-maker $\hat{z}_\theta$ satisfies $(\hat{z}_\theta^\text{in}(x) - \hat{z}_\theta^\text{out}(x))^\top y \leq 100$ on our training and validation sets. Note that \Cref{thm:conformal-cvar-control} enables us to control the CVaR of $L(\theta,\lambda)$, which may either be monotone nondecreasing or nonincreasing in $\lambda$; in contrast, the traditional CRC method (\Cref{prop:crc-generic}) does not apply, as it only allows for controlling the expectation of $L$ when it is nondecreasing.

Because $L$ and $B$ are both convex in $\lambda$ and differentiable almost everywhere in $(\theta, \lambda)$, $\lambda(\theta)$ is the solution to a convex optimization problem, and \Cref{thm:lambda-gradient}(ii) allows us to compute the derivative $\mathrm{d}\lambda(\theta)/\mathrm{d}\theta$ by differentiating through the KKT conditions of the convex optimization problem \cite{agrawal_differentiable_2019}.

We compare test set tail risk and task loss across three different models: (1) ``post-hoc conformal CVaR control'' applied directly to the pretrained price prediction model $\hat{y}_\theta$; (2) ``fine-tune task loss'' refers to fine-tuning $\hat{y}_\theta$ using a decision-focused task-loss \cite{donti_task-based_2017} and then applying conformal CVaR control; and (3) ``conformal risk training'' refers to fine-tuning $\hat{y}_\theta$ using the procedure described in \Cref{sec:conformal-risk-training}. For each model, we try 10 different random seeds for dividing the validation and test splits, and we vary the target CVaR tail risk $\alpha \in (2, 5, 10)$ and the quantile level $\delta \in (0.9, 0.95, 0.99)$.

As \Cref{fig:storage} (top) shows, all three models have their CVaR tail risk controlled at the target level $\alpha$. However, in \Cref{fig:storage} (bottom), it is clear that whereas fine-tuning using the task loss does not improve average task loss when controlling for tail risk, our conformal risk training method achieves between 7.2\% and 22.6\% mean improvement in task loss (\textit{i.e.}, higher average profit) compared to the ``post-hoc conformal CVaR control'' baseline at all tested risk thresholds $\alpha$ and quantile levels $\delta$.

\textbf{Sensitivity to $t$ hyperparameter.}\quad
As noted in \Cref{sec:conformal-oce-control}, conformal CVaR control requires picking a hyperparameter $t \in [B(\lambda_{\min}), \alpha]$, which we set to the value $t_0$ that yields the largest risk control parameter $\hat\lambda$ on the training set. To understand the sensitivity of conformal CVaR control to the choice of $t$, we computed task loss and the empirical CVaR of financial risk under both relative (\Cref{tab:t-relative-sensitivity}) and absolute (\Cref{tab:t-abs-sensitivity}) perturbations of $t$ away from $t_0$. Empirically, we find that the task loss and CVaR are not very sensitive to changes in $t$; the task loss from perturbed $t$ tends to be close to (within 1 standard deviation of) the task loss from $t_0$.

\begin{table}[tb]
\centering
\caption{Sensitivity of task loss and tail risk to relative changes in hyperparameter $t$ on the battery storage problem. Values given show mean $\pm$ 1 standard deviation of the task loss $\E_{(X,Y) \sim D}[f(Y, \lambda \hat{z}_\theta(X))]$ and financial tail risk $\CVaR^\delta_{(X,Y) \sim D}[\lambda (\hat{z}_\theta^\text{in}(X) - \hat{z}_\theta^\text{out}(X))^\top Y]$ for 10 models $\hat{y}_\theta$ trained with different random seeds. $t$ is calculated as $t = (1 + \Delta t) t_0$, where $t_0$ denotes the optimal value of $t$ tuned on the training set.}
\label{tab:t-relative-sensitivity}
\resizebox{\textwidth}{!}{%
\newcolumntype{O}{S[table-format=+1.1(1),uncertainty-mode=separate,retain-zero-uncertainty=true]}
\newcolumntype{T}{S[table-format=+2.1(1),uncertainty-mode=separate,retain-zero-uncertainty]}
\begin{tabular}{c S[table-format=+1.1] *{3}{O} *{2}{T} O *{2}{T} O}
\toprule
& & \multicolumn{3}{c}{$\alpha = 2$} & \multicolumn{3}{c}{$\alpha = 5$} & \multicolumn{3}{c}{$\alpha = 10$} \\
\cmidrule(lr){3-5}\cmidrule(lr){6-8}\cmidrule(lr){9-11}
& {$\Delta t$} & {$\delta = 0.9$} & {$0.95$} & {$0.99$} & {$\delta = 0.9$} & {$0.95$} & {$0.99$} & {$\delta = 0.9$} & {$0.95$} & {$0.99$} \\
\midrule
\multirow{5}{*}{\rotatebox[origin=c]{90}{$\E[\ell]$}}
& -0.5 & -8.6(3.2) & -4.3(1.6) & -1.5(0.5) & -21.3(8.0) & -10.7(4.0) & -3.9(1.2) & -35.6(6.3) & -21.4(7.9) & -7.7(2.5) \\
& -0.1 & -8.6(3.2) & -4.3(1.6) & -1.8(0.5) & -21.3(7.9) & -10.7(4.0) & -4.4(1.3) & -35.6(6.3) & -21.5(7.9) & -8.9(2.6) \\
&  0   & -8.6(3.2) & -4.3(1.6) & -1.8(0.5) & -21.3(7.9) & -10.7(4.0) & -4.5(1.3) & -35.6(6.3) & -21.5(7.9) & -9.0(2.6) \\
&  0.1 & -8.6(3.2) & -4.3(1.6) & -1.8(0.5) & -21.3(7.9) & -10.7(4.0) & -4.6(1.2) & -35.6(6.3) & -21.5(7.9) & -9.1(2.5) \\
&  0.5 & -8.6(3.2) & -4.3(1.6) & -1.7(0.4) & -21.3(7.9) & -10.7(4.0) & -4.1(1.0) & -35.6(6.3) & -21.5(7.9) & -8.3(2.0) \\
\midrule
\multirow{5}{*}{\rotatebox[origin=c]{90}{$\CVaR^\delta[L]$}}
& -0.5 &  1.6(0.3) &  1.6(0.3) &  1.1(0.2) &   3.9(0.8) &   4.0(0.8) &  2.8(0.6) &   7.1(2.2) &   8.0(1.5) &  5.6(1.1) \\
& -0.1 &  1.6(0.3) &  1.6(0.3) &  1.3(0.2) &   3.9(0.8) &   4.0(0.8) &  3.3(0.6) &   7.0(2.2) &   8.1(1.5) &  6.5(1.2) \\
&  0   &  1.6(0.3) &  1.6(0.3) &  1.3(0.2) &   3.9(0.8) &   4.0(0.8) &  3.3(0.6) &   7.0(2.2) &   8.1(1.5) &  6.6(1.2) \\
&  0.1 &  1.6(0.3) &  1.6(0.3) &  1.3(0.2) &   3.9(0.8) &   4.0(0.8) &  3.4(0.6) &   7.0(2.2) &   8.1(1.5) &  6.7(1.2) \\
&  0.5 &  1.6(0.3) &  1.6(0.3) &  1.2(0.3) &   3.9(0.8) &   4.0(0.8) &  3.1(0.7) &   7.0(2.2) &   8.1(1.5) &  6.2(1.5) \\
\midrule
$t_0$
& {-}  &  0.0(0.0) &  0.0(0.0) &  0.8(0.3) &   0.0(0.0) &   0.0(0.0) &  2.0(0.7) &   0.1(0.1) &   0.1(0.1) &  4.1(1.5) \\
\bottomrule
\end{tabular}%
}
\end{table}

\vspace{-0.7em}
\section{Conclusion}
\label{sec:conclusion}
\vspace{-0.7em}

We have developed the conformal OCE risk control method, which is a strict generalization of the original CRC procedure. In particular, this allows us to directly control the CVaR tail risk, unlike previous works which only control expected losses or provide high-probability bounds. We have also developed conformal risk training, which generalizes the conformal training procedure from conformal prediction to the setting of conformal OCE risk control, and our experiments show significant improvements in model performance over applying post-hoc CRC alone.

\textbf{Limitations and future directions.}\quad
The main limitations of conformal OCE risk control are the same limitations that apply to standard CRC: the risk control guarantee only applies to monotone and exchangeable losses. For conformal risk training, while we derive the exact gradient in some common cases, we do not provide a complete characterization of when the gradient exists.

Future work may examine the tightness of the conformal OCE risk control bound and consider generalizations of CRC to other families of risk measures such as distortion \cite{chen_conformal_2025} or coherent risk measures \cite{artzner_coherent_1999}. We believe that conformal OCE risk control will be of particular interest to high-stakes applications in finance, robotics, and LLM alignment, where provable tail risk guarantees are critical.

\begin{ack}
This work was supported by NSF grants (CCF-2326609, CNS-2146814, CPS-2136197, CNS-2106403, NGSDI-2105648), an NSF Graduate Research Fellowship (DGE-2139433), gifts from Amazon, OpenAI, and Latitude AI, and the Caltech Resnick Sustainability Institute.
\end{ack}

\bibliography{zotero_references}
\bibliographystyle{abbrvnat}

\newpage

\appendix
\section{Additional experimental results}
\label{appendix:experiments}

\paragraph{Tumor image segmentation.}

\begin{figure}
\centering
\begin{tabular}{@{}>{ \centering\arraybackslash}m{0.8in} m{\dimexpr\textwidth-0.8in-2\tabcolsep\relax}@{}}
input & \includegraphics[width=\linewidth]{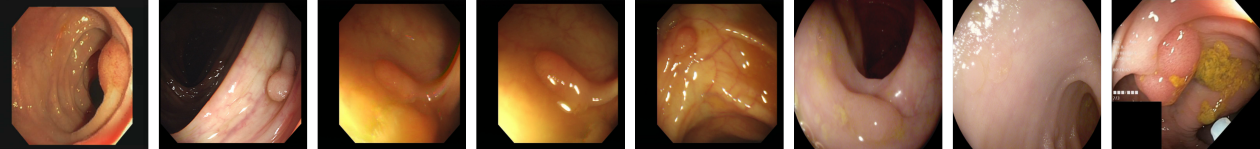}
\end{tabular}
\begin{subfigure}{\textwidth}
\vspace{1em}
\begin{tabular}{@{}>{ \centering\arraybackslash}m{0.8in} m{\dimexpr\textwidth-0.8in-2\tabcolsep\relax}@{}}
post-hoc CRC & \includegraphics[width=\linewidth]{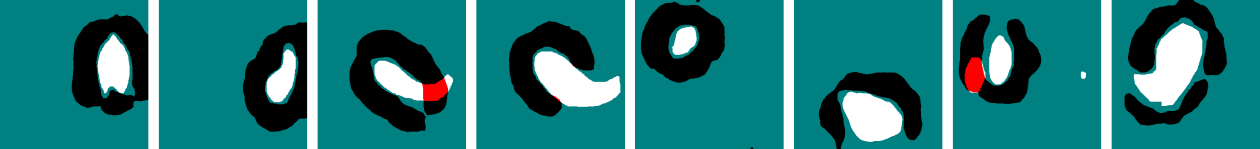} \\
cross-entropy & \includegraphics[width=\linewidth]{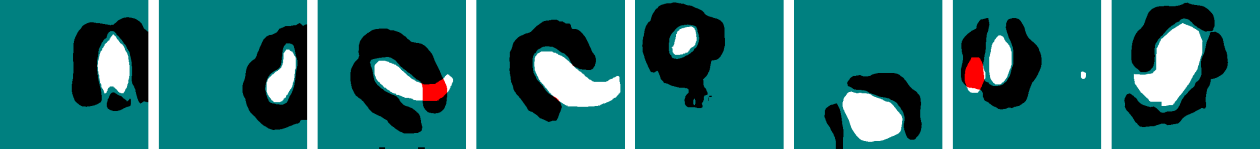} \\
\textbf{conformal risk training} & \includegraphics[width=\linewidth]{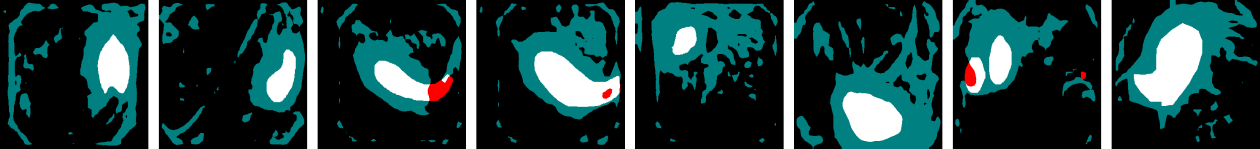}
\end{tabular}
\caption{$\alpha = 0.01$}
\end{subfigure}
\begin{subfigure}{\textwidth}
\vspace{1em}
\begin{tabular}{@{}>{ \centering\arraybackslash}m{0.8in} m{\dimexpr\textwidth-0.8in-2\tabcolsep\relax}@{}}
post-hoc CRC & \includegraphics[width=\linewidth]{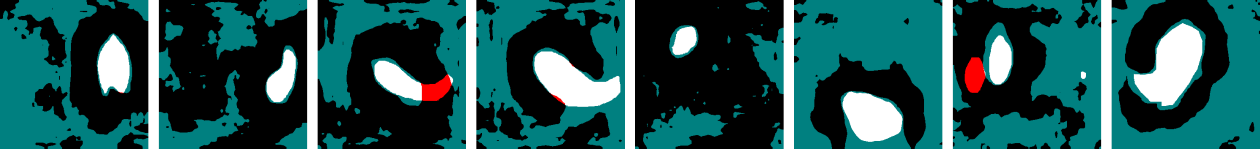} \\
cross-entropy & \includegraphics[width=\linewidth]{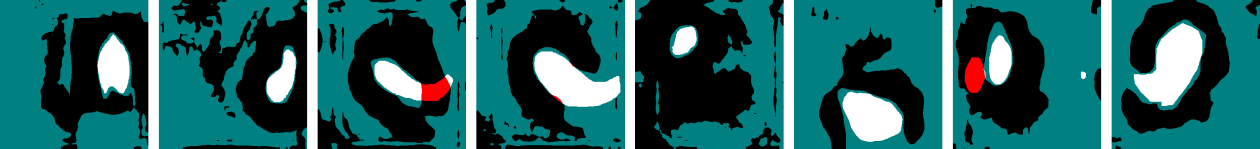} \\
\textbf{conformal risk training} & \includegraphics[width=\linewidth]{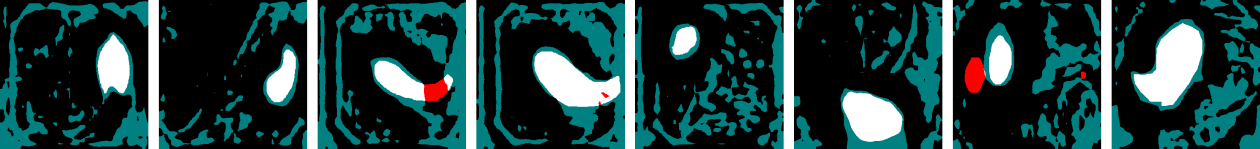}
\end{tabular}
\caption{$\alpha = 0.05$}
\end{subfigure}
\begin{subfigure}{\textwidth}
\vspace{1em}
\begin{tabular}{@{}>{ \centering\arraybackslash}m{0.8in} m{\dimexpr\textwidth-0.8in-2\tabcolsep\relax}@{}}
post-hoc CRC & \includegraphics[width=\linewidth]{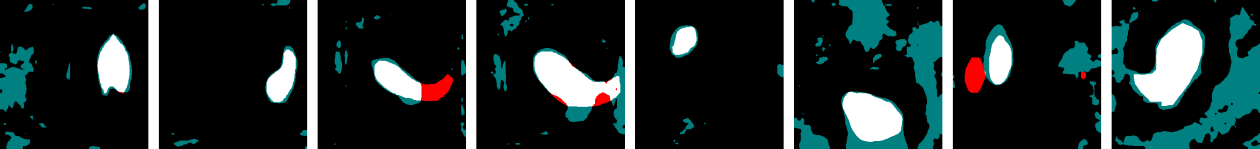} \\
cross-entropy & \includegraphics[width=\linewidth]{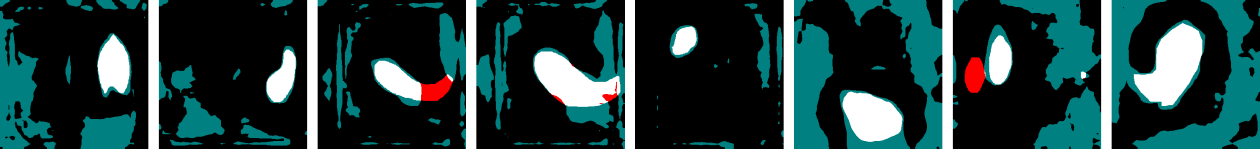} \\
\textbf{conformal risk training} & \includegraphics[width=\linewidth]{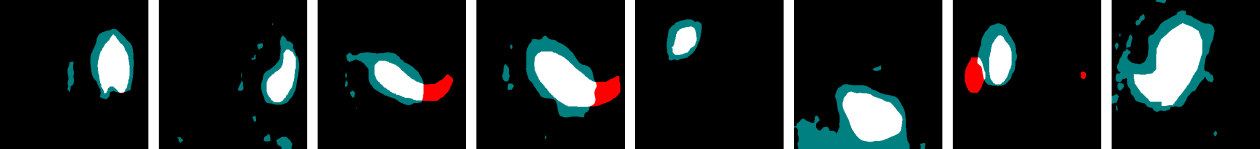}
\end{tabular}
\caption{$\alpha = 0.1$}
\end{subfigure}
\caption{Predictions on 8 randomly selected images from the test set for the tumor image segmentation problem (\Cref{sec:experiments_polyps}). Black and white pixels indicate correct outputs. False positives are shaded {\color{teal}teal}; false negatives are shaded {\color{red}red}.}
\label{fig:polyps-images}
\end{figure}

\Cref{fig:polyps-images} shows a random selection of 8 images from the test set, along with the corresponding predictions made by the baseline methods (``post-hoc CRC'' and ``cross-entropy'') compared to our ``conformal risk training'' method. Evidently, our method significantly reduces false positives (shown in {\color{teal}teal}) without significantly increasing false negatives (shown in {\color{red}red}), especially when the false negative rate is controlled at a very low $\alpha$. Note that the marginal risk control guarantee offered by the CRC procedure (\Cref{alg:crc}) ensures that false negative rate will be controlled at the target level \emph{on average} over test examples, while individual examples might exceed the target false negative rate. This is why in certain cases, our method (``conformal risk training'') might exceed the false negative rate of other methods (e.g., in the 7th column of the $\alpha = 0.05$ case in \Cref{fig:polyps-images}), while on average, it guarantees the target false negative rate.

\begin{table}[t]
\caption{Comparison of cross-entropy loss, false negative rate (FNR), and false positive rate (FPR) across baseline methods (``post-hoc CRC'' and ``cross-entropy'') and our conformal risk training (CRT) method. The $\alpha$ values in the different columns indicate the level at which FNR is controlled by CRC. The $\alpha$ value in parenthesis following ``CRT'' indicates the FNR threshold enforced during conformal risk training.}
\label{tab:polyps-ce-loss}
\newcolumntype{O}{S[table-format=+1.1(2),uncertainty-mode=separate,retain-zero-uncertainty=false]}
\newcolumntype{T}{S[table-format=+0.3(4),uncertainty-mode=separate,retain-zero-uncertainty=false]}
\resizebox{\textwidth}{!}{%
\begin{tabular}{l O *{6}{T}}
\toprule
& {\multirow{2}{*}{\vspace{-0.6em}\shortstack{cross-entropy\\loss}}}
  & \multicolumn{2}{c}{$\alpha = 0.01$}
  & \multicolumn{2}{c}{$\alpha = 0.05$}
  & \multicolumn{2}{c}{$\alpha = 0.1$} \\
\cmidrule(lr){3-4}\cmidrule(lr){5-6}\cmidrule(lr){7-8}
& & {FNR} & {FPR} & {FNR} & {FPR} & {FNR} & {FPR} \\
\midrule
post-hoc CRC         &  2.8(.1)   & .009(.003) & .841(.014) & .048(.008) & .524(.022) & .101(.019) & .256(.035) \\
cross-entropy        &  2.9(.2)   & .008(.003) & .811(.049) & .050(.014) & .580(.106) & .101(.017) & .293(.092) \\
CRT ($\alpha = .01$) & 11.9(.4)   & .008(.005) & .465(.057) & {-}        & {-}        & {-}        & {-} \\
CRT ($\alpha = .05$) &  5.8(2.5)  & {-}        & {-}        & .049(.014) & .385(.054) & {-}        & {-} \\
CRT ($\alpha = .1$)  &  7.8(6.1)  & {-}        & {-}        & {-}        & {-}        & .106(.021) & .152(.041) \\
\bottomrule
\end{tabular}%
}
\end{table}

We would also like to emphasize the decision-focused nature of our method. As shown in \Cref{fig:polyps}, further training the pre-trained image segmentation model with the PraNet weighted cross-entropy loss \cite{fan_pranet_2020} and then applying post-hoc CRC (shown in orange) does not improve model performance over directly applying post-hoc CRC to the pre-trained model (shown in blue). This observation is verified in \Cref{tab:polyps-ce-loss}, where we see no meaningful difference in performance between the ``post-hoc CRC'' and ``cross-entropy'' rows. In contrast, at a given false negative rate (FNR) risk level $\alpha$, our conformal risk training method achieves lower false positive rate (FPR) but incurs higher cross-entropy loss.

\paragraph{Battery storage operations.}

\begin{figure}[t]
\centering
\includegraphics[height=3.6cm]{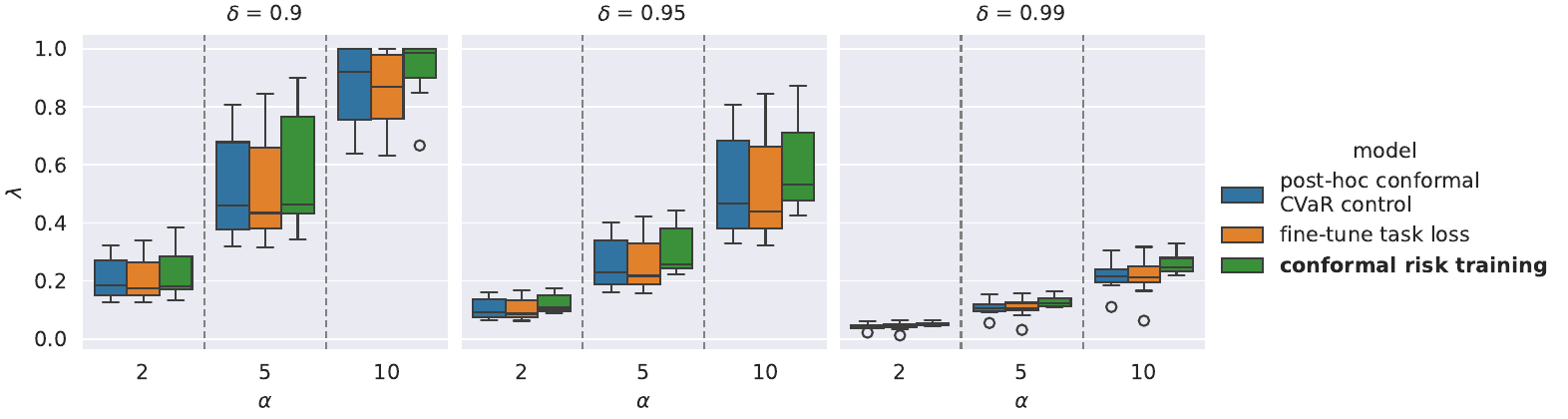}
\caption{Values of the decision scaling factor $\lambda$ for the battery storage problem (\Cref{sec:experiments_battery}) across different CVaR quantile levels $\delta$ and risk control thresholds $\alpha$, with 10 random seeds. Higher values indicate more aggressiveness and larger charge/discharge decisions.}
\label{fig:storage_lambdas}
\end{figure}

\begin{figure}[t]
\centering
\includegraphics[height=3.6cm]{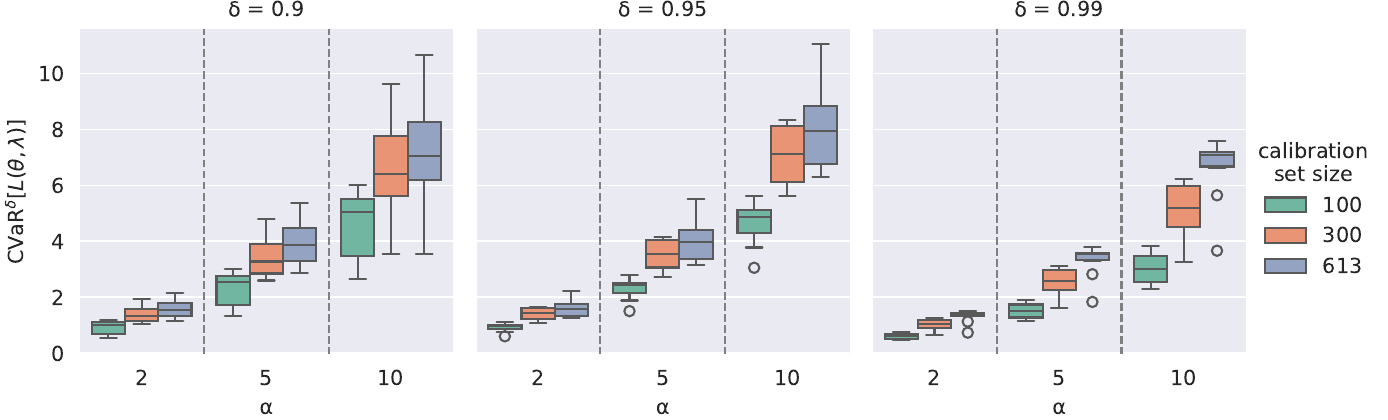}
\includegraphics[height=3.6cm]{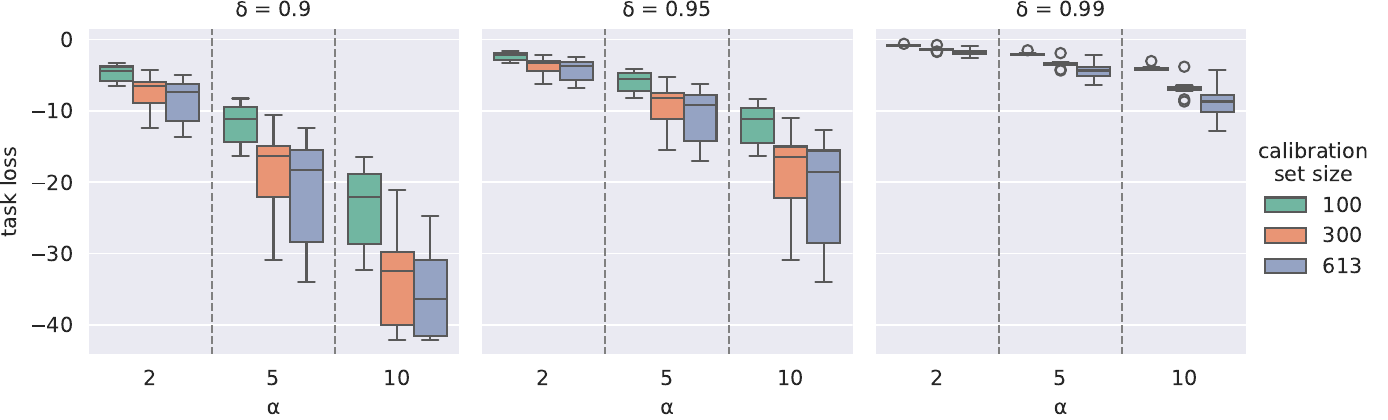}
\caption{Comparison of financial tail risk (top) and task loss (bottom) as a function of calibration set size on the battery storage problem (\Cref{sec:experiments_battery}), across different CVaR quantile levels $\delta$ and risk control thresholds $\alpha$, with 10 random seeds. Here, post-hoc conformal CVaR control is applied to the pre-trained prediction model $\hat{y}_\theta$.}
\label{fig:storage_calibsize}
\end{figure}

In \Cref{fig:storage_lambdas}, we show values of the decision scaling factor $\lambda$ obtained through our conformal risk training approach compared with the baselines described in \Cref{sec:experiments_battery}. In general across values of the CVaR quantile level $\delta$ and risk control threshold $\alpha$, it appears that our conformal risk training method yields values of $\lambda$ that are larger on average than the alternative methods. This implies that our method is able to learn to deploy \emph{more aggressive} decisions, and employs a greater portion of the battery capacity when charging/discharging than the alternative methods. In other words, by learning to use a larger scaling factor $\lambda$ while preserving the risk constraint, the electricity price forecasting model trained via conformal risk training becomes more ``calibrated'' in regard to risk. In addition, as the value of $\delta$ increases, the scaling factor $\lambda$ decreases across all methods, reflecting the increased need to make more conservative decisions in settings where losses on the extreme tail must be controlled.

Conformal CVaR control performs better when the calibration set is larger. As shown in \Cref{fig:storage_calibsize}, a larger calibration set generally reduces the task loss and achieves a tighter CVaR bound closer to $\alpha$ (i.e., not being overly conservative). The reason a larger calibration set size $N$ helps is because the term $\frac{1}{N+1} \tilde{B}_t(\lambda)$ appears in the expression for $\tilde{h}_t(\lambda)$. Increasing $N$ reduces the effect of the upper bound $B$ and enables choosing larger (less conservative) $\lambda$. However, there are diminishing gains as $N$ increases.

\begin{table}[tb]
\centering
\caption{Sensitivity of task loss and tail risk to absolute changes in hyperparameter $t$ on the battery storage problem described in \Cref{sec:experiments_battery}. Values given show mean $\pm$ 1 stddev of the task loss $\E_{(X,Y) \sim D}[f(Y, \lambda \hat{z}_\theta(X))]$ and the financial tail risk $\CVaR^\delta_{(X,Y) \sim D}[\lambda (\hat{z}_\theta^\text{in}(X) - \hat{z}_\theta^\text{out}(X))^\top Y]$ for 10 models $\hat{y}_\theta$ trained with different random seeds. $t$ is calculated as $t = t_0 + \Delta t$, where $t_0$ denotes the optimal value of $t$ tuned on the training set. In the case that $t_0 + \Delta t \not\in [B(\lambda_{\min}), \alpha]$, no values are given, as indicated by ``-''.}
\label{tab:t-abs-sensitivity}
\resizebox{\textwidth}{!}{%
\newcolumntype{O}{S[table-format=+1.1(1),uncertainty-mode=separate,retain-zero-uncertainty=true]}
\newcolumntype{T}{S[table-format=+2.1(1),uncertainty-mode=separate,retain-zero-uncertainty]}
\begin{tabular}{c S[table-format=+1.1] *{3}{O} *{2}{T} O *{2}{T} O}
\toprule
& & \multicolumn{3}{c}{$\alpha = 2$} & \multicolumn{3}{c}{$\alpha = 5$} & \multicolumn{3}{c}{$\alpha = 10$} \\
\cmidrule(lr){3-5}\cmidrule(lr){6-8}\cmidrule(lr){9-11}
& {$\Delta t$} & {$\delta = 0.9$} & {$0.95$} & {$0.99$} & {$\delta = 0.9$} & {$0.95$} & {$0.99$} & {$\delta = 0.9$} & {$0.95$} & {$0.99$} \\
\midrule
\multirow{7}{*}{\rotatebox[origin=c]{90}{$\E[\ell]$}}
& -5  & {-}       & {-}       & {-}       & {-}        & {-}        & {-}       & {-}        & {-}        & -6.5(0.9) \\
& -2  & {-}       & {-}       & {-}       & {-}        & {-}        & -3.3(1.2) & {-}        & {-}        & -8.2(2.5) \\
& -1  & {-}       & {-}       & -1.3(0.2) & {-}        & {-}        & -4.1(1.2) & {-}        & {-}        & -9.1(2.3) \\
&  0  & -8.6(3.2) & -4.3(1.6) & -1.8(0.5) & -21.3(7.9) & -10.7(4.0) & -4.5(1.3) & -35.6(6.3) & -21.5(7.9) & -9.0(2.6) \\
&  1  & -6.8(1.9) & -4.4(1.1) & -1.0(0.7) & -19.9(6.7) & -11.3(3.6) & -4.5(0.8) & -35.4(6.3) & -22.2(7.5) & -9.3(2.1) \\
&  2  & -0.7(0.3) & -0.8(0.0) &  0.0(0.0) & -18.1(5.5) & -11.3(3.1) & -3.4(1.3) & -34.8(6.2) & -22.6(7.0) & -8.9(1.5) \\
&  5  &  0.0(0.0) &  0.0(0.0) &  0.0(0.0) & -0.4(0.9)  & -1.9(0.7)  &  0.0(0.0) & -32.2(7.2) & -21.9(5.6) & -4.8(3.5) \\
\midrule
\multirow{7}{*}{\rotatebox[origin=c]{90}{$\CVaR^\delta[L]$}}
& -5  & {-}      & {-}      & {-}      & {-}      & {-}      & {-}      & {-}      & {-}      & 4.4(0.3) \\
& -2  & {-}      & {-}      & {-}      & {-}      & {-}      & 2.3(0.6) & {-}      & {-}      & 5.9(1.0) \\
& -1  & {-}      & {-}      & 0.9(0.1) & {-}      & {-}      & 3.0(0.5) & {-}      & {-}      & 6.6(0.7) \\
&  0  & 1.6(0.3) & 1.6(0.3) & 1.3(0.2) & 3.9(0.8) & 4.0(0.8) & 3.3(0.6) & 7.0(2.2) & 8.1(1.5) & 6.6(1.2) \\
&  1  & 1.3(0.3) & 1.7(0.2) & 0.8(0.6) & 3.7(0.8) & 4.3(0.6) & 3.4(0.5) & 7.0(2.1) & 8.4(1.4) & 6.9(1.0) \\
&  2  & 0.1(0.1) & 0.3(0.1) & 0.0(0.0) & 3.4(0.8) & 4.3(0.6) & 2.6(1.2) & 6.9(2.1) & 8.6(1.3) & 6.7(1.1) \\
&  5  & 0.0(0.0) & 0.0(0.0) & 0.0(0.0) & 0.1(0.1) & 0.7(0.3) & 0.0(0.0) & 6.2(1.7) & 8.4(1.0) & 3.8(3.0) \\
\midrule
$t_0$
& {-} & 0.0(0.0) & 0.0(0.0) & 0.8(0.3) & 0.0(0.0) & 0.0(0.0) & 2.0(0.7) & 0.1(0.1) & 0.1(0.1) & 4.1(1.5) \\
\bottomrule
\end{tabular}%
}
\end{table}

In \Cref{tab:t-abs-sensitivity}, we compute the task loss and CVaR financial tail risk on the test set under perturbations of the hyperparameter $t$ from the tuned value $t_0$. Because \Cref{thm:conformal-cvar-control} only holds when $t \in [B(\lambda_{\min}), \alpha]$, we exclude reporting results when the perturbed $t$ is outside of the required interval. We find that while our $t_0$ tuned based on the training set is not always optimal for the test set, it is generally close to the optimal.

\section{Conformal Risk Control Algorithm \label{app:CRC_algo}}

\begin{algorithm}[H]
\caption{Conformal Risk Control (CRC) for left-continuous, nondecreasing losses}
\label{alg:crc}
\begin{algorithmic}[0]
\Function{CRC}{parameter space $\Lambda = [\lambda_{\min}, \lambda_{\max}]$, loss functions $\{L_i: \Lambda \to \R\}_{i=1}^N$, upper-bound $B: \Lambda \to \R$, risk threshold $\alpha \in \R$, numerical tolerance $\epsilon > 0$}
\State $\underline\lambda \leftarrow \lambda_{\min}$, $\overline\lambda \leftarrow \lambda_{\max}$
\While{$\overline\lambda - \underline\lambda > \epsilon$}
    \State $\lambda \leftarrow (\underline\lambda + \overline\lambda) / 2$
    \State\algorithmicif\ $\frac{1}{N+1} \big( B(\lambda) + \sum_{i=1}^N L_i(\lambda) \big) \leq \alpha$ \algorithmicthen\ $\underline\lambda \leftarrow \lambda$ \algorithmicelse\ $\overline\lambda \leftarrow \lambda$
\EndWhile
\State \Return $\underline\lambda$
\EndFunction
\end{algorithmic}
\end{algorithm}
\section{Deferred proofs}
\label{appendix:proofs}

Throughout the proofs, we sometimes rely on the property that a disutility function $\phi: \R \to \R \cup \{+\infty\}$ corresponding to an OCE risk measure (\Cref{def:oce}) is a closed function. For completeness, we provide the definition of a closed function below.

\begin{definition}[\cite{bertsekas_convex_2009}]
\label{def:closed-function}
A function $f: \R \to \R \cup \{+\infty\}$ is \emph{\textbf{closed}} if its epigraph $\epi f := \{(x,t) \in \R^2 \mid f(x) \leq t\}$ is a closed set in $\R^2$.
\end{definition}

\subsection{Proof of Proposition~\ref{prop:crc-generic}}
\label{appendix:crc-generic-proof}

Define the set
\[
    \hat\Lambda' := \{ \lambda \in \Lambda \mid \hat{R}(\lambda) \leq \alpha \},
    \quad\text{where}\quad
    \hat{R}(\lambda) := \frac{1}{N+1} \sum_{i=1}^{N+1} L_i(\lambda),
\]
and recall the definition of the set $\hat\Lambda$ from \eqref{eq:crc-generic}, which we state again below:
\[
    \hat\Lambda := \left\{\lambda \in \Lambda \mid h(\lambda) \leq \alpha\right\}, \quad\text{where}\quad h(\lambda) := \frac{1}{N+1} \left( B(\lambda) + \sum_{i=1}^N L_i(\lambda) \right).
\]
Since $L_i \leq B$ almost surely for every $i \in [N]$, $\hat{R} \leq h$ almost surely. Therefore, $h(\lambda) \leq \alpha$ implies $\hat{R}(\lambda) \leq \alpha$ almost surely, so $\hat\Lambda \subseteq \hat\Lambda'$ almost surely.

Define $\bar\lambda := \sup\hat\Lambda$ and $\bar\lambda' := \sup\hat\Lambda'$. We follow the convention that $\sup\emptyset = -\infty$. Since $\hat\Lambda \subseteq \hat\Lambda'$ almost surely, we have $\bar\lambda \leq \bar\lambda'$ almost surely.

We now consider two cases based on whether $\hat\Lambda'$ is empty or not.

\begin{enumerate}[leftmargin=2em]
\item Suppose $\hat\Lambda' = \emptyset$. Then $\hat\Lambda = \emptyset$ almost surely, so it is (vacuously) true that for all $\lambda \in \hat\Lambda$, $\E[L_{N+1}(\lambda)] \leq \alpha$.

If \Cref{as:crc-alpha-feasible} holds, then $\bar\lambda = \bar\lambda' = -\infty$ almost surely, so $\hat\lambda = \max\{\lambda_{\min}, \bar\lambda\} = \lambda_{\min}$ almost surely. Therefore, $\E[L_{N+1}(\hat\lambda)] \leq \alpha$.

\item Suppose $\hat\Lambda'$ is non-empty. In this case, $\bar\lambda' = \max\hat\Lambda'$ because (1) $\{L_i\}_{i=1}^{N+1}$ are left-continuous functions, so $\hat{R}$ is also left-continuous; (2) $\hat\Lambda'$ contains at least one point; and (3) we assume that $\Lambda$ is closed.

Let $E$ be the random multiset of loss functions $\{L_1, \dotsc, L_{N+1}\}$, and observe that $\bar\lambda'$ is a constant conditional on $E$. The random functions $L_i$ are exchangeable, which implies that
\[
    L_{N+1}(\bar\lambda') \mid E
    \ \sim \
    \text{Uniform}\set{ L_1(\bar\lambda'), \dotsc, L_{N+1}(\bar\lambda') }.
\]
Therefore,
\[
    \E\left[ L_{N+1}(\bar\lambda') \mid E \right]
    = \frac{1}{N+1} \sum_{i=1}^{N+1} L_i(\bar\lambda')
    = \hat{R}(\bar\lambda').
\]

Note that $\hat{R}$ is a random function, because the $L_i$'s are random. By the law of total expectation (i.e., by further taking an expectation over the random multiset $E$), we have
\[
    \E\left[ L_{N+1}(\bar\lambda') \right]
    = \E\left[ \hat{R}(\bar\lambda') \right]
    \leq \alpha
\]
where the expectation is over randomness from $E$, and the inequality comes from the definitions of $\hat\Lambda'$ and $\bar\lambda'$.

Since $L_{N+1}$ is nondecreasing almost surely and $\bar\lambda \leq \bar\lambda'$ almost surely, we have
\[
    \E\left[ L_{N+1}(\bar\lambda) \right]
    \leq \E\left[ L_{N+1}(\bar\lambda') \right]
    \leq \alpha.
\]
Since $\bar\lambda = \sup\hat\Lambda$ and $L_{N+1}$ is nondecreasing almost surely,
\[
    \forall \lambda \in \hat\Lambda:\quad
    \E[L_{N+1}(\lambda)] \leq \E[L_{N+1}(\bar\lambda)] \leq \alpha.
\]

If \Cref{as:crc-alpha-feasible} holds, then observe that both $\E[L_{N+1}(\bar\lambda)] \leq \alpha$ and $\E[L_{N+1}(\lambda_{\min})] \leq \alpha$. Therefore, $\E[L_{N+1}(\hat\lambda)] \leq \alpha$.
\end{enumerate}
\qed

\subsection{Proof of Theorem~\ref{thm:oce-risk-control}}
\label{appendix:oce-risk-control-proof}

\begin{lemma}
\label{lemma:phi-left-continuous}
If $\phi: \R \to \R \cup \{+\infty\}$ is a disutility function corresponding to an optimized certainty equivalent risk (\Cref{def:oce}), then $\phi$ is left-continuous.

Furthermore, if $\phi$ only takes on real values, then $\phi$ is continuous.
\end{lemma}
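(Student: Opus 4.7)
The plan is to sandwich the left limit $\phi(x^-) := \sup_{y < x} \phi(y)$ between the value $\phi(x)$ from two sides: monotonicity yields an upper bound, and closedness (i.e., lower semicontinuity) yields a matching lower bound.

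First, I would observe that since $\phi$ is nondecreasing, the left limit $\phi(x^-) = \lim_{y \uparrow x} \phi(y) = \sup_{y < x} \phi(y)$ exists in $\R \cup \{+\infty\}$ and satisfies $\phi(x^-) \leq \phi(x)$ for every $x \in \R$. Second, I would invoke the standard equivalence (see e.g.\ Bertsekas, \emph{Convex Optimization Theory}) between closedness of $\phi$ (per \Cref{def:closed-function}) and lower semicontinuity: for every $x \in \R$, $\phi(x) \leq \liminf_{y \to x} \phi(y)$. For a nondecreasing function, the one-sided limits satisfy $\phi(x^-) \leq \phi(x^+)$, so $\liminf_{y \to x} \phi(y) = \phi(x^-)$. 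Combining these inequalities gives $\phi(x^-) = \phi(x)$, which is precisely left continuity. The argument also goes through verbatim in the case $\phi(x) = +\infty$, since lower semicontinuity then forces $\phi(x^-) = +\infty$ as well.

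For the second claim, if $\phi$ only takes values in $\R$, then $\dom \phi = \R$, so the interior of its effective domain is all of $\R$. A classical result in convex analysis asserts that any convex function on $\R^n$ is continuous on the interior of its effective domain, so $\phi$ is continuous on $\R$.

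I do not anticipate a substantive obstacle: the two workhorses are the textbook equivalence between closedness and lower semicontinuity and the identity $\liminf_{y \to x} \phi(y) = \phi(x^-)$ for monotone nondecreasing functions. The only care required is treating the $\phi(x) = +\infty$ case, which collapses both bounds to $+\infty$ and thus preserves left continuity.
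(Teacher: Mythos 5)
Your proposal is correct and follows essentially the same route as the paper's proof: both establish left-continuity by sandwiching the left limit between the upper bound from monotonicity and the lower bound from lower semicontinuity (equivalently, closedness), and both dispatch the real-valued case via the standard fact that finite convex functions on $\R$ are continuous. The only cosmetic difference is that you compute $\liminf_{y \to x} \phi(y) = \phi(x^-)$ directly from monotonicity, while the paper extracts the same conclusion by passing to an increasing sequence converging to $x$.
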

\begin{proof}[Proof of \Cref{lemma:phi-left-continuous}]
From the definition of an optimized certainty equivalent risk (\Cref{def:oce}), $\phi: \R \to \R \cup \{+\infty\}$ is a closed, convex, nondecreasing function. By \cite[Proposition 1.1.2]{bertsekas_convex_2009}, closedness implies $\phi$ is lower-semicontinuous.

Next, we show that since $\phi$ is lower-semicontinuous and nondecreasing, it is left-continuous. Fix any $x_0 \in \R$. Since $\phi$ is monotone, it has a left-limit at $x_0$, which we shall call $L$:
\[
    L := \lim_{x \uparrow x_0} \phi(x).
\]
Since $\phi$ is nondecreasing, $L \leq \phi(x_0)$. Choose any increasing sequence $\{x_n \in \R\}_{n \in \N}$ that converges to $x_0$. Then by definition of left-limit, we have
\[
    \lim_{n \to \infty} \phi(x_n) \to L.
\]
Lower semicontinuity at $x_0$ yields
\[
    \phi(x_0)
    \leq \liminf_{x \to x_0} \phi(x)
    \leq \liminf_{n \to \infty} \phi(x_n)
    = \lim_{n \to \infty} \phi(x_n)
    = L.
\]
We have thus shown $L \leq \phi(x_0) \leq L$, so $\phi(x_0) = L$, so $\phi$ is left-continuous at $x_0$. Since $x_0$ was arbitrary, $\phi$ is left-continuous on all of $\R$.

For the case where $\phi$ is real-valued (i.e., is never infinite), then it is a well-known fact that convex real-valued functions are continuous.
\end{proof}

\begin{proof}[Proof of \Cref{thm:oce-risk-control}]
Fix any $t \in \R$. By \Cref{lemma:phi-left-continuous}, $\phi$ is left-continuous. Since $(\phi, \{L_i\}_{i=1}^{N+1}, B)$ are nondecreasing left-continuous functions, and the composition of nondecreasing left-continuous functions remains nondecreasing left-continuous, $(\{\tilde{L}_{i,t}\}_{i=1}^{N+1}, \tilde{B}_t)$ are nondecreasing left-continuous functions. Furthermore, we have $\tilde{B}_t(\lambda) \geq \tilde{L}_{i,t}(\lambda)$ almost surely.

For any $\hat\lambda \in \hat\Lambda_t$, we have
\[
    R[L_{N+1}(\hat\lambda)]
    = \inf_{\hat{t} \in \R} \E\left[\hat{t} + \phi(L_{N+1}(\hat\lambda) - \hat{t})\right]
    \leq \E\left[ t + \phi(L_{N+1}(\hat\lambda) - t) \right]
    = \E[\tilde{L}_{N+1,t}(\hat\lambda)]
    \leq \alpha,
\]
where the last inequality comes from \Cref{prop:crc-generic}.
\end{proof}

\subsection{Proof of Theorem~\ref{thm:conformal-cvar-control}}
\label{sec:conformal-cvar-control-proof}

\Cref{thm:conformal-cvar-control} states that CVaR risk can be controlled when the loss functions $L_i$ are monotone, as opposed to the stronger nondecreasing requirement from \Cref{prop:crc-generic}. In this section, we prove the more general result (\Cref{prop:conformal-oce-control-monotone}) that when $L_i$ are monotone, any OCE risk measure whose disutility function can be written as the composition of another OCE risk measure's disutility $\phi_\mathrm{base}$ and $[.]_+$ can also be controlled. We then show that \Cref{thm:conformal-cvar-control} follows as corollary of this more general result.

\begin{lemma}
\label{lemma:closed-continuous-composition}
If $f: \R \to \R \cup \{+\infty\}$ is a closed function (\Cref{def:closed-function}), and $g: \R \to \R$ is continuous, then $f \circ g$ is closed.
\end{lemma}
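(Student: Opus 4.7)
The plan is to prove closedness directly from \Cref{def:closed-function} by showing that the epigraph
\[
    \epi(f \circ g) = \set{(x, t) \in \R^2 \mid f(g(x)) \leq t}
\]
is a closed subset of $\R^2$. I will use the sequential characterization of closed sets in $\R^2$: take an arbitrary convergent sequence contained in $\epi(f \circ g)$ and verify that its limit also belongs to $\epi(f \circ g)$. The argument will lean on two ingredients: the hypothesis that $\epi f$ is closed, and the continuity of $g$, which lets me transport convergence in $\R$ through $g$ without distortion.

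Concretely, I would fix a sequence $\{(x_n, t_n)\}_{n \in \N} \subseteq \epi(f \circ g)$ with $(x_n, t_n) \to (x, t) \in \R^2$. By definition of the epigraph, $f(g(x_n)) \leq t_n$ for each $n$, so $(g(x_n), t_n) \in \epi f$. Continuity of $g$ yields $g(x_n) \to g(x)$, and hence $(g(x_n), t_n) \to (g(x), t)$ in $\R^2$. Since $\epi f$ is closed by hypothesis, the limit $(g(x), t)$ lies in $\epi f$, which is precisely $f(g(x)) \leq t$, i.e., $(x, t) \in \epi(f \circ g)$.

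I do not expect any significant obstacle, as the proof is essentially a two-line verification. The only mild subtlety is that $f$ may take the value $+\infty$, but this is handled automatically: we never evaluate $f$ at a limit directly, only invoke that a certain point lies in the (closed) epigraph. As a sanity check, one could alternatively argue via the equivalence, for extended real-valued functions, between closedness and lower-semicontinuity (as used in the proof of \Cref{lemma:phi-left-continuous}), combined with the standard fact that a lower-semicontinuous function composed with a continuous function is lower-semicontinuous; but the epigraph argument above is self-contained and cleaner, so that is the version I would write up.
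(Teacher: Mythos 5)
Your proposal is correct and follows essentially the same route as the paper's proof: both take a convergent sequence in $\epi(f\circ g)$, push it through the continuous map $g$ to get a convergent sequence in the closed set $\epi f$, and conclude that the limit lies in $\epi(f\circ g)$. No issues.
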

\begin{proof}[Proof of \Cref{lemma:closed-continuous-composition}]
Let $(x_n, t_n)$ be a sequence of points in $\epi(f \circ g)$ such that $(x_n, t_n) \to (x,t)$. By definition of epigraph, $f(g(x_n)) \leq t_n$, so $(g(x_n), t_n) \in \epi f$. Since $g$ is continuous, $g(x_n) \to g(x)$, and thus $(g(x_n), t_n) \to (g(x), t)$. Since $f$ is closed, $\epi f$ is closed, so $(g(x), t) \in \epi f$. In other words, $f(g(x)) \leq t$, so $(x,t) \in \epi(f \circ g)$.

Thus, we have shown that every convergent sequence of points in $\epi(f \circ g)$ converges within $\epi(f \circ g)$, so $\epi(f \circ g)$ is closed. Therefore, $f \circ g$ is closed.
\end{proof}

\begin{proposition}
\label{prop:conformal-oce-control-monotone}
Suppose that \Cref{as:crc-generic,as:monotonic} hold. Fix any $\alpha \in \R$, and let $\phi_\mathrm{base}: \R \to \R \cup \{+\infty\}$ be a disutility function for some OCE risk measure. Let $R$ be the OCE risk measure corresponding to the transformed disutility function $\phi(x) := \phi_\mathrm{base}([x]_+)$, and define $\hat\Lambda_t$ as in \Cref{thm:oce-risk-control} using this transformed disutility function $\phi$. Then, for every $t \in [B(\lambda_{\min}), \alpha]$ and every $\lambda \in \hat\Lambda_t$, we have the risk control bound $R[L_{N+1}(\lambda)] \leq \alpha$. 
\end{proposition}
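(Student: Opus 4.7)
The plan is to reduce the monotonic setting of Proposition \ref{prop:conformal-oce-control-monotone} to the standard nondecreasing conformal risk control bound in Proposition \ref{prop:crc-generic}, by exploiting the crucial fact that the transformed disutility $\phi(x)=\phi_{\mathrm{base}}([x]_+)$ vanishes identically on $(-\infty,0]$. Concretely, I would show that the hypothesis $t\ge B(\lambda_{\min})$ forces both $\tilde{L}_{i,t}(\lambda)=t+\phi(L_i(\lambda)-t)$ and $\tilde{B}_t(\lambda)=t+\phi(B(\lambda)-t)$ to be nondecreasing and left-continuous in $\lambda$, regardless of whether each individual $L_i$ (respectively $B$) is nondecreasing or nonincreasing. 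Once this is established, Proposition \ref{prop:crc-generic} applies directly to the transformed family with threshold $\alpha$, and the OCE variational representation converts the resulting expected-loss bound into the desired bound on $R[L_{N+1}(\lambda)]$.

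The key technical step is to verify Assumptions \ref{as:crc-generic} and \ref{as:nondecreasing} for the transformed family, via a case split on the direction of monotonicity permitted by Assumption \ref{as:monotonic}. If $L_i$ is nondecreasing, then $\lambda\mapsto L_i(\lambda)-t$ is nondecreasing and $\phi$ is nondecreasing, so $\tilde{L}_{i,t}$ is nondecreasing. If instead $L_i$ is nonincreasing, then $L_i(\lambda)\le L_i(\lambda_{\min})\le B(\lambda_{\min})\le t$ for all $\lambda$ almost surely, so $[L_i(\lambda)-t]_+\equiv 0$, and hence $\tilde{L}_{i,t}(\lambda)\equiv t$, which is trivially nondecreasing. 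The identical case split applied to $B$ shows that $\tilde{B}_t$ is nondecreasing, and the pointwise upper bound $\tilde{L}_{i,t}\le\tilde{B}_t$ is inherited from $L_i\le B$ via monotonicity of $\phi$. For left-continuity, I would invoke Lemma \ref{lemma:phi-left-continuous} to get $\phi_{\mathrm{base}}$ left-continuous, combine it with continuity of $[\cdot]_+$ and left-continuity of $L_i$ and $B$, and transport limits through the composition along monotone increasing sequences. Lemma \ref{lemma:closed-continuous-composition} ensures that $\phi$ is itself closed, so $R$ is a legitimate OCE risk measure, and exchangeability of $\{\tilde{L}_{i,t}\}$ is immediate from exchangeability of $\{L_i\}$ since $t$ is deterministic.

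Having established the reduction, I would apply Proposition \ref{prop:crc-generic} to $\{\tilde{L}_{i,t}\}$ and $\tilde{B}_t$ at threshold $\alpha$ to obtain $\E[\tilde{L}_{N+1,t}(\lambda)]\le\alpha$ for every $\lambda\in\hat\Lambda_t$. The bound on the OCE risk then follows immediately from the variational definition:
\[
R[L_{N+1}(\lambda)] = \inf_{s\in\R}\bigl(s+\E[\phi(L_{N+1}(\lambda)-s)]\bigr) \le t+\E[\phi(L_{N+1}(\lambda)-t)] = \E[\tilde{L}_{N+1,t}(\lambda)] \le \alpha.
\]

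The main obstacle I anticipate is the left-continuity verification when $\phi_{\mathrm{base}}$ is allowed to take $+\infty$ and is thus only left-continuous rather than fully continuous; one must be careful to transport limits through the composition only along monotone increasing sequences, which is why the nondecreasing subcase requires the small argument above while the nonincreasing subcase is free (the transformed loss is constant $t$). The conceptual heart of the proof, however, is elementary: the hypothesis $t\ge B(\lambda_{\min})$ is exactly what kills off the nonincreasing cases by pinning $\tilde{L}_{i,t}$ to the constant $t$, which is precisely why this monotonicity relaxation is possible for disutilities of the form $\phi_{\mathrm{base}}\circ[\cdot]_+$ but not for general OCE disutilities.
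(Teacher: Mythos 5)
Your proposal is correct and follows essentially the same route as the paper's proof: the same case split on the monotonicity direction of each $L_i$ (and $B$), using $t \ge B(\lambda_{\min})$ to force $[L_i(\lambda)-t]_+ \equiv 0$ in the nonincreasing case, verifying \Cref{as:crc-generic,as:nondecreasing} for the transformed family, and then applying \Cref{prop:crc-generic} together with the OCE variational inequality. The only difference is that the paper also spells out the full verification that $\phi = \phi_\mathrm{base}\circ[\cdot]_+$ is a valid disutility (convexity, $\phi(0)=0$, $1\in\partial\phi(0)$), which you gesture at but do not detail; this is a well-posedness check rather than a gap in the bound itself.
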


\begin{proof}[Proof of \Cref{prop:conformal-oce-control-monotone}]
First, we show that $\phi$ is a valid disutility function. By definition of a disutility function, $\phi_\mathrm{base}$ is convex, closed, and nondecreasing. $\phi$ is convex because it is the composition of a convex, nondecreasing function $\phi_\mathrm{base}$ and a convex function $[\cdot]_+$. $\phi$ is nondecreasing because it is the composition of two nondecreasing functions. By \Cref{lemma:closed-continuous-composition}, $\phi$ is closed because it is the composition of a closed function $\phi_\mathrm{base}$ and a continuous function $[\cdot]_+$. Furthermore, $\phi(0) = \phi_\mathrm{base}([0]_+) = 0$. Finally, to show that $1 \in \partial\phi(0)$, we consider two cases. If $x < 0$, then $\phi(x) - \phi(0) = \phi(0) - \phi(0) = 0 > x$. If $x \geq 0$, then $\phi(x) - \phi(0) = \phi_\mathrm{base}(x) \geq x$ since $1 \in \partial\phi_\mathrm{base}(0)$. Thus, for all $x \in \R$, $\phi(x) - \phi(0) > x$, which shows that $1 \in \partial\phi(0)$. Therefore, $\phi$ is a valid disutility function.

Fix any $t \geq B(\lambda_{\min})$. Following \Cref{thm:oce-risk-control}, we define
\begin{align*}
    \tilde{L}_{i,t}(\lambda) &:= t + \phi(L_i(\lambda) - t)
        & \forall i \in [N+1] \\
    \tilde{B}_t(\lambda) &:= t + \phi(B(\lambda) - t).
\end{align*}

We will establish that $\tilde{L}_{i,t}$ is nondecreasing and left-continuous. By \Cref{as:monotonic}, $L_i$ is monotone in $\lambda$. If $L_i$ is nondecreasing in $\lambda$, then clearly $[L_i(\lambda) - t]_+$ is nondecreasing in $\lambda$ as well. If $L_i$ is decreasing in $\lambda$, observe that $L_i(\lambda_{\min}) \leq B(\lambda_{\min}) \leq t$ almost surely, so $[L_i(\lambda) - t]_+ = 0$ almost surely, for all $\lambda \in \Lambda$. Therefore, $[L_i(\lambda) - t]_+$ is almost surely nondecreasing in $\lambda$. By \Cref{as:crc-generic}, $L_i$ is left-continuous, so $[L_i(\lambda) - t]_+$ is likewise left-continuous in $\lambda$. $\phi_\mathrm{base}$ is also nondecreasing (by definition of disutility function) and left-continuous (by \Cref{lemma:phi-left-continuous}). The composition of two nondecreasing left-continuous functions remains nondecreasing and left-continuous, so $\phi(L_i(\lambda) - t) = \phi_\mathrm{base}([L_i(\lambda) - t]_+)$ is nondecreasing and left-continuous in $\lambda$. Therefore, $\tilde{L}_{i,t}$ is nondecreasing and left-continuous.

By a similar argument, $\tilde{B}_t$ is also nondecreasing and left-continuous. Furthermore, since $B(\lambda) \geq L_i(\lambda)$ almost surely, and $\phi$ is nondecreasing, we have $\tilde{B}_t(\lambda) \geq \tilde{L}_{i,t}(\lambda)$ almost surely.

Thus, $\tilde{B}_t$ and $\{\tilde{L}_{i,t}\}_{i=1}^{N+1}$ satisfy \Cref{as:crc-generic,as:nondecreasing}. Therefore, by \Cref{prop:crc-generic}, for every $\hat\lambda \in \hat\Lambda_t$,
\begin{align*}
    R[L_{N+1}(\hat\lambda)]
    &= \inf_{\hat{t} \in \R} \E\left[\hat{t} + \phi\left(L_{N+1}(\hat\lambda) - \hat{t}\right) \right] \\
    &\leq \E\left[ t + \phi\left( L_{N+1}(\hat\lambda) - t \right) \right]
    = \E\left[ \tilde{L}_{N+1,t}(\hat\lambda) \right] \\
    &\leq \alpha.
\end{align*}

We remark that if $t \leq \alpha$ (which requires $\alpha \geq B(\lambda_{\min})$), then $\hat\Lambda_t$ is almost surely nonempty. Note that $\tilde{B}_t(\lambda_{\min}) = t \leq \alpha$. Every $\tilde{L}_{i,t}$ is almost surely bounded above by $\tilde{B}_t$, so $\tilde{L}_{i,t}(\lambda_{\min}) \leq \alpha$ almost surely. Therefore, $\lambda_{\min} \in \hat\Lambda_t$ almost surely.

On the other hand, if $t > \alpha$, then $\hat\Lambda_t$ is almost surely empty. This is because for every $\lambda \in \Lambda$, we have
\[
    \tilde{B}_t(\lambda)
    \geq \tilde{L}_{i,t}(\lambda)
    \geq \tilde{L}_{i,t}(\lambda_{\min})
    \geq t > \alpha
    \quad\text{almost surely},
\]
so $\tilde{h}_t(\lambda) = \frac{1}{N+1} \left( \tilde{B}_t(\lambda) + \sum_{i=1}^N \tilde{L}_{i,t}(\lambda) \right) > \alpha$ almost surely. Therefore, for $t > \alpha$, \Cref{prop:conformal-oce-control-monotone} is almost surely vacuously true.

\end{proof}

\begin{proof}[Proof of \Cref{thm:conformal-cvar-control}]
\Cref{thm:conformal-cvar-control} follows as a corollary of \Cref{prop:conformal-oce-control-monotone}, where we set $\phi_\mathrm{base} = \phi_{\CVaR^\delta}$. Then, $\phi = \phi_{\CVaR^\delta}$:
\[
    \phi(x)
    := \phi_\mathrm{base}([x]_+)
    = \phi_{\CVaR^\delta}([x]_+)
    = \frac{1}{1-\delta}[[x]_+]_+
    = \frac{1}{1-\delta}[x]_+
    = \phi_{\CVaR^\delta}(x).
\]
\end{proof}

\subsection{Theorem~\ref{thm:lambda-gradient} and its Proof}
\label{appendix:lambda-gradient-proof}

We start by stating and proving several technical lemmas that will help in the proof of \Cref{thm:lambda-gradient}. Throughout this section, we abbreviate ``almost everywhere'' as ``a.e.''

\begin{lemma}
\label{lemma:L-tilde-convex}
If $L: \Lambda \to \R$ is convex and $\phi: \R \to \R \cup \{+\infty\}$ is convex and nondecreasing, then the function $\tilde{L}: \Lambda \times \R \to \R \cup \{+\infty\}$ defined by $\tilde{L}(\lambda, t) := t + \phi(L(\lambda) - t)$ is convex in $(\lambda, t)$.
\end{lemma}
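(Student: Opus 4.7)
The plan is to carry out the standard composition-of-convex-functions argument in two stages, first checking the inner argument is jointly convex, and then applying the rule that a nondecreasing convex function composed with a convex function is convex.

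First I would introduce the auxiliary function $g: \Lambda \times \R \to \R$ defined by $g(\lambda, t) := L(\lambda) - t$ and check that $g$ is jointly convex in $(\lambda, t)$. This follows directly from convexity of $L$ in $\lambda$ (applied to the first coordinate) combined with linearity (hence convexity) of $-t$ in $t$: for any $(\lambda_1,t_1), (\lambda_2,t_2) \in \Lambda \times \R$ and $\alpha \in [0,1]$, we have
\begin{align*}
g(\alpha(\lambda_1,t_1)+(1-\alpha)(\lambda_2,t_2))
&= L(\alpha\lambda_1+(1-\alpha)\lambda_2) - (\alpha t_1+(1-\alpha)t_2) \\
&\leq \alpha L(\lambda_1) + (1-\alpha)L(\lambda_2) - \alpha t_1 - (1-\alpha)t_2 \\
&= \alpha g(\lambda_1,t_1) + (1-\alpha)g(\lambda_2,t_2).
\end{align*}

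Next, I would invoke the standard composition rule. Since $\phi: \R \to \R \cup \{+\infty\}$ is convex and nondecreasing, and $g$ is convex and real-valued, the composition $\phi \circ g$ is convex: using monotonicity of $\phi$ first and then convexity of $\phi$,
\[
\phi(g(\alpha z_1 + (1-\alpha) z_2)) \leq \phi(\alpha g(z_1) + (1-\alpha) g(z_2)) \leq \alpha \phi(g(z_1)) + (1-\alpha)\phi(g(z_2)),
\]
where $z_i = (\lambda_i, t_i)$ and the extended-real arithmetic convention $\alpha \cdot (+\infty) = +\infty$ for $\alpha > 0$ makes both inequalities valid even when $\phi$ takes the value $+\infty$. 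Finally, $\tilde L(\lambda, t) = t + \phi(g(\lambda,t))$ is the sum of the convex function $\phi \circ g$ and the linear (hence convex) function $(\lambda, t) \mapsto t$, so $\tilde L$ is convex on $\Lambda \times \R$.

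There is no real obstacle here; the only subtlety worth flagging is the extended-real-valued nature of $\phi$, which is handled by the usual convention $\alpha\cdot(+\infty)=+\infty$ for $\alpha>0$ so that the convexity inequality remains meaningful when $\phi(g(z_i)) = +\infty$ for some $i$. Everything else is routine application of (i) joint convexity of sums of convex functions in disjoint arguments, (ii) the nondecreasing-convex composition rule, and (iii) closure of convexity under addition of linear terms.
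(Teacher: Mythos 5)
Your proof is correct and follows essentially the same route as the paper's: the paper's proof is a single chain of inequalities that first uses monotonicity of $\phi$ together with convexity of $L$ (your step showing $g(\lambda,t)=L(\lambda)-t$ is jointly convex), then convexity of $\phi$ (your composition step), and finally absorbs the linear term $t$. Your modular presentation via the nondecreasing-convex composition rule, and your explicit remark on the extended-real convention, are just a cleaner packaging of the identical argument.
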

\begin{proof}
Let $a \in [0, 1]$, and fix any $(\lambda_1, t_1), (\lambda_2, t_2) \in \Lambda \times \R$. Suppose $L$ is convex and $\phi$ is convex and nondecreasing. Then,
\begin{align*}
    & \tilde{L}(a\lambda_1 + (1-a)\lambda_2, a t_1 + (1-a) t_2) \\
    &= a t_1 + (1-a) t_2 + \phi(L(a\lambda_1 + (1-a)\lambda_2) - (a t_1 + (1-a) t_2)) \\
    &\leq a t_1 + (1-a) t_2 + \phi(a L(\lambda_1) + (1-a) L(\lambda_2) - (a t_1 + (1-a) t_2)) \\
    &= a t_1 + (1-a) t_2 + \phi(a (L(\lambda_1) - t_1) + (1-a) (L(\lambda_2) - t_2)) \\
    &\leq  a t_1 + (1-a) t_2 + a \phi(L(\lambda_1) - t_1) + (1-a) \phi(L(\lambda_2) - t_2) \\
    &= a \tilde{L}(\lambda_1, t_1) + (1-a) \tilde{L}(\lambda_2, t_2).
\end{align*}
\end{proof}

\begin{lemma}
\label{lemma:step-function-continuous-ae}
Suppose $L: \Theta \times \Lambda \to \R$ is a 0-1 step function in $\lambda$ of the form
\[
    L(\theta, \lambda) = \one[g(\theta) < \lambda]
\]
where $g: \Theta \to \R$ is a continuous function whose level sets have measure zero. Then, $L$ is continuous a.e. in $\theta$.
\end{lemma}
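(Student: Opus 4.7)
The plan is to fix $\lambda \in \Lambda$ and show that the map $\theta \mapsto L(\theta, \lambda) = \one[g(\theta) < \lambda]$ is continuous at every $\theta$ outside the level set $D_\lambda := \{\theta \in \Theta : g(\theta) = \lambda\}$, which has measure zero by hypothesis. This gives continuity a.e. in $\theta$ for each fixed $\lambda$.

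First, I would analyze the two cases based on the sign of $g(\theta) - \lambda$ for $\theta \notin D_\lambda$. If $g(\theta) < \lambda$, then by continuity of $g$, there exists an open neighborhood $U$ of $\theta$ on which $g(\theta') < \lambda$ for all $\theta' \in U$, so $L(\theta', \lambda) = 1$ on $U$, hence $L(\cdot, \lambda)$ is locally constant and thus continuous at $\theta$. Symmetrically, if $g(\theta) > \lambda$, a neighborhood exists on which $L(\cdot, \lambda) \equiv 0$, again giving continuity at $\theta$. Therefore the discontinuity set of $L(\cdot, \lambda)$ is contained in $D_\lambda$.

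Finally, since the level set $D_\lambda$ has measure zero by assumption, $L(\cdot, \lambda)$ is continuous on a set of full measure in $\Theta$, which is the claim. There is no genuinely hard step here; the only thing to be careful about is interpreting ``continuous a.e. in $\theta$'' as ``for each fixed $\lambda$, the function $\theta \mapsto L(\theta,\lambda)$ is continuous outside a measure-zero subset of $\Theta$,'' which is exactly what the above argument establishes via the continuity of $g$ combined with the measure-zero level set hypothesis.
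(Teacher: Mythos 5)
Your proposal is correct and follows essentially the same argument as the paper: the paper fixes $\lambda$, notes that $U = g^{-1}((-\infty,\lambda))$ and $V = g^{-1}((\lambda,\infty))$ are open by continuity of $g$, observes $L(\cdot,\lambda)$ is constant on each, and concludes that the discontinuity set is contained in the measure-zero level set $g^{-1}(\{\lambda\})$. Your pointwise-neighborhood phrasing is just a local restatement of the same open-preimage argument.
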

\begin{proof}
Fix $\lambda \in \R$, and define the preimages $U = g^{-1}((-\infty, \lambda))$, $V = g^{-1}((\lambda, \infty))$, and $W = g^{-1}(\{\lambda\})$. Clearly, $\Theta = U \cup V \cup W$. Since the level sets of $g$ have measure zero, $W$ has measure 0.

Since $g$ is continuous, $U$ and $V$ are open sets. For any $\theta \in U$, $L(\theta, \lambda) = 1$ is constant, so $L$ is continuous on $U$. Likewise, $L$ is constant and hence continuous on $V$.

Thus, $L$ is continuous in $\theta$ everywhere except on $W$, a set of measure 0.
\end{proof}

\begin{lemma}
\label{lemma:L-tilde-piecewise-const}
Suppose $L: \Theta \times \Lambda \to \R$ is piecewise constant, left-continuous, and nondecreasing in $\lambda$ with the form
\[
    L(\theta,\lambda) = c_0(\theta) + \sum_{j\in\N} c_j(\theta) \cdot \one[g_j(\theta) < \lambda]
\]
where $(c_j: \Theta \to \R)_{j \in \Z_+}$ and $(g_j: \Theta \to \R)_{j \in \N}$ are continuous functions, $(c_j(\theta))_{j \in \N}$ are nonnegative, and $(g_j(\theta))_{j \in \N}$ is nondecreasing. Assume that the level sets of $(g_j)_{j \in \N}$ have measure zero.

Let $\phi: \R \to \R$ be an OCE disutility function, and let $t \in \R$. Then, the function $\tilde{L}: \Theta \times \Lambda \to \R$ defined as
\[
    \tilde{L}(\theta, \lambda)
    := t + \phi(L(\theta, \lambda) - t)
    = t + \phi\left( c_0(\theta) - t + \sum_{j\in\N} c_j(\theta) \cdot \one[g_j(\theta) < \lambda] \right)
\]
is continuous a.e. in $\theta$, piecewise-constant left-continuous nondecreasing in $\lambda$, and can be written in the same form as $L$, i.e.,
\[
    \tilde{L}(\theta, \lambda)
    = \tilde{c}_0(\theta) + \sum_{j\in\N} \tilde{c}_j(\theta) \cdot \one[g_j(\theta) < \lambda]
\]
where $(\tilde{c}_j)_{j \in \Z_+}$ are continuous functions and $(\tilde{c}_j(\theta))_{j \in \N}$ are nonnegative.
\end{lemma}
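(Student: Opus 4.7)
The plan is to decompose $L$ along its jump structure, push the transformation $x \mapsto t + \phi(x - t)$ through to the level values, and reconstruct $\tilde{L}$ in the same form by telescoping. Since the $g_j(\theta)$ are nondecreasing in $j$ and each indicator $\mathbf{1}[g_j(\theta) < \lambda]$ is left-continuous in $\lambda$, at each fixed $\theta$ the function $L(\theta, \cdot)$ takes on the partial-sum value $S_k(\theta) := c_0(\theta) + \sum_{j=1}^k c_j(\theta)$ on the interval $(g_k(\theta), g_{k+1}(\theta)]$ (with the conventions $g_0 := -\infty$ and $g_{\infty} := +\infty$).

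First, I would define the transformed level values $\tilde{S}_k(\theta) := t + \phi(S_k(\theta) - t)$ and set $\tilde{c}_0(\theta) := \tilde{S}_0(\theta)$, and for $k \geq 1$, $\tilde{c}_k(\theta) := \tilde{S}_k(\theta) - \tilde{S}_{k-1}(\theta)$. Nonnegativity of $\tilde{c}_k$ for $k \geq 1$ follows because $S_k(\theta) \geq S_{k-1}(\theta)$ (as $c_k(\theta) \geq 0$) and $\phi$ is nondecreasing. Continuity of each $\tilde{c}_k$ follows because each $c_j$ is continuous and $\phi: \R \to \R$ is convex and real-valued, hence continuous on all of $\R$.

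Second, I would verify the claimed form. For any $\theta \in \Theta$ and any $\lambda \in (g_J(\theta), g_{J+1}(\theta)]$, telescoping yields
\[
    \tilde{c}_0(\theta) + \sum_{j \in \N} \tilde{c}_j(\theta) \cdot \one[g_j(\theta) < \lambda]
    = \tilde{S}_0(\theta) + \sum_{j=1}^J \bigl(\tilde{S}_j(\theta) - \tilde{S}_{j-1}(\theta)\bigr)
    = \tilde{S}_J(\theta)
    = t + \phi(L(\theta, \lambda) - t)
    = \tilde{L}(\theta, \lambda).
\]
From this representation, piecewise-constant, left-continuous, and nondecreasing behavior of $\tilde{L}(\theta, \cdot)$ follow immediately from the corresponding properties of the indicator $\one[g_j(\theta) < \lambda]$ together with $\tilde{c}_j(\theta) \geq 0$ for $j \geq 1$.

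Finally, to establish continuity a.e.\ in $\theta$, I would apply \Cref{lemma:step-function-continuous-ae} to each indicator $\one[g_j(\theta) < \lambda]$: it is continuous in $\theta$ off the level set $\{\theta : g_j(\theta) = \lambda\}$, which has measure zero by hypothesis. A countable union of measure-zero sets is measure zero, so outside a null set every indicator and every $\tilde{c}_j$ is continuous, hence $\tilde{L}$ is. The main subtlety I anticipate is handling the potentially infinite sum rigorously: because $L(\theta, \lambda)$ is assumed real-valued, for any fixed $\lambda$ and a.e.\ $\theta$ only finitely many indices $j$ satisfy $g_j(\theta) < \lambda$ (or else the tail sum of the nonnegative $c_j(\theta)$ must converge), so the series collapses to a finite telescoping sum in a neighborhood of each regular point, making the continuity and form-preservation arguments local and well-defined.
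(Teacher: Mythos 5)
Your proposal is correct and follows essentially the same route as the paper: your transformed level values $\tilde{S}_k(\theta) = t + \phi(S_k(\theta) - t)$ are exactly the paper's $c_k'(\theta)$, and the paper likewise defines $\tilde{c}_j$ as successive differences, argues nonnegativity from monotonicity of $\phi$, continuity from real-valued convexity of $\phi$, and a.e.\ continuity in $\theta$ via \Cref{lemma:step-function-continuous-ae} and closure of a.e.\ continuity under countable sums. Your extra remark about the series collapsing to a finite sum near regular points is a welcome bit of added rigor but does not change the argument.
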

\begin{proof}
For all $k \in \N$, define
\[
    c'_k(\theta) := t + \phi\left(c_0(\theta) - t + \sum_{j=1}^k c_j(\theta) \right).
\]
Since $\phi$ is a nondecreasing function and $(c_j(\theta))_{j \in \N}$ are nonnegative, $(c_j'(\theta))_{j \in \Z_+}$ is a nondecreasing sequence of reals. $(c_j)_{j \in \Z_+}$ are continuous by assumption, and $\phi$ is continuous by \Cref{lemma:phi-left-continuous}, so $(c_j')_{j \in \Z_+}$ are continuous functions.

Observe that $\tilde{L}$ is also piecewise-constant left-continuous nondecreasing in $\lambda$:
\[
    \tilde{L}(\theta, \lambda) = \begin{cases}
        c_0'(\theta), & \lambda \in (-\infty, g_1(\theta)] \\
        c_k'(\theta), & \lambda \in (g_k(\theta), g_{k+1}(\theta)].
    \end{cases}
\]
Thus, we can write
\[
    \tilde{L}(\theta, \lambda)
    = \tilde{c}_0(\theta) + \sum_{j \in \N} \tilde{c}_j(\theta) \cdot \one[g_j(\theta) < \lambda],
\]
where
\[
    \tilde{c}_0(\theta) = c_0'(\theta),
    \qquad
    \tilde{c}_j(\theta) := c_j'(\theta) - c_{j-1}'(\theta)
    \quad\forall j \in \N.
\]
Because $(c_j'(\theta))_{j \in \Z_+}$ are nondecreasing, $(\tilde{c}_j(\theta))_{j \in \N}$ are nonnegative. Furthermore, $(\tilde{c}_j)_{j \in \Z_+}$ inherit continuity from $(c_j')_{j \in \Z_+}$.

By \Cref{lemma:step-function-continuous-ae}, each $\one[g_j(\theta) < \lambda]$ is continuous a.e. in $\theta$. The product of continuous a.e. functions is continuous a.e., so each term $\tilde{c}_j(\theta) \cdot \one[g_j(\theta) < \lambda]$ is continuous a.e. in $\theta$. The countable sum of continuous a.e. functions remains continuous a.e., so $\tilde{L}$ is continuous a.e. in $\theta$.
\end{proof}

Now, we are ready to state the formal assumptions needed for \Cref{thm:lambda-gradient}.
\\

\begin{assumption}
\label{as:L-piecewise-constant-differentiable}
The functions $B$ and $\{L_i\}_{i=1}^N$ are of the form
\[
    B(\lambda)
    = b_0 + \sum_{j \in \N} b_j \cdot \one[d_j < \lambda],\qquad
    L_i(\theta,\lambda) = c_{i,0}(\theta) + \sum_{j \in \N} c_{i,j}(\theta) \cdot \one[g_{i,j}(\theta) < \lambda]
\]
where
\begin{enumerate}[nosep,leftmargin=2em]
    \item $(c_{i,j}: \Theta \to \R)_{i \in [N], j \in \Z_+}$ and $(g_{i,j}: \Theta \to \R)_{i \in [N],j \in \N}$ are continuous functions;
    \item $(b_j \in \R)_{j \in \N}$ and $(c_{i,j}(\theta))_{i \in [N], j \in \N}$ are nonnegative;
    \item $(d_j \in \R)_{j \in \N}$ and $(g_{i,j}(\theta))_{j \in \N}$ for all $i$ are nondecreasing sequences, and $\{d_j\}_{j \in \N} \cup \{g_{i,j}(\theta)\}_{i \in [N], j \in \N}$ are all unique; and
    \item $(g_{i,j})_{i \in [N],j \in \N}$ are differentiable a.e., and all of their level sets have measure zero.
\end{enumerate}
\end{assumption}

\begin{assumption}
\label{as:L-convex-differentiable}
Suppose that the inner problem in \eqref{eq:bilevel} exhibits strong duality (\textit{e.g.}, Slater's condition holds). Let $\mu(\theta)$ denote the optimal Lagrange multiplier arising from the dual problem to \eqref{eq:bilevel}. Define
\[
    \tilde{\ell}(\theta, \lambda) := \begin{cases}
        \lambda,
            & \text{if $\sum_{i=1}^N \ell_i$ is strictly increasing in $\lambda$} \\
        -\lambda,
            & \text{if $\sum_{i=1}^N \ell_i$ is strictly decreasing in $\lambda$} \\
        \frac{1}{N} \sum_{i=1}^N \ell_i(\theta, \lambda),
            & \text{if $\sum_{i=1}^N \ell_i$ is strictly convex in $\lambda$}
    \end{cases}.
\]

The following regularity conditions hold:
\begin{enumerate}[nosep,leftmargin=2em]
    \item $\pderiv{}{\lambda} \tilde\ell$ exists and is continuously differentiable in $(\theta, \lambda)$ in a neighborhood around $(\theta, \lambda(\theta))$;
    \item $B \cup \{L_i\}_{i=1}^N$ are twice continuously differentiable in $(\theta, \lambda)$ in a neighborhood around $(\theta, \lambda(\theta))$;
    \item $\phi$ is twice continuously differentiable in neighborhoods around $B(\lambda(\theta)) - t$ and $L_i(\theta, \lambda(\theta)) - t$ for each $i \in [N]$; and
    \item $\begin{bmatrix}
    \pderiv[2]{}{\lambda} \tilde\ell(\theta, \lambda(\theta)) + \mu(\theta) \cdot \pderiv[2]{}{\lambda} \tilde{h}_t(\theta, \lambda(\theta))
        & \pderiv{\tilde{h}_t}{\lambda}(\theta, \lambda(\theta)) \\
    \mu(\theta) \cdot \pderiv{\tilde{h}_t}{\lambda}(\theta, \lambda(\theta))
        & \tilde{h}_t(\theta, \lambda(\theta)) - \alpha
\end{bmatrix}$ is invertible.
\end{enumerate}
\end{assumption}

With these assumptions stated, we now state the full version of \Cref{thm:lambda-gradient}.

\begin{theorem}
\label{thm:lambda-gradient}
Let $\phi: \R \to \R$ be an OCE disutility function. Suppose \Cref{as:crc-generic,as:nondecreasing} hold and that the inner problem in \eqref{eq:bilevel} is feasible.
\begin{enumerate}[nosep,leftmargin=2em,label=(\roman*)]
    \item Suppose $\sum_{i=1}^N \ell_i$ is strictly decreasing in $\lambda$; the set $\{\theta \in \Theta \mid \tilde{h}_t(\theta, \lambda) = \alpha\}$ has measure zero for all $\lambda$; and $\{B\} \cup \{L_i\}_{i=1}^N$ are piecewise constant in $\lambda$. Under the standard differentiability and regularity conditions in \Cref{as:L-piecewise-constant-differentiable}, we may obtain a closed-form expression for $\deriv{\lambda}{\theta}$ a.e.
    \item Suppose $\sum_{i=1}^N \ell_i$ is strictly convex or strictly monotone in $\lambda$, and $\{B\} \cup \{L_i\}_{i=1}^N$ are convex in $\lambda$. Under the standard differentiability and regularity conditions in \Cref{as:L-convex-differentiable}, the derivative $\deriv{\lambda}{\theta}(\theta)$ exists and has a closed-form expression.
\end{enumerate}
\end{theorem}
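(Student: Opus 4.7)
The plan is to prove the two parts separately, since they rely on different mechanisms.

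\textbf{Part (i).} The key observation is that since $B$ and each $L_i$ is a left-continuous, nondecreasing step function in $\lambda$ with jumps at $\{d_j\}$ and $\{g_{i,j}(\theta)\}$, Lemma~\ref{lemma:L-tilde-piecewise-const} implies that each $\tilde L_{i,t}$ and $\tilde B_t$ remains a step function of the same form. Hence $\tilde h_t(\theta, \cdot)$ is piecewise constant, left-continuous, and nondecreasing, with jumps only at values in the set $J(\theta) := \{d_j\} \cup \{g_{i,j}(\theta)\}_{i,j}$, which by \Cref{as:L-piecewise-constant-differentiable}(3) are all distinct. Since $\sum_i \ell_i$ is strictly decreasing in $\lambda$, the inner minimizer must be the supremum of the feasible set: $\lambda(\theta) = \sup\{\lambda \in \Lambda \mid \tilde h_t(\theta, \lambda) \leq \alpha\}$. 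Because $\tilde h_t$ is left-continuous and nondecreasing, this supremum is attained at some jump point $g_{i^*, j^*}(\theta)$ (or at some $d_{j^*}$, in which case the derivative vanishes). The measure-zero regularity assumption rules out $\tilde h_t(\theta, \lambda) = \alpha$ happening exactly at a jump, so for a.e. $\theta$ the identity $(i^*, j^*)$ remains constant under small perturbations of $\theta$ (using continuity of the $g_{i,j}$ and the coefficients $c_{i,j}$ to preserve both the relative order of jump locations and the values of $\tilde h_t$ at those jumps). Therefore $\deriv{\lambda}{\theta}(\theta) = \deriv{}{\theta} g_{i^*, j^*}(\theta)$ a.e., giving the desired closed-form.

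\textbf{Part (ii).} Here I would appeal to sensitivity analysis of the convex inner program. By \Cref{lemma:L-tilde-convex}, $\tilde h_t(\theta, \cdot)$ is convex in $\lambda$, and strict convexity or strict monotonicity of $\sum_i \ell_i$ (equivalently, of $\tilde\ell$) gives uniqueness of $\lambda(\theta)$. Under Slater's condition, strong duality holds with a unique multiplier $\mu(\theta) \geq 0$, and the KKT system
\begin{equation*}
F(\theta, \lambda, \mu) :=
\begin{pmatrix}
\pderiv{\tilde\ell}{\lambda}(\theta, \lambda) + \mu \cdot \pderiv{\tilde h_t}{\lambda}(\theta, \lambda) \\[0.3em]
\mu \cdot \bigl(\tilde h_t(\theta, \lambda) - \alpha\bigr)
\end{pmatrix} = 0
\end{equation*}
characterizes $(\lambda(\theta), \mu(\theta))$. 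The smoothness conditions in \Cref{as:L-convex-differentiable}(1--3) ensure $F$ is continuously differentiable in a neighborhood of $(\theta, \lambda(\theta), \mu(\theta))$, and the Jacobian $\pderiv{F}{(\lambda, \mu)}$ is precisely the matrix whose invertibility is postulated in condition (4). The implicit function theorem then produces
\begin{equation*}
\deriv{}{\theta}\begin{pmatrix}\lambda(\theta) \\ \mu(\theta)\end{pmatrix} = -\left[\pderiv{F}{(\lambda, \mu)}\right]^{-1} \pderiv{F}{\theta},
\end{equation*}
from which $\deriv{\lambda}{\theta}(\theta)$ can be read off in closed form; equivalently, this is differentiation through the KKT conditions of a convex program in the sense of \cite{agrawal_differentiable_2019}.

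\textbf{Main obstacle.} Part (ii) is largely bookkeeping around the implicit function theorem, once its hypotheses are packaged into \Cref{as:L-convex-differentiable}. The trickier piece is part (i): the binding-index pair $(i^*, j^*)$ is a combinatorial object, not a smooth one, and one must combine continuity of the $g_{i,j}$, the uniqueness of jump locations in \Cref{as:L-piecewise-constant-differentiable}(3), and the measure-zero hypothesis on $\{\theta : \tilde h_t(\theta, \lambda) = \alpha\}$ to argue that $(i^*, j^*)$ is locally constant in $\theta$ off a negligible set. One must also handle the boundary cases where the supremum is attained at $\lambda_{\max}$ or at a $\theta$-independent jump $d_{j^*}$ (both of which trivially contribute zero derivative), and rule out $\lambda(\theta) = \lambda_{\min}$ via feasibility of the inner problem.
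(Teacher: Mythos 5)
Your proposal is correct and follows essentially the same route as the paper's proof: part (i) reduces the inner problem to maximizing $\lambda$ over the sublevel set of the piecewise-constant $\tilde{h}_t$ (via \Cref{lemma:L-tilde-piecewise-const}), identifies $\lambda(\theta)$ with a jump location that is locally stable in $\theta$ off a null set, and part (ii) differentiates the KKT system of the convexified problem (via \Cref{lemma:L-tilde-convex} and the implicit function theorem) exactly as in \Cref{appendix:lambda-gradient-proof}. No substantive differences to report.
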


\begin{proof}
\textbf{Setting (i).}\quad
By \Cref{lemma:L-tilde-piecewise-const}, each $\tilde{L}_{i,t}(\theta, \lambda) := t + \phi(L_i(\theta, \lambda) - t)$ is piecewise-constant left-continuous nondecreasing in $\lambda$ and can be written in the form
\[
    \tilde{L}_{i,t}(\theta, \lambda)
    = \tilde{c}_{i,0}(\theta) + \sum_{j \in \N} \tilde{c}_{i,j}(\theta) \cdot \one[g_{i,j}(\theta) < \lambda],
\]
where $(\tilde{c}_{i,j})_{j \in \Z_+}$ are continuous functions and $(\tilde{c}_{i,j}(\theta))_{j \in \N}$ are nonnegative. Similarly, $\tilde{B}_t$ is piecewise-constant left-continuous nondecreasing and can be written in the form
\[
    \tilde{B}_t(\lambda)
    = \tilde{b}_0 + \sum_{j \in \N} \tilde{b}_j \cdot \one[d_j < \lambda]
\]
where $(\tilde{b}_j)_{j \in \N}$ are nonnegative. Thus, $\tilde{h}_t(\theta, \lambda) := \frac{1}{N+1} \left( \tilde{B}_t(\lambda) + \sum_{i=1}^N \tilde{L}_{i,t}(\theta, \lambda) \right)$ is piecewise-constant left-continuous nondecreasing in $\lambda$ and can be written in the form
\[
    \tilde{h}_t(\theta, \lambda)
    = c_0^{(h)}(\theta) + \sum_{j \in \N} c_j^{(h)}(\theta) \cdot \one[g_j^{(h)}(\theta) < \lambda].
\]
The sequence of steps is given by
\[
    (g_j^{(h)}(\theta))_{j \in \N} = \mathrm{sort\_ascending}(\{d_j\}_{j \in \N} \cup \{g_{i,j}(\theta)\}_{i \in [N], j \in \N}),
\]
which yields a strictly increasing sequence because $\{d_j\}_{j \in \N} \cup \{g_{i,j}(\theta)\}_{i \in [N], j \in \N}$ are all unique by assumption. The nonnegative coefficients $(c_j^{(h)}(\theta))_{j \in \Z_+}$ are defined appropriately in terms of $(\tilde{b}_j)_{j \in \Z_+}$ and $(\tilde{c}_{i,j})_{i \in [N], j \in \Z_+}$, and $(c_j^{(h)})_{j \in \Z_+}$ inherit their continuity.

The functions $(g_j^{(h)})_{j \in \N}$ are differentiable a.e. and continuous; these properties are inherited from $(d_j)_{j \in \N}$ (which are constant) and $(g_{i,j})_{i \in [N],j\in\N}$ (which are differentiable a.e. and continuous by assumption).

By \Cref{lemma:L-tilde-piecewise-const}, each $\tilde{L}_{i,t}$ is continuous a.e. in $\theta$. $\tilde{B}_t$ is constant in $\theta$, and therefore also continuous in $\theta$. $\tilde{h}_t$ is a countable, weighted sum of $\tilde{B}_t$ and $\tilde{L}_{i,t}$ for $i \in [N]$, so it is continuous a.e. in $\theta$.

Recall from \eqref{eq:bilevel} that
\[
    \lambda(\theta)
    := \argmin_{\lambda \in \Lambda} \frac{1}{N} \sum_{i=1}^N \ell_i(\theta, \lambda)
    \ \ \text{s.t. } \tilde{h}_t(\theta, \lambda) \leq \alpha.
\]
Because each $\ell_i$ is strictly decreasing in $\lambda$, we may equivalently write the optimization problem as maximizing $\lambda$ subject to the constraint on $\tilde{h}_t$ and leave the optimal solution $\lambda(\theta)$ unaffected:
\[
    \lambda(\theta) = \max_{\lambda \in \Lambda} \lambda
    \ \ \text{s.t. } \tilde{h}_t(\theta, \lambda) \leq \alpha.
\]

By assumption, the optimization problem is feasible and $\tilde{h}_t(\theta,\lambda) \neq \alpha$ $\theta$-almost everywhere. Thus, for all $\theta$ except on a negligible set, one of the two following cases must hold:
\begin{enumerate}[left=0.5em]
\item $\tilde{h}_t(\theta, \lambda) < \alpha$ for every $\lambda \in \Lambda$.

Then, $\lambda(\theta) = \max\Lambda$. We showed above that $\tilde{h}_t(\theta, \max\Lambda)$ is continuous a.e. in $\theta$. Thus, for $\theta$ a.e., there exists a neighborhood $\Ncal(\theta)$ around $\theta$ for which $\tilde{h}_t(\theta', \max\Lambda) < \alpha$ for all $\theta' \in \Ncal(\theta)$. Thus, $\lambda(\theta) = \max \Lambda$ for all $\theta' \in \Ncal(\theta)$, so $\deriv{\lambda}{\theta}(\theta) = \deriv{}{\theta} \max\Lambda = 0$.

\item There exists an index $k \in \N$ such that
\begin{align*}
    c_0^{(h)}(\theta) + \sum_{j=1}^k c_j^{(h)}(\theta) \cdot \one[g_j^{(h)}(\theta) < \lambda(\theta)]
    &< \alpha, \\
    c_0^{(h)}(\theta) + \sum_{j=1}^{k+1} c_j^{(h)}(\theta) \cdot \one[g_j^{(h)}(\theta) < \lambda(\theta)]
    &> \alpha.
\end{align*}
In this case, because the objective seeks to maximize $\lambda$ and $(g_j^{(h)}(\theta))_{j \in \N}$ is a strictly increasing sequence, $\lambda(\theta) = g_{k+1}^{(h)}(\theta)$.

As shown above, $g_{k+1}^{(h)}$ is differentiable a.e. Now, suppose that $g_{k+1}^{(h)}$ is differentiable at $\theta$. Consider any entry $\theta_q$ of the parameter vector $\theta$, and let $e_q$ denote the standard unit basis vector along coordinate $q$. Then the partial derivative of $\lambda$ with respect to $\theta_q$ is
\begin{align*}
    \pderiv{\lambda}{\theta_q}(\theta)
    &= \lim_{s \to 0}
    \frac{\lambda(\theta + s e_q) - \lambda(\theta)}{s} \\
    &= \lim_{s \to 0} \frac{\lambda(\theta + s e_q) - g_{k+1}^{(h)}(\theta)}{s} \\
    &= \lim_{s \to 0} \frac{g_{k+1}^{(h)}(\theta + s e_q) - g_{k+1}^{(h)}(\theta)}{s} \\
    &= \pderiv{g_{k+1}^{(h)}}{\theta_q}(\theta).
\end{align*}
The third equality comes from observing that $(c_j^{(h)})_{j \in \Z_+}$ and $(g_j^{(h)})_{j \in \N}$ are continuous functions, so there exists an $\epsilon > 0$ such that for all $s \in [0, \epsilon]$,
\begin{align*}
    c_0^{(h)}(\theta+se_q) + \sum_{j=1}^k c_j^{(h)}(\theta+se_q) \cdot \one[g_j^{(h)}(\theta+se_q) < \lambda(\theta+se_q)]
    &< \alpha \\
    c_0^{(h)}(\theta+se_q) + \sum_{j=1}^{k+1} c_j^{(h)}(\theta+se_q) \cdot \one[g_j^{(h)}(\theta+se_q) < \lambda(\theta+se_q)]
    &> \alpha.
\end{align*}
Thus, for all $s \in [0,\epsilon]$, $\lambda(\theta + s e_q) = g_{k+1}^{(h)}(\theta + s e_q)$.

Since $\pderiv{\lambda}{\theta_q} = \pderiv{g_{k+1}^{(h)}}{\theta_q}$ for all coordinates $q$, we have $\deriv{\lambda}{\theta}(\theta) = \deriv{g_{k+1}^{(h)}}{\theta}(\theta)$.
\end{enumerate}

Therefore, we have derived a closed-form expression for $\deriv{\lambda}{\theta}(\theta)$ almost everywhere.

\textbf{Setting (ii).}\quad
Since $\phi$ is convex and nondecreasing, $\{\tilde{B}_t\} \cup \{\tilde{L}_{i,t}\}_{i=1}^N$ are convex in $\lambda$ by \Cref{lemma:L-tilde-convex}. Therefore, $\tilde{h}_t$ is convex in $\lambda$.

Recall from \eqref{eq:bilevel} that
\[
    \lambda(\theta)
    := \argmin_{\lambda \in \Lambda} \frac{1}{N} \sum_{i=1}^N \ell_i(\theta, \lambda)
    \ \ \text{s.t. } \tilde{h}_t(\theta, \lambda) \leq \alpha.
\]
If $\sum_{i=1}^N \ell_i$ is strictly increasing (or decreasing) in $\lambda$ but not strictly convex in $\lambda$, observe that we may replace the objective $\frac{1}{N} \sum_{i=1}^N \ell_i$ with simply $\lambda$ (or $-\lambda$) in \eqref{eq:bilevel} without changing the solution $\lambda(\theta)$. Concretely, let
\[
    \tilde{\ell}(\theta, \lambda) := \begin{cases}
        \lambda,
            & \text{if $\sum_{i=1}^N \ell_i$ is strictly increasing in $\lambda$} \\
        -\lambda,
            & \text{if $\sum_{i=1}^N \ell_i$ is strictly decreasing in $\lambda$} \\
        \frac{1}{N} \sum_{i=1}^N \ell_i(\theta, \lambda),
            & \text{if $\sum_{i=1}^N \ell_i$ is strictly convex in $\lambda$}
    \end{cases}
\]
Then,
\begin{equation}
\label{eq:lambda-theta-convex-unique-soln}
    \lambda(\theta) = \argmin_{\lambda \in \Lambda} \tilde\ell(\theta, \lambda)
    \ \ \text{s.t. } \tilde{h}_t(\theta, \lambda) \leq \alpha.
\end{equation}

Since $\lambda(\theta)$ is the solution to a convex optimization problem whose objective $\tilde\ell$ is either strictly convex or convex and strictly monotone, the solution is unique. Every convex optimization problem can be equivalently reformulated as a convex conic problem \cite{agrawal_differentiable_2019}. Then, the implicit function theorem, applied to the KKT equality conditions, gives sufficient conditions for $\deriv{\lambda}{\theta}(\theta)$ to exist \cite{amos_optnet_2017,agrawal_differentiating_2019,agrawal_differentiable_2019}.

Specifically, define the Lagrangian $\Lcal: \Lambda \times \R \times \Theta \to \R$ and the function $g: \Lambda \times \R \times \Theta \to \R^2$ by
\begin{align*}
    \Lcal(\lambda, \mu, \theta)
    &:= \tilde\ell(\theta,\lambda) + \mu \cdot (\tilde{h}_t(\theta,\lambda) - \alpha)
    \\
    g(\lambda, \mu, \theta)
    &:= \begin{bmatrix}
        \pderiv{\Lcal}{\lambda}(\lambda, \mu, \theta) \\[0.5em]
        \mu \cdot (\tilde{h}_t(\theta, \lambda) - \alpha)
    \end{bmatrix}.
\end{align*}
The KKT equality conditions for primal-dual optimality of $(\lambda(\theta), \mu(\theta))$ are precisely given by
\[
    g(\lambda(\theta), \mu(\theta), \theta) = \zero.
\]
By the implicit function theorem \cite{barratt_differentiability_2019}, if
\begin{enumerate}[nosep,leftmargin=2em,label=(\alph*)]
    \item \eqref{eq:lambda-theta-convex-unique-soln} satisfies strong duality (\textit{e.g.}, if Slater's condition holds);
    \item $g(\lambda(\theta), \mu(\theta), \theta) = \zero$;
    \item $g$ is continuous differentiable in $(\lambda, \mu, \theta)$ in a neighborhood around $(\lambda(\theta), \mu(\theta), \theta)$; and
    \item $\pderiv{g}{(\lambda,\mu)}(\lambda(\theta), \mu(\theta), \theta)$ is invertible,
\end{enumerate}
then the derivative $\deriv{\lambda}{\theta}(\theta)$ exists and is given by
\[
    \begin{bmatrix}
        \deriv{\lambda}{\theta}(\theta) \\
        \deriv{\mu}{\theta}(\theta)
    \end{bmatrix}
    = -\left[ \pderiv{g}{(\lambda,\mu)}(\lambda(\theta), \mu(\theta), \theta) \right]^{-1} \pderiv{g}{\theta}(\lambda(\theta), \mu(\theta), \theta).
\]

\Cref{as:L-convex-differentiable} directly satisfies condition (a), and the KKT equality conditions satisfy condition (b). Condition (c) is satisfied if $\tilde{h}_t$, $\pderiv{\tilde\ell}{\lambda}$, and $\pderiv{\tilde{h}_t}{\lambda}$ are continuously differentiable in $(\theta, \lambda)$ in a neighborhood around $(\theta, \lambda(\theta))$. \Cref{as:L-convex-differentiable}.1 guarantees that this holds for $\pderiv{\tilde\ell}{\lambda}$, and Assumptions \ref{as:L-convex-differentiable}.2/\ref{as:L-convex-differentiable}.3 guarantee that this holds for $\tilde{h}_t$ and $\pderiv{h_t}{\lambda}$.

Finally, note that
\[
    \pderiv{g}{(\lambda,\mu)}(\lambda(\theta), \mu(\theta), \theta)
    = \begin{bmatrix} \pderiv[2]{}{\lambda} \tilde\ell(\theta, \lambda(\theta)) + \mu(\theta) \cdot \pderiv[2]{}{\lambda} \tilde{h}_t(\theta, \lambda(\theta))
        & \pderiv{\tilde{h}_t}{\lambda}(\theta, \lambda(\theta)) \\
    \mu(\theta) \cdot \pderiv{\tilde{h}_t}{\lambda}(\theta, \lambda(\theta))
        & \tilde{h}_t(\theta, \lambda(\theta)) - \alpha
    \end{bmatrix},
\]
which \Cref{as:L-convex-differentiable}.4 specifies is invertible, thus satisfying condition (d).
\end{proof}

We now briefly remark on \Cref{thm:lambda-gradient}(i)'s use of \Cref{as:L-piecewise-constant-differentiable}.4 and its relevance in practice. Recall that \Cref{as:L-piecewise-constant-differentiable}.4 assumes $g_{i,j}: \Theta \to \R$ is differentiable a.e. and its level sets have measure zero. The following two lemmas show that both of these conditions are easily satisfied if $g_{i,j}$ is a neural network with Lipschitz continuous activation functions and a bias term in its final output layer. For instance, in \Cref{ex:tumor-fnr-e2ecrc}, $g_{i,j}(\theta) = f_\theta(X_i)_j$ is the output of a neural network $f_\theta$ (albeit a neural network with a more complex U-Net \cite{ronneberger_u-net_2015,fan_pranet_2020} architecture than a MLP).

\begin{lemma}
A multi-layer perceptron (MLP) with Lipschitz continuous activation functions is locally Lipschitz continuous and therefore differentiable a.e. with respect to its parameters.

Specifically, let $g_K: \Theta \to \R^n$ denote a $K$-layer MLP for some $K \in \N$, defined recursively as
\begin{align*}
    g_k(\theta)
    &:= g_k(\theta_{1:k}) := \sigma_k(W_k \cdot g_{k-1}(\theta_{1:k-1}) + b_k) & \forall k \in [K] \\
    g_0 &\in \R^d \text{ is the fixed input to the MLP}.
\end{align*}
Here, $(W_k, b_k)$ are the weight matrix and bias vector in each layer $k$, and we use the shorthand notation $\theta_{1:k} := (W_{1:k}, b_{1:k})$, and $\theta := \theta_{1:K}$. If each $\sigma_k$ is a Lipschitz continuous activation function, then $g_K$ is locally Lipschitz continuous and therefore differentiable a.e. in $\theta$.
\end{lemma}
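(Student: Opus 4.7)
The plan is to establish local Lipschitz continuity by induction on the layer index $k$, and then invoke Rademacher's theorem to conclude differentiability almost everywhere. The two ingredients needed are (a) that local Lipschitz continuity is preserved under composition, and (b) that the bilinear map $(W, v) \mapsto Wv$ is locally Lipschitz on any bounded subset of its domain.

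First, I would record the closure properties: if $f$ is locally Lipschitz on $\R^{d_1}$ and $h$ is locally Lipschitz on $\R^{d_2}$ with $h(\R^{d_2}) \subseteq \R^{d_1}$, then $f \circ h$ is locally Lipschitz on $\R^{d_2}$ (by restricting to a compact neighborhood of a point, using local boundedness of $h$ to find a compact neighborhood of $h(\theta)$, and concatenating Lipschitz constants). Moreover, any globally Lipschitz function (such as each $\sigma_k$) is locally Lipschitz. The bilinear pairing $(W,v) \mapsto Wv + b$ is a polynomial map in the entries of $(W, v, b)$, and polynomial maps are $C^\infty$, hence locally Lipschitz by the mean value inequality applied on any compact ball.

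For the induction, the base case is immediate since $g_0$ is a constant (and therefore trivially locally Lipschitz in the empty parameter set). For the inductive step, assume $g_{k-1}: \Theta_{1:k-1} \to \R^{n_{k-1}}$ is locally Lipschitz in $\theta_{1:k-1}$. Consider the map $\theta_{1:k} = (\theta_{1:k-1}, W_k, b_k) \mapsto (W_k, g_{k-1}(\theta_{1:k-1}), b_k)$. Each coordinate of this vector-valued map is locally Lipschitz (the first and third are projections, hence $1$-Lipschitz, and the second is locally Lipschitz by the inductive hypothesis), so the combined map is locally Lipschitz. Composing with the polynomial map $(W, v, b) \mapsto Wv + b$ gives a locally Lipschitz map from $\theta_{1:k}$ into $\R^{n_k}$. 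Finally, composing with the globally Lipschitz activation $\sigma_k$ yields $g_k$, which is therefore locally Lipschitz in $\theta_{1:k}$. Setting $k = K$ completes the induction.

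The final step is to apply Rademacher's theorem: every locally Lipschitz function from an open subset of Euclidean space into $\R^n$ is differentiable almost everywhere (with respect to Lebesgue measure). Since $\Theta$ is a Euclidean parameter space and $g_K$ is locally Lipschitz on it, $g_K$ is differentiable almost everywhere in $\theta$. I do not expect any serious obstacle here; the only point that requires a bit of care is the bilinear term $W_k \cdot g_{k-1}$, and this is handled cleanly by noting that on any compact neighborhood of a fixed $\theta_{1:k}$ the norms of $W_k$ and $g_{k-1}(\theta_{1:k-1})$ are uniformly bounded, so the bilinear map is Lipschitz with a constant depending on those bounds.
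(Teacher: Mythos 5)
Your proposal is correct and follows essentially the same route as the paper: induction on the layer index, handling the bilinear term $W_k \cdot g_{k-1}$ via uniform bounds on a compact neighborhood, and concluding with Rademacher's theorem. The only difference is presentational—you package the argument through general closure properties (composition of locally Lipschitz maps, polynomial maps are locally Lipschitz), whereas the paper carries out the explicit add-and-subtract estimate and tracks a recursive Lipschitz constant $L_k = M_k(\bar{n} L_{k-1} + \bar{g}_{k-1} + 1)$; the substance is identical.
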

\begin{proof}
We use induction on $k$. For the base case $k=0$, $g_0$ is constant and therefore (locally) Lipschitz continuous.

Now, fix some $\theta$ and suppose $g_{k-1}$ is locally Lipschitz continuous in a neighborhood $\Ncal(\theta)$ around $\theta$ with local Lipschitz constant $L_{k-1}$:
\[
    \forall \theta' \in \Ncal(\theta):\quad
    \norm{g_{k-1}(\theta') - g_{k-1}(\theta)} \leq L_{k-1} \norm{\theta' - \theta}.
\]
Define $\bar{n} := \sup_{\theta' \in \Ncal(\theta)} \norm{\theta'} < \infty$ and $\bar{g}_{k-1} := \|g_{k-1}(\theta)\| + L_{k-1} \diam(\Ncal(\theta))$ so that $\norm{g_{k-1}(\theta')} \leq \bar{g}_{k-1}$ for all $\theta' \in \Ncal(\theta)$. Let $M_k$ be the Lipschitz constant of $\sigma_k$. Then, for all $\theta, \theta'$:
\begin{align*}
    & \norm{g_k(\theta') - g_k(\theta)} \\
    &= \norm{\sigma_k(W_k' \cdot g_{k-1}(\theta') + b_k') - \sigma_k(W_k \cdot g_{k-1}(\theta) + b_k)} \\
    &\leq M_k \norm{W_k' \cdot g_{k-1}(\theta') + b_k' - (W_k \cdot g_{k-1}(\theta) + b_k)} \\
    &\leq M_k \left( \norm{W_k' \cdot g_{k-1}(\theta') - W_k \cdot g_{k-1}(\theta)} + \norm{b_k' - b_k} \right) \\
    &\leq M_k \left( \norm{W_k' \cdot g_{k-1}(\theta') - W_k \cdot g_{k-1}(\theta)} + \norm{\theta' - \theta} \right).
\end{align*}
We have
\begin{align*}
    & \norm{W_k' \cdot g_{k-1}(\theta') - W_k \cdot g_{k-1}(\theta)} \\
    &= \norm{W_k' \cdot g_{k-1}(\theta') - W_k' \cdot g_{k-1}(\theta) + W_k' \cdot g_{k-1}(\theta) - W_k \cdot g_{k-1}(\theta)} \\
    &\leq \norm{W_k' \cdot g_{k-1}(\theta') - W_k' \cdot g_{k-1}(\theta)} + \norm{W_k' \cdot g_{k-1}(\theta) - W_k \cdot g_{k-1}(\theta)} \\
    &\leq \norm{W_k'} \norm{g_{k-1}(\theta') - g_{k-1}(\theta)} + \norm{W_k' - W_k} \norm{g_{k-1}(\theta)} \\
    &\leq \bar{n} \cdot L_{k-1} \norm{\theta' - \theta} + \norm{\theta' - \theta} \bar{g}_{k-1}.
\end{align*}
Putting these results together yields
\[
    \norm{g_k(\theta') - g_k(\theta)}
    \leq M_k (\bar{n} L_{k-1} + \bar{g}_{k-1} + 1) \norm{\theta' - \theta}.
\]
Thus, $g_k$ is locally Lipschitz in $\theta$ on the neighborhood $\Ncal(\theta)$ with Lipschitz constant $L_k = M_k (\bar{n} L_{k-1} + \bar{g}_{k-1} + 1)$.

Hence, by induction, $g_K$ is locally Lipschitz in $\theta$. Finally, by Rademacher's Theorem \cite[Corollary 4.12]{bessis_partial_1999}, $g_K$ is differentiable a.e.
\end{proof}

\begin{lemma}
Let $W \subseteq \R^n$ be an open set, and let $h: W \to \R$ be a continuous function. If $g: W \times \R \to \R$ is defined by $g(w,b) := h(w) + b$, then for every $c \in \R$, the $c$-level set of $g$, $L_c := \{(w, b) \in W \times \R \mid g(w, b) = c\}$, has measure zero.
\end{lemma}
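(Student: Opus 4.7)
The strategy is to recognize $L_c$ as (essentially) the graph of a continuous function of $w$ and then invoke Fubini/Tonelli to conclude that the graph has Lebesgue measure zero in $\R^{n+1}$. Concretely, the key structural observation is that for each fixed $w \in W$, the defining equation $h(w) + b = c$ determines $b$ uniquely as $b = c - h(w)$. Thus the $w$-slice of $L_c$ is the single point $\{c - h(w)\}$, and $L_c$ is precisely the graph
\[
L_c = \{(w, c - h(w)) \mid w \in W\}.
\]

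The plan is to proceed as follows. First, I would check measurability: since $g$ is continuous on the open set $W \times \R$, the level set $L_c = g^{-1}(\{c\})$ is (relatively) closed in $W \times \R$, hence Borel measurable in $\R^{n+1}$. Second, I would apply Tonelli's theorem to the indicator $\mathbf{1}_{L_c}$, writing
\[
\mu_{n+1}(L_c) = \int_W \left( \int_{\R} \mathbf{1}[h(w) + b = c] \, db \right) dw.
\]
For each fixed $w$, the inner integrand is the indicator of the singleton $\{c - h(w)\} \subset \R$, which has one-dimensional Lebesgue measure zero; hence the inner integral vanishes identically in $w$, and the whole double integral is $0$.

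There is essentially no obstacle here beyond writing down measurability carefully so that Tonelli applies (continuity of $g$ makes this immediate). I would not need to invoke continuity of $h$ for anything more than ensuring $g$ is continuous so that $L_c$ is measurable; in fact the argument goes through verbatim for any Borel measurable $h$. Note that this lemma is exactly what is needed to discharge the ``level sets have measure zero'' clause of \Cref{as:L-piecewise-constant-differentiable}.4 whenever the threshold functions $g_{i,j}$ arise as the output of a network layer $h(w) + b$ with a bias term $b$, as remarked in the preceding discussion after the proof of \Cref{thm:lambda-gradient}.
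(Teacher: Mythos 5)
Your proposal is correct and matches the paper's proof essentially verbatim: both establish measurability of $L_c$ from continuity of $g$, then apply Tonelli's theorem and observe that each $w$-slice $\{b \mid h(w)+b=c\}$ is the singleton $\{c-h(w)\}$, which has one-dimensional measure zero. Your side remark that continuity of $h$ is only needed for measurability (and that Borel measurability would suffice) is accurate but does not change the argument.
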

\begin{proof}
First, we establish measurability of $L_c$. $g$ is measurable because it is continuous, and $L_c = g^{-1}(\{c\})$ is the preimage of the Borel measurable singleton set $\{c\}$. Therefore, $L_c$ is a measurable set, and its indicator function $\one_{L_c}$ is a measurable function.

For any $m \in \N$, let $\lambda_m$ denote the Lebesgue measure on $\R^m$. Then,
\begin{align*}
    \lambda_{n+1}(L_c)
    &= \int_{\R^{n+1}} \one_{L_c}(w,b) \diff\lambda_{n+1}(w,b) \\
    &= \int_{\R^n} \int_\R \one_{L_c}(w,b) \diff\lambda_1(b) \diff\lambda_n(w)
        & \text{Tonelli's theorem} \\
    &= \int_{\R^n} \lambda_1(\{b \in \R \mid g(w,b) = c\}) \diff\lambda_n \\
    &= \int_{\R^n} \lambda_1(\{c - h(w)\}) \diff\lambda_n
        & \{b \mid g(w,b) = c\} = \{c - h(w)\} \\
    &= \int_{\R^n} 0 \diff\lambda_n
    = 0
        & \text{singleton sets have 0 measure}
\end{align*}
\end{proof}

\subsection{Convexity of \texorpdfstring{$\tilde{h}$}{h̃}}

In this section, for ease of exposition, we write $\tilde{h}(\lambda, t)$ instead of $\tilde{h}_t(\lambda)$. The following proposition is used to show that \eqref{eq:optimal-lambda-t} is convex.

\begin{proposition}
\label{prop:tilde-h}
Under the assumptions of \Cref{thm:oce-risk-control} or \Cref{thm:conformal-cvar-control}, $\tilde{h}(\lambda, t)$ is nondecreasing in $\lambda$ and convex in $t$. Furthermore, if $\{L_i\}_{i=1}^N$ and $B$ are convex, then $\tilde{h}$ is jointly convex in $(\lambda, t)$.
\end{proposition}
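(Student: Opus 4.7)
The plan is to dispatch the three claims separately, leveraging standard convexity/composition rules together with the special structure of $\phi$.

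\textbf{Step 1: Nondecreasing in $\lambda$.} In the setting of \Cref{thm:oce-risk-control} (\Cref{as:crc-generic,as:nondecreasing}), each $L_i$ and $B$ is nondecreasing, and the disutility $\phi$ is nondecreasing by \Cref{def:oce}. Hence each $\tilde{L}_{i,t}(\lambda)=t+\phi(L_i(\lambda)-t)$ and $\tilde{B}_t(\lambda)=t+\phi(B(\lambda)-t)$ is a composition of nondecreasing functions in $\lambda$, and $\tilde{h}(\lambda,t)$ inherits this. In the setting of \Cref{thm:conformal-cvar-control} (with \Cref{as:monotonic} instead of \Cref{as:nondecreasing}), I would reuse the argument from the proof of \Cref{prop:conformal-oce-control-monotone}: for any component that is nonincreasing in $\lambda$, the bound $t\in[B(\lambda_{\min}),\alpha]$ gives $L_i(\lambda)-t\le L_i(\lambda_{\min})-t\le B(\lambda_{\min})-t\le 0$, so $\phi_{\CVaR^\delta}(L_i(\lambda)-t)=\frac{1}{1-\delta}[L_i(\lambda)-t]_+=0$ and that term is constant (hence nondecreasing) in $\lambda$; the same holds for $B$. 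The remaining nondecreasing components are handled as in the first case.

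\textbf{Step 2: Convexity in $t$.} Fix $\lambda$. Then $t\mapsto L_i(\lambda)-t$ is affine, and $\phi$ is convex by \Cref{def:oce}; by the standard composition rule, $t\mapsto\phi(L_i(\lambda)-t)$ is convex in $t$. Adding the affine term $t$ preserves convexity, so $t\mapsto\tilde{L}_{i,t}(\lambda)$ is convex in $t$; the same holds for $\tilde{B}_t(\lambda)$. The function $\tilde{h}(\lambda,\cdot)$ is a nonnegatively weighted finite sum of these convex functions, and is therefore convex in $t$.

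\textbf{Step 3: Joint convexity when $\{L_i\}_{i=1}^N$ and $B$ are convex in $\lambda$.} This is essentially a direct application of \Cref{lemma:L-tilde-convex}: that lemma shows that when $L$ is convex and $\phi$ is convex and nondecreasing, $(\lambda,t)\mapsto t+\phi(L(\lambda)-t)$ is jointly convex. Applying it to $L=L_i$ for each $i$ and to $L=B$ yields joint convexity of each $\tilde{L}_{i,t}(\lambda)$ and $\tilde{B}_t(\lambda)$. Nonnegative finite sums and scaling preserve joint convexity, so $\tilde{h}$ is jointly convex in $(\lambda,t)$.

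The only mildly delicate point is Step 1 under \Cref{as:monotonic}, since monotonically nonincreasing $L_i$ or $B$ could in principle destroy monotonicity of $\tilde{L}_{i,t}$; the key observation that rescues us is the hypothesized interval for $t$ from \Cref{thm:conformal-cvar-control}, which forces the relevant positive parts to vanish. Steps 2 and 3 are purely mechanical applications of convex composition rules and \Cref{lemma:L-tilde-convex}.
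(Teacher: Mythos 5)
Your proof is correct and follows essentially the same route as the paper's: monotone composition for the $\lambda$-monotonicity (citing the $[L_i(\lambda)-t]_+=0$ argument from the CVaR/monotone case), convex-composed-with-affine for convexity in $t$, and \Cref{lemma:L-tilde-convex} plus nonnegative sums for joint convexity. Your Step 2 via the standard composition rule is in fact slightly cleaner than the paper's explicit two-point computation, but the substance is identical.
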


\begin{proof}
First, we show that $\tilde{h}(\lambda, t)$ is nondecreasing in $\lambda$.
\begin{itemize}[leftmargin=2em,nosep]
    \item In the setting of \Cref{thm:oce-risk-control} (i.e., \Cref{as:crc-generic,as:nondecreasing}), the functions $\{B, L_1, \dotsc, L_N\}$ are all nondecreasing. Since $\phi$ is also nondecreasing, $\tilde{L}_{i,t}$ and $\tilde{B}_t$ are also nondecreasing. Thus, $\tilde{h}$ is nondecreasing in $\lambda$.
    \item In the setting of \Cref{thm:conformal-cvar-control} (i.e., \Cref{as:crc-generic,as:monotonic} with CVaR risk), we showed in \Cref{sec:conformal-cvar-control-proof} that $\tilde{L}_{i,t}$ and $\tilde{B}_t$ are nondecreasing. Thus, $\tilde{h}$ is nondecreasing in $\lambda$.
\end{itemize}

Second, we show that $\tilde{h}(\lambda, t)$ is convex in $t$. We start by showing that $\tilde{B}_t(\lambda)$ is convex in $t$. For any $a \in [0, 1]$,
\begin{align*}
    \tilde{B}_{at + (1-a)t}(\lambda)
    &= at + (1-a)t + \phi(B(\lambda) - at + (1-a)t) \\
    &= at + (1-a)t + \phi(a(B(\lambda) - t) + (1-a)(B(\lambda) - t)) \\
    &\leq at + (1-a)t + a\phi(B(\lambda) - t) + (1-a)\phi(B(\lambda) - t)
        & \text{$\phi$ is convex} \\
    &= a \tilde{B}_t(\lambda) + (1-a) \tilde{B}_t(\lambda).
\end{align*}
An identical argument shows that $\tilde{L}_{i,t}$ is convex in $t$. Since $\tilde{h}$ is the sum of functions that are convex in $t$, $\tilde{h}$ is convex in $t$.

If $\{L_i\}_{i=1}^N$ and $B$ are convex (in $\lambda$), then by \Cref{lemma:L-tilde-convex}, $\{\tilde{L}_{i,t}\}_{i=1}^N$ and $\tilde{B}_t$ are all convex in $(\lambda, t)$. Since $\tilde{h}$ is their mean, it is also convex in $(\lambda, t)$.
\end{proof}

\section{Experimental details}
\label{appendix:experiment-details}

\paragraph{Computational resources}

Our experiments were performed on two computers with the following hardware:
\begin{enumerate}[left=0.5em, nosep]
    \item 2$\times$ AMD EPYC 7513 32-Core CPUs, 4$\times$ NVIDIA A100 GPUs (80GiB GPU memory each), 1 TiB RAM
    \item Intel Core i9-12900KS CPU, 2$\times$ NVIDIA RTX A6000 GPUs (48GiB GPU memory each), 125 GiB RAM
\end{enumerate}

Our code (see supplementary ZIP file) is written in Python and primarily relies on the PyTorch \cite{paszke_pytorch_2019} deep learning library. We use the Adam optimizer \cite{Kingma2015}.

\subsection{Tumor image segmentation}

\paragraph{Datasets}
We used images from 4 public datasets (CVC-ClinicDB \cite{bernal_wm-dova_2015}, CVC-ColonDB \cite{bernal_towards_2012}, ETIS-LaribPolypDB \cite{silva_toward_2014}, Kvasir-SEG \cite{jha_kvasir-seg_2020}), yielding a total of 2,188 images with per-pixel binary mask labels. Following a similar setup to \cite{fan_pranet_2020}, we used 1,450 images from the CVC-ClinicDB and Kvasir-SEG datasets to form a training set; we used the same training split as \cite{fan_pranet_2020}. From the remaining 738 images, we created 10 different val (400 images) / test (338 images) splits from different random seeds.

Note that unlike \cite{fan_pranet_2020}, we did include the ``CVC-300'' dataset in our test set, as ``CVC-300'' is a duplicate of CVC-ColonDB.

\paragraph{Model fine-tuning}
We use a pretrained PraNet \cite{fan_pranet_2020} model, which features a Res2Net \cite{gao_res2net_2021} backbone and a U-Net-like \cite{ronneberger_u-net_2015} decoder. During model fine-tuning, we only update the weights of the decoder; we keep the Res2Net backbone weights frozen. We tune the learning rate with a grid search over ($10^{-2}$, $10^{-3}$, $10^{-4}$, $10^{-5}$, $10^{-6}$). Models are trained with a batch size of 400 for up to 100 epochs with early-stopping after 10 epochs of no improvement on the val set.

We compare two different forms of fine-tuning, as shown in \Cref{fig:polyps}.
\begin{itemize}[leftmargin=2em]
\item ``cross-entropy'' fine-tuning refers to using the same multi-scale binary cross-entropy loss that \cite{fan_pranet_2020} used to pretrain the PraNet model.

\item ``conformal risk training'' refers to using the gradient \eqref{eq:full-gradient}. Recall that \Cref{ex:tumor-fnr-e2ecrc} derives the exact expression $\deriv{\lambda(\theta)}{\theta} = \deriv{}{\theta} f_\theta(X_i)_j$ where $j$ is the index of some pixel in some image $i$ of the training minibatch. This gradient may have high variance because it is the gradient through the model's output on exactly a single pixel among an entire minibatch of images. In practice, we reduce the variance of the gradient estimator by averaging the model's gradient over other pixels with similar model outputs:
\[
    \deriv{\lambda(\theta)}{\theta} \approx \frac{1}{M} \sum_{(i,j) \in I} \deriv{}{\theta} f_\theta(X_i)_j
\]
for the $M$ indices $I = \{(i_k, j_k)\}_{k=1}^M$ whose values $f_\theta(X_{i_k})_{j_k}$ are closest to $\lambda(\theta)$. (We set $M$ dynamically to be 0.5\% of the number of positive pixels in each minibatch.) This approach is inspired by the variance reduction techniques introduced by \cite{hong_estimating_2009,noorani_conformal_2025}, which show reduced gradient estimation variance at the expense of possibly incurring some estimation bias. We leave it to future work to formally analyze bias-variance trade-off of this gradient estimator.
\end{itemize}

\paragraph{Computational efficiency}
Our conformal risk training procedure imposes negligible computational overhead compared to fine-tuning using the standard binary cross-entropy loss. For each minibatch of $M$ images with $d$ pixels each, the only overhead is incurred by the binary search procedure from \Cref{alg:conformal-oce-control}. In practice, we implement this by sorting the model outputs $\{f_\theta(X_i)_j\}_{i \in [M],\, j \in [d]}$, which requires $O(Md \log Md)$ time complexity and took on average <50 milliseconds of wall-clock time per minibatch.

\subsection{Battery storage operations}

\paragraph{Dataset}
We use the same dataset as \citet{donti_task-based_2017} in our battery storage problem. In this dataset, the target $y \in \R^{24}$ is the hourly PJM day-ahead system energy price for 2011-2016, for a total of 2189 days. However, whereas \citet{donti_task-based_2017} excluded any days whose electricity prices are too high (>\$500/MWh), our conformal risk training method is specifically designed to handle tail-risk, so we did not exclude these ``outliers.'' For predicting target for a given day, the inputs $x \in \R^{77}$ include the previous day's log-prices, the given day's hourly load forecast, the previous day's hourly temperature, and several calendar-based features such as whether the given day is a weekend or a US holiday.

Unlike \citet{donti_task-based_2017}, we did not include the given day's hourly temperature as input features, as such features are unavailable in practice (although temperature \textit{forecasts} may be available). To make the problem instance more challenging, we also added i.i.d. $\Gaussian(0, \sqrt{20})$ noise to the price targets.

We take a random 20\% subset of the dataset as the test set; because the test set is selected randomly, it is considered exchangeable with the rest of the dataset. For each of 10 seeds, we further use a 65\%/35\% random split of the remaining data for training and calibration.

\paragraph{Decision constraints}
We note that we use a different, but equivalent, parameterization of the battery state-of-charge constraints. \citet{donti_task-based_2017} use the task loss
\begin{align*}
    f(y,z) = &\sum_{t=1}^T y_t (\zin - \zout)_t + \epsilon^\text{flex} \norm{\zstate - \frac{C}{2} \one}^2
    + \epsilon^\text{ramp} \left( \norm{\zin}^2 + \norm{\zout}^2 \right)
\end{align*}
with constraints, for all $t=1,\dotsc,T$:
\begin{align*}
    & \zstate_0 = C/2,\quad
    \zstate_t = \zstate_{t-1} - \zout_t + \gamma \zin_t, \\
    & 0 \leq \zin \leq \cin,\quad
    0 \leq \zout \leq \cout,\quad
    0 \leq \zstate_t \leq C.
\end{align*}
Instead of using a $z^\text{state}$ variable to represent state of charge, we equivalently express the task loss and constraints in terms of a variable $z^\text{net} \in \R^T$ representing the difference of the state of charge from the starting point. Letting $z = (\zin, \zout, z^\text{net})$ denote the decision vector, we have
\[
    f(y,z) = y^\top (\zin - \zout) + \epsilon^\text{flex} \norm{z^\text{net}}^2 + \epsilon^\text{ramp} \left( \norm{\zin}^2 + \norm{\zout}^2 \right)
\]
with constraint set
\[
    \Zcal = \set{
        (\zin, \zout, z^\text{net}) \in \R^{3T}
        \ \middle| \
        \begin{aligned}
        & \forall t \in [T]:\ z^\text{net}_t := \sum_{\tau=1}^t \gamma \zin_\tau - \zout_\tau, \\
        & 0 \leq \zin \leq \cin,\quad
        0 \leq \zout \leq \cout,\quad
        -C/2 \leq z^\text{net} \leq C/2
        \end{aligned}
    }.
\]
This reparameterization is necessary for the decision variable $z$ to satisfy our ``decision-scaling'' requirement: if $z \in \Zcal$, then $\forall\lambda \in [0,1]$, $\lambda z \in \Zcal$.

\paragraph{Model and fine-tuning}
Our price forecasting model $\hat{y}_\theta$ is a fully-connected multilayer perceptron (MLP) with 3 hidden layers, each with 256 units, LeakyReLU activation, and batch normalization \cite{ioffe_batch_2015} layers. We use a batch size of 400.

We pretrain the MLP to predict prices using a mean squared error (MSE) loss. During pretraining, we tune the learning rate across ($10^{-4}$, $10^{-3.5}$, $10^{-3}$, $10^{-2.5}$, $10^{-2}$, $10^{-1.5}$), and we use L2 weight deecay strength of $10^{-4}$. We pretrain for up to 500 epochs with early-stopping after 20 epochs of no improvement on the val set.

During fine-tuning, we tune the learning rate with a grid search over ($10^{-2}$, $10^{-3}$, $10^{-4}$, $10^{-5}$) and we keep the $10^{-4}$ L2 weight decay. We fine-tune for up to 100 epochs with early-stopping after 10 epochs of no improvement on the val set. During fine-tuning, we use a 90\%/10\% weighted combination of the fine-tuning loss and the MSE loss.

We compare two different forms of fine-tuning, as shown in \Cref{fig:storage}.
\begin{itemize}[leftmargin=2em]
\item ``task loss'' refers to fine-tuning the model $\hat{y}_\theta$ using the decision-focused learning task loss from \cite{donti_task-based_2017}. As explained in \Cref{sec:experiments_battery}, we define the decision for an input $x$ as
\[
    \hat{z}_\theta(x) \coloneqq \argmin_{z \in \Zcal} f(\hat{y}_\theta(x), z),
\]
and the task loss incurred is $f(y, \hat{z}_\theta(x))$. Thus, ``task loss'' fine-tuning refers to using $f(y, \hat{z}_\theta(x))$ as the training loss function.

\item ``conformal risk training'' refers to using the gradient \eqref{eq:full-gradient}. In this battery storage problem, we seek to control the CVaR tail risk of the financial term $L(\theta,\lambda) = \lambda (\hat{z}_\theta^\text{in}(X) - \hat{z}_\theta^\text{out}(X))^\top Y$ while minimizing the task loss $\ell(\theta, \lambda) = f(Y, \lambda \hat{z}_\theta(X))$.  We seek to solve
\[
    \min_{\theta \in \Theta,\, \lambda \in \Lambda} \E_{(X,Y) \sim \Pcal}[ f(Y, \lambda \hat{z}_\theta(X)) ]
    \quad\text{s.t. } \CVaR_{(X,Y) \sim \Pcal}^\delta[ \lambda (\hat{z}_\theta^\text{in}(X) - \hat{z}_\theta^\text{out}(X))^\top Y ] \leq \alpha.
\]
During each minibatch of training, we pick the $t$ that allows for the largest $\lambda$. In other words, we pick $\lambda(\theta)$ by solving the convex optimization problem
\begin{equation}\label{eq:optimal-lambda-t}
    \lambda(\theta) = \argmax_{\lambda \in \Lambda}\, \max_{t \in \R}  \lambda
    \quad\text{s.t. } \tilde{h}_t(\theta, \lambda) \leq \alpha,\ B(\lambda_{\min}) \leq t \leq \alpha.
\end{equation}
This problem is convex because $\tilde{h}_t$ is jointly convex in $(\lambda, t)$ by \Cref{prop:tilde-h}. The gradient term $\deriv{\lambda(\theta)}{\theta}$ is then given by \Cref{thm:lambda-gradient}(ii) with the $\phi_{\CVaR^\delta}$ disutility function.
\end{itemize}

\paragraph{Computational efficiency}
Our conformal risk training procedure imposes negligible computational overhead compared to fine-tuning using the task loss. For each minibatch, the main overhead incurred is from solving and differentiating through \eqref{eq:optimal-lambda-t}, which we found in practice to require on average <50 milliseconds of wall-clock time per minibatch using \texttt{cvxpylayers} \cite{agrawal_differentiable_2019}.

\section{Conformal risk training subsumes conformal training}
\label{appendix:cp-inefficiency}

In this section, we illustrate how our method, conformal risk training, includes conformal training (ConfTr) from \citet{stutz_learning_2022} as a special case.

Consider the multi-class classification problem with classes $\Ycal = \{1, \dotsc, K\}$. For a nonconformity score function $s_\theta: \Xcal \times \Ycal \to \R$ parameterized by $\theta$, conformal prediction \cite{vovk_algorithmic_2005,shafer_tutorial_2008,angelopoulos_conformal_2023} forms a prediction set
\[
    C_\theta(X; \lambda) = \{k \in \Ycal \mid s_\theta(X, k) \leq 1-\lambda\}.
\]
The overall goal posed by \citet{stutz_learning_2022} is to optimize inefficiency (\textit{i.e.}, minimize the average size of the conformal prediction set) subject to a minimum coverage rate. The minimum coverage constraint can be expressed using the loss function $L(\theta, \lambda) := \one[s_\theta(X, Y) > 1-\lambda]$, which is left-continuous and nondecreasing in $\lambda$. Since $\E[L(\theta, \lambda)] = \Pr(s_\theta(X, Y) > 1-\lambda) = \Pr(Y \not\in C_\theta(X; \lambda))$, the constraint $\E[L(\theta, \lambda)] \leq \alpha$ is equivalent to $\Pr(Y \in C_\theta(X; \lambda)) \geq 1-\alpha$.

To optimize inefficiency, \citet{stutz_learning_2022} proposes using cost function that is the sum of a soft assignment of classes:
\[
    \ell(\theta, \lambda)
    := \max\left(0,\ \sum_{k=1}^K \sigma\left( \frac{(1-\lambda) - s_\theta(X,k)}{T}\right) - 1\right),
\]
where $T$ is a temperature hyperparameter. Thus, this problem is an instance of the end-to-end optimal risk control problem
\[
    \min_{\theta \in \Theta,\, \lambda \in \Lambda}\ \E[\ell(\theta, \lambda)]
    \quad\text{s.t. }
    \E[L(\theta,\lambda)] \leq \alpha.
\]

Given a dataset $D = \{(X_i, Y_i)\}_{i=1}^N$ drawn exchangeably from $\Pcal$, let $L_i(\theta, \lambda) := \one[s_\theta(X_i, Y_i) > 1-\lambda]$. Clearly, every $L_i$ is bounded by $B = 1$. Following the conformal risk control procedure, and noting that smaller values of $\lambda$ yield lower objective values, we may pick
\begin{equation}
\label{eq:lambda-conformal-prediction}
\begin{aligned}
    \lambda(\theta)
    &:= \sup\set{ \lambda \in \R \ \middle|\ \frac{1}{N+1} \left( 1 + \sum_{i=1}^N \one[s_\theta(X_i, Y_i) > 1-\lambda] \right) \leq \alpha } \\
    &= \sup\set{ \lambda \in \R \ \middle|\ \frac{1}{N+1} \left( 1 + \sum_{i=1}^N (1 - \one[s_\theta(X_i, Y_i) \leq 1-\lambda]) \right) \leq \alpha } \\
    &= \sup\set{ \lambda \in \R \ \middle|\ 1 - \frac{1}{N+1} \sum_{i=1}^N \one[s_\theta(X_i, Y_i) \leq 1-\lambda] \leq \alpha } \\
    &= \sup\set{ \lambda \in \R \ \middle|\ \sum_{i=1}^N \one[s_\theta(X_i, Y_i) \leq 1-\lambda] \geq (N+1)(1-\alpha) } \\
    &= 1 - s_{\theta, (\ceil{(N+1)(1-\alpha)})}
\end{aligned}
\end{equation}
where $s_{\theta, (j)}$ denotes the $j$-th smallest value of the set $S := \{s_\theta(X_i, Y_i)\}_{i=1}^N \cup \{\infty\}$. By \Cref{prop:crc-generic}, $\lambda(\theta)$ satisfies $\Pr(Y \in C_\theta(X; \lambda(\theta))) \geq 1-\alpha$.

We note that whereas \citet{stutz_learning_2022} approximates $\deriv{\lambda(\theta)}{\theta}$ using differentiable approximate ranking and sorting operations, the gradient can be computed exactly \cite{hong_estimating_2009,yeh_end--end_2024,noorani_conformal_2025}. That is, if $s_\theta(X_i, Y_i)$ is differentiable w.r.t. $\theta$ for all $i=1,\dotsc,N$ and if $s_{\theta, (\ceil{(N+1)(1-\alpha)})}$ is unique amongst the calibration scores, then by \cite[Theorem 2]{yeh_end--end_2024}
\[
    \deriv{\lambda(\theta)}{\theta} = \begin{cases}
        -\deriv{}{\theta} s_\theta(X_{\sigma(k)}, Y_{\sigma(k)}), & \text{if } \alpha \geq \frac{1}{N+1} \\
        0, & \text{otherwise}
    \end{cases}
\]
where $k = \ceil{(N+1)(1-\alpha)}$ and $\sigma: [N] \to [N]$ denotes the permutation that sorts the scores in ascending order, such that $s_\theta(X_{\sigma(i)}, Y_{\sigma(i)}) \leq s_\theta(X_{\sigma(j)}, Y_{\sigma(j)})$ for all $i < j$.

\end{document}